\documentclass{article}


\usepackage[preprint]{neurips_2025}



\usepackage[utf8]{inputenc} 
\usepackage[T1]{fontenc}    
\usepackage{hyperref}       
\usepackage{url}            
\usepackage{booktabs}       

\usepackage{nicefrac}       
\usepackage{microtype}      
\usepackage{amsmath, amssymb, amsfonts} 
\usepackage[table]{xcolor} 
\usepackage{tcolorbox}
\definecolor{mdlblue}{RGB}{0,102,204}
\definecolor{mdlgreen}{RGB}{0,153,51}
\definecolor{mdlred}{RGB}{204,0,0}
\usepackage{amsmath,amssymb,amsthm,amsfonts,enumitem}
\usepackage{graphicx}
\usepackage{animate}
\usepackage{tikz}
\usepackage{pgfplots}
\usepackage{multirow}
\pgfplotsset{compat=newest}

\usepackage{pgfplotstable}     
\usepackage{amsmath}           
\usepackage{xcolor}            

\usepackage{tcolorbox}         
\newtheorem{theorem}{Theorem}
\newtheorem{definition}{Definition}
\newtheorem{lemma}{Lemma}
\newtheorem{remark}{Remark}

\newtheorem{corollary}{Corollary}
\newcommand{\R}{\mathbb{R}}
\newcommand{\norm}[1]{\left\lVert #1 \right\rVert}
\newcommand{\grad}{\nabla}
\newcommand{\ip}[2]{\left\langle #1,\, #2 \right\rangle}  
\bibliographystyle{abbrv}

\title{Bridging Predictive Coding and MDL: A Two-Part Code Framework for Deep Learning}

%

\author{%
  Benjamin Prada \\
  Bellini College of AI \\
  Cybersecurity and Computing \\
  University of South Florida \\ 
  Tampa, FL 33617 \\
  \texttt{bprada@usf.edu} \\
  \And
  Shion Matsumoto \\
  Bellini College of AI \\
  Cybersecurity and Computing \\
  University of South Florida \\
  Tampa, FL 33617 \\
  \texttt{matsumoto@usf.edu} \\
  \AND
  Abdul Malik Zekri\\
  Bellini College of AI \\
  Cybersecurity and Computing \\
  University of South Florida \\
  Tampa, FL 33617 \\
  \texttt{zekri2@usf.edu} \\
  \And
  Ankur Mali \\
  Bellini College of AI \\
  Cybersecurity and Computing \\
  University of South Florida \\
  Tampa, FL 33617 \\
  \texttt{ankurarjunmali@usf.edu} \\
}

\begin{document}

\maketitle
\begin{abstract}
We present the first theoretical framework that connects predictive coding (PC), a biologically inspired local learning rule, with the minimum description length (MDL) principle in deep networks. We prove that layerwise PC performs block-coordinate descent on the MDL two-part code objective, thereby jointly minimizing empirical risk and model complexity. Using Bernstein’s inequality and a prefix-code prior, we derive a novel generalization bound of the form \( R(\theta) \le \hat{R}(\theta) + \frac{L(\theta)}{N} \), capturing the tradeoff between fit and compression. We further prove that each PC sweep monotonically decreases the empirical two-part codelength, yielding tighter high-probability risk bounds than unconstrained gradient descent. Finally, we show that repeated PC updates converge to a block-coordinate stationary point, providing an approximate MDL-optimal solution. To our knowledge, this is the first result offering formal generalization and convergence guarantees for PC-trained deep models, positioning PC as a theoretically grounded and biologically plausible alternative to backpropagation.
\end{abstract}

\section{Introduction}

Over the past decade, deep learning has achieved remarkable empirical success across a wide array of applications \cite{krizhevsky2012imagenet, he2016deep, vaswani2017attention, devlin2019bert, hinton2012deep, radford2021learning, openai2023gpt4}. Yet, a foundational question remains unresolved: why do highly overparameterized neural networks generalize well, despite lacking explicit regularization or capacity control? While backpropagation (BP) remains the dominant training algorithm, it optimizes purely for loss minimization and offers no direct mechanism for balancing model complexity with data fit. This has motivated the exploration of alternative learning frameworks that offer both theoretical insight and practical efficiency. One principled approach to understanding generalization is rooted in information theory. The \textit{Minimum Description Length} (MDL) principle formalizes model selection as a coding problem: the best model is the one that minimizes the total length of a two-part code, with one part encoding the model and the other encoding the data given the model. Though MDL provides a compelling lens on generalization, it is rarely used to guide or analyze the training dynamics of modern deep networks—particularly those trained via biologically inspired algorithms.

In parallel, \textit{predictive coding} (PC) has re-emerged as a compelling alternative to BP. Originally proposed as a model of hierarchical processing in the neocortex \cite{rao1999predictive, friston2009predictive}, PC minimizes a layerwise energy functional by iteratively reducing prediction errors through local feedback and feedforward interactions \cite{salvatori2023brain}. Its simple, biologically plausible dynamics have enabled effective applications in image classification and generation \cite{ororbia2022neural, pinchetti2024benchmarking}, continual learning \cite{ororbia2020continual}, associative memory \cite{salvatori2021associative, tang2022recurrent}, and reinforcement learning \cite{rao2023active, ororbia2022active}. PC has also been shown to scale effectively to deep networks and neuromorphic hardware implementations \cite{kendall2020training, ororbia2019spiking}.

\textbf{From Dynamics to Generalization: An Open Gap.} 
Recent theoretical work has clarified many aspects of PC's learning dynamics, particularly in relation to BP. Milledge et al.\ \cite{millidge2023a} show that PC networks trained via prospective configuration converge to BP-critical points and approximate target propagation under certain inference equilibria. Mali et al.\ \cite{mali2024tight} approach PC through dynamical systems theory, proving Lyapunov stability, robustness to perturbations, and quasi-Newton behavior via higher-order curvature analysis. Other works have shown how PC approximates BP under assumptions like weight symmetry or linearization \cite{alonso2022theoretical, song2022inferring, Song2020, salvatori2022reverse, millidge2021predictive, millidge2022predictive}. While these studies provide valuable insight into the stability and efficiency of PC, they do not offer \textit{independent generalization guarantees}—that is, guarantees that do not rely on BP approximation. Moreover, none of these works establishes a connection between PC’s energy-based updates and formal learning-theoretic principles such as MDL or PAC-Bayes. Developing such guarantees is critical for explaining why PC-based models have recently demonstrated strong performance in few-shot reconstruction \cite{ ororbia2022convolutional, ororbia2020continual}, adversarial robustness \cite{salvatori2021associative}, and generalization under limited supervision \cite{ ororbia2022lifelong, salvatori2022learning}.

\textbf{In this paper, we close this gap by providing the first theoretical connection between predictive coding and the MDL principle.} Rather than treating PC as a proxy for BP, we reinterpret it as a standalone learning algorithm that implicitly minimizes a two-part empirical codelength objective. This reframing provides a new lens on PC’s ability to regularize model complexity and improve generalization, independent of its similarity to BP.

\noindent\textbf{Summary of Contributions.} The following key results provide the first principled MDL-based justification for PC:

\begin{tcolorbox}[
    colback   = blue!4!white,
    colframe  = mdlblue,
    title     = \textbf{Core Contributions (Summary)},
    fonttitle = \bfseries]
\begin{itemize}[leftmargin=*]

\item[\textcolor{mdlblue}{$\star$}]
\textbf{Universal Occam--style Bound (Algorithm–Agnostic).}
For every parameter realization $\theta$, sample size $N$, and confidence
$\delta\!\in\!(0,1)$ we have
\[
  \boxed{\textcolor{mdlblue}{
    R(\theta)\;\le\;
    \hat R(\theta)\;+\;\frac{L(\theta)}{N}\;+\;\frac{\ln(1/\delta)}{N}}}
  \quad\text{with}\quad
  L(\theta)= -\log p(\theta).
\]
This inequality holds \emph{independently} of whether the learner uses
BP, PC, or any other training rule.

\item[\textcolor{mdlgreen}{$\star$}]
\textbf{PC Minimizes Empirical Codelength:} Each PC sweep performs blockwise descent on the total description length:
\[
  \boxed{\textcolor{mdlgreen}{\hat{C}(\theta) = \hat{R}(\theta) + \frac{1}{N}L(\theta)}}
\]

\item[\textcolor{mdlred}{$\star$}]
\textbf{Convergence Guarantee:} We prove that repeated PC updates yield a bounded, monotonically decreasing sequence of codelengths that converges (up to subsequences) to a blockwise stationary point of \( \hat{C}(\theta) \). This provides the first convergence result for PC framed in terms of information-theoretic optimality.
\end{itemize}
\end{tcolorbox}

These results position PC not merely as a biologically inspired approximation to BP, but as a distinct, theoretically grounded learning algorithm—one that achieves generalization by minimizing description length. Our work bridges two historically disconnected perspectives—energy-based local learning and MDL-based generalization—and opens new directions for designing efficient, compression-driven learning systems beyond gradient BP.

\section{Background and Motivation}

Denote a model by $\theta \in \Theta$ (e.g., the parameters of a neural network).

\noindent\textbf{Minimum Description Length (MDL).}
The MDL principle \cite{Rissanen1978} posits that the best model is the one that most compresses the data. This is formalized as:
\[
C(D) = \min_{\theta \in \Theta} \left[ C(D \mid \theta) + C(\theta) \right],
\]
where \( C(D \mid \theta) \) is the data fit under the model and \( C(\theta) \) is the model complexity.

\vspace{0.8em}
\noindent\textbf{Predictive Coding Networks (PCNs).}
PC \cite{rao1999predictive, friston2009predictive} models cortical computation as hierarchical error minimization. PCNs consist of layers that predict the activity of the next and refine internal states through local feedback.

For nonlinear activations \( f \), PC minimizes the energy:
\[
\mathcal{F}(\theta, x) = \sum_{l=1}^L \left\| x_l - f(\theta_l x_{l-1}) \right\|^2,
\]
with \( x_0 \) clamped to the input and \( x_L \) to the label in supervised tasks.

\begin{tcolorbox}[colback=red!3, colframe=red!60!black, title=Predictive Coding Updates, boxrule=0.5pt]
\[
\Delta x_l = -\epsilon_l + \theta_{l+1}^\top (\epsilon_{l+1} \odot f'(\theta_{l+1} x_l))
\quad\text{(Inference)}
\]
\[
\Delta \theta_l = -\eta (\epsilon_l \odot f'(\theta_l x_{l-1})) x_{l-1}^\top
\quad\text{(Learning)}
\]
\end{tcolorbox}

Here, \( \epsilon_l = x_l - f(\theta_l x_{l-1}) \) denotes the local prediction error, and \( \eta \) is the learning rate. These updates are layer-local and biologically plausible, depending only on pre-/post-synaptic activity and adjacent errors.

\vspace{0.8em}
\noindent\textbf{Motivation.}
While much prior work has viewed PC as an approximation to BP, we take a different perspective: PC is a principled learning algorithm that implicitly minimizes a two-part codelength. We show that each PC sweep performs block-coordinate descent on an MDL objective of the form:
\[
\hat{C}(\theta) = \underbrace{\hat{R}(\theta)}_{\text{data fit}} + \underbrace{\frac{1}{N} L(\theta)}_{\text{complexity}}.
\]
This insight enables us to derive generalization guarantees and convergence results, establishing PC as a compression-driven alternative to BP with provable learning-theoretic foundations.

\section{Theoretical Foundation: Generalization via Codelength Bounds}

To understand how PCNs generalize, we begin by revisiting foundational results from statistical learning theory that relate empirical performance to population-level risk. Specifically, we ground our analysis in an information-theoretic framework in which model complexity is encoded via a prefix code prior over parameters. This naturally leads to high-probability generalization guarantees based on the total codelength used to represent a hypothesis.

We consider a deep neural network with $L$ layers and parameters \( \theta = (\theta_1, \dots, \theta_L) \), endowed with a \emph{factorized prior}:
\[
p(\theta) = \prod_{l=1}^L p_l(\theta_l),
\quad
L(\theta) = -\log p(\theta) = \sum_{l=1}^L -\log p_l(\theta_l).
\]
Let \( D = \{x^{(i)}\}_{i=1}^N \sim \mathcal{D}^N \) be an i.i.d.\ dataset drawn from a distribution \( \mathcal{D} \) over input space \( \mathcal{X} \). We assume a generative, layerwise likelihood of the form:
\[
P_\theta(D) = \prod_{i=1}^N \prod_{l=1}^L P_{\theta_l}(x_l^{(i)} \mid x_{<l}^{(i)}),
\]
inducing a per-sample negative log-likelihood loss:
\[
\ell(\theta; x) = -\log P_\theta(x),
\quad \text{with} \quad 0 \leq \ell(\theta; x) \leq 1.
\]

We define both the expected (true) risk and the empirical risk over the dataset:

\begin{definition}[True and Empirical Risk]
\[
R(\theta) = \mathbb{E}_{x\sim\mathcal{D}}[\ell(\theta;x)], 
\quad
\hat R(\theta) = \frac1N\sum_{i=1}^N \ell(\theta; x^{(i)}).
\]
\end{definition}

The gap between $R(\theta)$ and $\hat R(\theta)$ captures the generalization error of the model. A central objective in learning theory is to bound this difference uniformly across $\theta \in \Theta$ with high probability. A classical tool to control deviations of averages of bounded i.i.d. random variables is Bernstein’s inequality:

\begin{lemma}[Bernstein’s Inequality \cite{bernstein1924modification}]
Let $Z_1,\dots,Z_N$ be i.i.d.\ random variables and $|Z_i - \mathbb{E}[Z_i]|\le M$ almost surely for all $i$. Then for any $\epsilon > 0$,
$$
\Pr\left(\left|\frac1N\sum_{i=1}^N (Z_i - \mathbb{E}[Z])\right| \ge \epsilon\right)
\le
2\exp\left(-\frac{N\epsilon^2}{2\mathbb{V}[Z] + \frac{2}{3} M \epsilon}\right).
$$
\end{lemma}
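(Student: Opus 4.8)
The plan is to prove Bernstein's inequality by the standard exponential-moment (Chernoff) method, where the characteristic sharpening over a crude Hoeffding-type bound comes entirely from a tight estimate of the moment generating function (MGF) of a bounded, centered variable. First I would center the variables, setting $Y_i = Z_i - \mathbb{E}[Z_i]$, so that $\mathbb{E}[Y_i] = 0$, $|Y_i| \le M$ almost surely, and $\mathbb{V}[Y_i] = \mathbb{V}[Z]$. The two-sided event splits as a union of the upper and lower tails, so it suffices to bound $\Pr\!\left(\tfrac{1}{N}\sum_i Y_i \ge \epsilon\right)$; applying the identical argument to the variables $-Y_i$ and taking a union bound then recovers the leading factor $2$ in the statement.

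For the upper tail I would invoke the exponential Markov inequality: for every $\lambda > 0$, $\Pr\!\left(\sum_i Y_i \ge N\epsilon\right) \le e^{-\lambda N\epsilon}\,\mathbb{E}\!\left[e^{\lambda\sum_i Y_i}\right]$, and by independence and identical distribution the MGF factorizes as $\mathbb{E}\!\left[e^{\lambda\sum_i Y_i}\right] = \left(\mathbb{E}[e^{\lambda Y_1}]\right)^{N}$. The crux is controlling the single-variable MGF. Expanding the exponential as a power series and using $\mathbb{E}[Y_1] = 0$ gives $\mathbb{E}[e^{\lambda Y_1}] = 1 + \sum_{k\ge 2}\frac{\lambda^k\mathbb{E}[Y_1^k]}{k!}$. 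The key moment estimate is $\mathbb{E}[Y_1^k] \le \mathbb{V}[Z]\,M^{k-2}$ for $k\ge 2$, which follows from $|Y_1|^k \le M^{k-2}Y_1^2$. Combined with the elementary factorial bound $k! \ge 2\cdot 3^{\,k-2}$, summing the resulting geometric series over $k\ge 2$ yields, for all $0 < \lambda < 3/M$,
\[
  \mathbb{E}[e^{\lambda Y_1}] \;\le\; 1 + \frac{\mathbb{V}[Z]\,\lambda^2/2}{1 - M\lambda/3}
  \;\le\; \exp\!\left(\frac{\mathbb{V}[Z]\,\lambda^2/2}{1 - M\lambda/3}\right),
\]
where the final step uses $1 + x \le e^x$. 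This is exactly where the $\tfrac{2}{3}M\epsilon$ correction in the denominator of the target bound is born.

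Finally I would assemble the pieces into $\Pr\!\left(\sum_i Y_i \ge N\epsilon\right) \le \exp\!\left(-\lambda N\epsilon + \frac{N\mathbb{V}[Z]\lambda^2/2}{1 - M\lambda/3}\right)$ and optimize the free parameter. The convenient admissible choice $\lambda = \frac{\epsilon}{\mathbb{V}[Z] + M\epsilon/3}$ satisfies $\lambda < 3/M$, and substituting it (using the identity $\lambda(\mathbb{V}[Z] + M\epsilon/3) = \epsilon$) collapses the exponent cleanly to $-\frac{N\epsilon^2}{2\mathbb{V}[Z] + \tfrac{2}{3}M\epsilon}$. Doubling for the two-sided tail then gives the stated inequality.

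I expect the main obstacle to be the MGF bound of the second step: establishing the moment estimate $\mathbb{E}[Y^k] \le \mathbb{V}[Z]\,M^{k-2}$ and then summing the tail series via $k! \ge 2\cdot 3^{\,k-2}$ so as to produce precisely the $1 - M\lambda/3$ denominator rather than a looser constant. Everything downstream—the Chernoff factorization and the $\lambda$-optimization—is routine once this bound is in place, though the optimization must use the explicit minimizer to recover the exact constants appearing in the lemma.
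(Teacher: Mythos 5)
Your proof is correct, and it is the standard Chernoff/MGF derivation of Bernstein's inequality: the moment estimate $\mathbb{E}[|Y|^k]\le \mathbb{V}[Z]\,M^{k-2}$, the factorial bound $k!\ge 2\cdot 3^{k-2}$, the resulting geometric-series MGF bound $\mathbb{E}[e^{\lambda Y}]\le \exp\bigl(\tfrac{\mathbb{V}[Z]\lambda^2/2}{1-M\lambda/3}\bigr)$ for $0<\lambda<3/M$, and the choice $\lambda=\epsilon/(\mathbb{V}[Z]+M\epsilon/3)$ all check out, and the exponent collapses exactly to $-\tfrac{N\epsilon^2}{2\mathbb{V}[Z]+\frac{2}{3}M\epsilon}$ with the factor $2$ from the two-sided union bound. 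Note that the paper itself offers no proof of this lemma---it is stated with only a citation to Bernstein (1924)---so there is no in-paper argument to compare against; your write-up would in fact serve as a self-contained justification. The only point worth tidying is the degenerate case $\mathbb{V}[Z]=0$, where your choice gives $\lambda=3/M$ on the boundary of admissibility; there $Y_i=0$ almost surely and the tail probability is zero for any $\epsilon>0$, so the inequality holds trivially and the main argument can assume $\mathbb{V}[Z]>0$.
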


To extend this bound uniformly over all models $\theta \in \Theta$, we apply a union bound, weighting each hypothesis by its prior probability $p(\theta)$. This leads to a risk-dependent confidence margin tied directly to the code-length of the model, $L(\theta) = -\log p(\theta)$.

\begin{lemma}[Uniform Concentration via Union Bound] \label{lem:uniform}
Let $\Theta$ be a countable hypothesis class and let $p(\theta)$ be a prefix-code prior with $L(\theta) = -\log p(\theta)$. Assume that $|\ell(\theta;x) - \mathbb{E}[\ell(\theta;x)]|\le M$ almost surely. Then for any $\delta \in (0,1)$, with probability at least $1 - \delta$ over an i.i.d.\ draw of $D = \{x^{(i)}\}_{i=1}^N$, we have:
\[
\forall \theta \in \Theta:\quad
|R(\theta) - \hat R(\theta)|
\;\le\;
\frac{2M}{3N}\left(L(\theta) + \ln\left(\frac2\delta\right)\right).
\]
\end{lemma}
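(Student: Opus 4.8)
The plan is to combine the per-hypothesis tail control from Bernstein's inequality (Lemma 1) with a \emph{weighted} union bound over the countable class $\Theta$, where the weights are supplied by the prefix-code prior $p(\theta)$. First I would fix a single $\theta \in \Theta$ and instantiate Bernstein with $Z_i = \ell(\theta; x^{(i)})$, so that $\frac{1}{N}\sum_i Z_i = \hat R(\theta)$ and $\mathbb{E}[Z_i] = R(\theta)$; the hypothesis $|\ell(\theta;x) - \mathbb{E}[\ell(\theta;x)]| \le M$ is exactly the boundedness the lemma requires. This yields, for each fixed $\theta$ and each $\epsilon > 0$,
\[
\Pr\!\left(|R(\theta) - \hat R(\theta)| \ge \epsilon\right) \le 2\exp\!\left(-\frac{N\epsilon^2}{2\mathbb{V}[\ell(\theta;x)] + \tfrac{2}{3}M\epsilon}\right).
\]

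Next I would spread the total failure probability $\delta$ across hypotheses according to their prior mass, assigning $\theta$ the budget $\delta_\theta = \delta\, p(\theta)$. Because $p$ is a prefix-code prior, Kraft's inequality gives $\sum_{\theta \in \Theta} p(\theta) \le 1$, hence $\sum_{\theta} \delta_\theta \le \delta$. For each $\theta$ I would then choose the deviation $\epsilon_\theta$ that drives the right-hand side of the Bernstein bound down to $\delta_\theta$; since $\ln(2/\delta_\theta) = L(\theta) + \ln(2/\delta)$ by the definition $L(\theta) = -\log p(\theta)$, inverting the exponential produces a threshold of the form $\epsilon_\theta = \frac{2M}{3N}\bigl(L(\theta) + \ln(2/\delta)\bigr)$. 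A union bound over the countable class then shows that the event $\{\exists \theta : |R(\theta) - \hat R(\theta)| > \epsilon_\theta\}$ has probability at most $\sum_\theta \delta_\theta \le \delta$, so that on the complementary event—of probability at least $1-\delta$—the stated inequality holds simultaneously for all $\theta \in \Theta$.

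The step I expect to be the main obstacle is the inversion of Bernstein into the clean, codelength-linear threshold $\frac{2M}{3N}(\cdots)$. The quadratic-in-$\epsilon$ numerator competes with the mixed denominator $2\mathbb{V} + \tfrac{2}{3}M\epsilon$, and the linear dependence on $L(\theta)$ in the final bound is precisely the \emph{large-deviation} (sub-exponential) regime in which the $\tfrac{2}{3}M\epsilon$ term dominates the variance term $2\mathbb{V}[\ell(\theta;x)]$. Making this rigorous requires either restricting attention to that regime or bounding the variance explicitly (e.g.\ via $\mathbb{V}[\ell] \le M^2$ together with $\ell \in [0,1]$) and then tracking constants carefully; a crude treatment can shift the leading factor between $\frac{2M}{3N}$ and $\frac{4M}{3N}$, so the delicate point is to verify exactly which variance bound is invoked when the threshold is pinned down. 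Once that per-$\theta$ threshold is established, the remainder—Kraft summability of the prefix weights followed by the union bound—is routine.
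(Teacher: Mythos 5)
Your architecture---per-$\theta$ Bernstein, failure budget $\delta_\theta = p(\theta)\,\delta$ justified by Kraft's inequality, then a union bound over the countable class---is exactly the route the paper takes, so the comparison comes down to the one step you yourself flag as the main obstacle, and there your proposal has a genuine gap rather than a deferred computation. Setting $2\exp\bigl(-N\epsilon^2/(2\mathbb{V}[\ell(\theta;x)] + \tfrac{2}{3}M\epsilon)\bigr) = \delta_\theta$ and writing $t = L(\theta) + \ln(2/\delta)$ means solving the quadratic $N\epsilon^2 - \tfrac{2}{3}Mt\,\epsilon - 2\mathbb{V}t = 0$, whose positive root satisfies
\[
\epsilon_\theta \;\le\; \frac{2Mt}{3N} \;+\; \sqrt{\frac{2\mathbb{V}t}{N}},
\]
and the variance term genuinely survives: nothing in the lemma's hypotheses places you in the large-deviation regime where $\tfrac{2}{3}M\epsilon$ dominates $2\mathbb{V}$. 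The clean linear-in-$t$ threshold is not merely delicate to extract---it is false in general. Take a single hypothesis whose loss is Bernoulli$(1/2)$-distributed (so $M=1$, $\mathbb{V}=1/4$): then $|\hat R - R|$ is of order $N^{-1/2}$ with constant probability, while the claimed bound is $O(N^{-1})$. Your fallback of bounding $\mathbb{V} \le M^2$ does not rescue the statement, since it still leaves a $\sqrt{t/N}$ term; only an extra assumption such as $\mathbb{V}=0$, or $t \gtrsim N\mathbb{V}/M^2$, would remove it, and neither is available here.

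You should also know that the paper's own proof founders on exactly this step, so your caution is well placed. After equating the Bernstein tail with $p(\theta)\delta$, the paper uses $2\mathbb{V} + \tfrac{2}{3}M\epsilon \ge \tfrac{2}{3}M\epsilon$ to deduce $\epsilon \ge \frac{2M}{3N}\,t$---an inequality pointing the wrong way: the exact $\epsilon_\theta$ defined by the equality is \emph{larger} than $\frac{2M}{3N}t$, and the union bound only controls deviations exceeding $\epsilon_\theta$, so no high-probability guarantee at the smaller threshold follows. A sound version of the lemma, along either your route or the paper's, must retain the variance contribution, e.g.\ $|R(\theta) - \hat R(\theta)| \le \sqrt{2\mathbb{V}[\ell(\theta;x)]\,t/N} + \frac{2Mt}{3N}$ simultaneously for all $\theta$ (or, using $\ell \in [0,1]$ with Hoeffding instead of Bernstein, a $\sqrt{t/(2N)}$ threshold); the downstream Occam bound of Theorem~\ref{thm:occam-mdl} would then inherit that square-root term.
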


\begin{proof}
Fix any $\theta \in \Theta$. Applying Bernstein’s inequality yields:
\begin{align*}
\Pr\left(\left|\frac1N\sum_{i=1}^N(\ell(\theta; x^{(i)}) - \mathbb{E}_{x\sim D}[\ell(\theta; x)])\right|\ge \epsilon\right)
&\le
2\exp\left(-\frac{N\epsilon^2}{2\mathbb{V}[\ell(\theta; x)] + \frac{2}{3} M \epsilon}\right)\\
\implies \Pr\left(|\hat R(\theta)-R(\theta)|\ge \epsilon\right)
&\le
2\exp\left(-\frac{N\epsilon^2}{2\mathbb{V}[\ell(\theta; x)] + \frac{2}{3} M \epsilon}\right)\\
\end{align*}
Now set this tail probability to match the scaled prior mass:
\begin{align*}
2\exp\left(-\frac{N\epsilon^2}{2\mathbb{V}[\ell(\theta; x)] + \frac{2}{3} M \epsilon}\right) &= p(\theta)\,\delta\\
\implies \frac{\epsilon^2}{2\mathbb{V}[\ell(\theta; x)] + \frac{2}{3} M \epsilon} &= -\frac1N \ln\left(\frac12 p(\theta)\,\delta\right)\\
&= \frac1N \left(L(\theta) + \ln\left(\frac2\delta\right)\right)\\
\implies \frac{\epsilon^2}{\frac{2}{3} M \epsilon} &\ge \frac1N \left(L(\theta) + \ln\left(\frac2\delta\right)\right)\\
\implies \epsilon &\ge \frac{2M}{3N}\left(L(\theta) + \ln\left(\frac2\delta\right)\right)
\end{align*}

Applying a union bound over $\Theta$, we obtain:
\[
\Pr\left(\exists\,\theta: |R(\theta) - \hat R(\theta)| \ge \epsilon(\theta)\right)
\leq \sum_{\theta \in \Theta} p(\theta)\delta = \delta.
\]
\end{proof}

This result establishes that, with high probability, every hypothesis incurs a generalization gap controlled by its codelength. Importantly, this formalizes an information-theoretic version of Occam’s Razor: simpler models (in the sense of shorter code-length) enjoy tighter generalization guarantees. In the next sections, we leverage this bound to analyze PCNs and demonstrate that each sweep of layerwise PC acts to explicitly minimize this two-part codelength objective.

\subsection{From Concentration to Generalization: The MDL Occam Bound}

We now present the \textbf{Occam MDL Bound}, which provides an explicit high-probability upper bound on the true risk \( R(\theta) \) in terms of the empirical risk \( \hat{R}(\theta) \), the code-length \( L(\theta) \), and the confidence level \( \delta \). This theorem makes the connection between statistical learning theory and information-theoretic model selection precise.

\begin{theorem}[Occam MDL Bound]
\label{thm:occam-mdl}
Let $\Theta$ be a (countable) hypothesis class with a prefix code giving code‐length
\(
L(\theta) = -\log p(\theta).
\)
Suppose the loss per example satisfies \( 0 \le \ell(\theta; x) \le 1 \) for all \( \theta \in \Theta \) and \( x \in \mathcal{X} \). Then, for any \( \delta \in (0,1) \), with probability at least \( 1-\delta \) over an i.i.d.\ draw of \( D = \{x^{(i)}\}_{i=1}^N \sim \mathcal{D}^N \), we have:
\[
\forall\,\theta\in\Theta:\quad
R(\theta)
\le
\hat R(\theta)
+ \frac{L(\theta) + \ln(2/\delta)}{N}.
\]
\end{theorem}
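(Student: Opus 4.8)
The plan is to obtain Theorem~\ref{thm:occam-mdl} as an immediate corollary of the uniform concentration bound already established in Lemma~\ref{lem:uniform}, by specializing the almost-sure deviation constant to the unit-interval loss and then relaxing the two-sided estimate to the claimed one-sided form. No new probabilistic machinery is needed: Bernstein's inequality together with the prefix-code union bound has already been discharged in proving the lemma, so the task reduces to careful constant bookkeeping.

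First I would pin down the constant $M$. The hypothesis $0 \le \ell(\theta;x) \le 1$ for all $\theta \in \Theta$ and $x \in \mathcal{X}$ forces both $\ell(\theta;x)$ and its mean $R(\theta) = \mathbb{E}_{x \sim \mathcal{D}}[\ell(\theta;x)]$ into the unit interval, so $|\ell(\theta;x) - \mathbb{E}[\ell(\theta;x)]| \le 1$ almost surely. Hence the boundedness premise of Lemma~\ref{lem:uniform} holds with $M = 1$, and I may invoke the lemma verbatim: on an event of probability at least $1-\delta$, every $\theta \in \Theta$ satisfies
\[
  |R(\theta) - \hat R(\theta)| \;\le\; \frac{2}{3N}\Bigl(L(\theta) + \ln\tfrac{2}{\delta}\Bigr).
\]
Next I would pass to the one-sided statement by discarding the lower tail, using $R(\theta) - \hat R(\theta) \le |R(\theta) - \hat R(\theta)|$, and then relax the leading constant via the elementary inequality $\tfrac{2}{3} \le 1$, which yields $R(\theta) \le \hat R(\theta) + \frac{L(\theta) + \ln(2/\delta)}{N}$ on the same event. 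Since these two manipulations are purely deterministic, the confidence level $1-\delta$ transfers unchanged, completing the argument.

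There is no substantive obstacle here: the theorem is strictly weaker than Lemma~\ref{lem:uniform}, being one-sided and replacing the sharper factor $2/3$ by $1$. The only points requiring genuine care are the routine ones flagged above—justifying the specialization $M=1$ from the $[0,1]$ loss assumption and confirming that the union-bound event on which the lemma holds is exactly the event carrying the theorem's inequality, so that $\Theta$'s countability and the prefix-code normalization $\sum_\theta p(\theta) \le 1$ are used only once, inside the lemma. I would remark in passing that the constant $1/N$ is chosen purely to match the clean two-part codelength form $\hat C(\theta) = \hat R(\theta) + \tfrac{1}{N}L(\theta)$ used throughout the rest of the paper, rather than for tightness.
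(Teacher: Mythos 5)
Your proposal is correct and takes essentially the same route as the paper's own proof: both fix $M=1$ from the $[0,1]$ loss bound, invoke Lemma~\ref{lem:uniform} verbatim, and relax the resulting two-sided bound with factor $\tfrac{2}{3}$ to the claimed one-sided bound with factor $1$. Your write-up merely makes explicit the deterministic steps (dropping the absolute value, $\tfrac{2}{3}\le 1$, and the transfer of the $1-\delta$ event) that the paper compresses into a single displayed inequality.
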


\begin{proof}
Given that $0 \le \ell(\theta; x) \le 1$, we satisfy $|\ell(\theta;x) - \mathbb{E}[\ell(\theta;x)]|\le 1 = M$. Following directly from Lemma~\ref{lem:uniform} we find that, with probability at least $1 - \delta$,
\[
R(\theta) - \hat R(\theta) \le \frac23 \, \frac{L(\theta)+\ln(2/\delta)}{N} 
\le \frac{L(\theta) + \ln(2/\delta)}{N}.
\]
\end{proof}

\paragraph{Interpretation.} This bound provides a powerful learning-theoretic justification for codelength-based regularization: models that achieve low empirical risk \emph{and} have short description lengths (i.e., high prior probability) are guaranteed to generalize well. Notably, this Occam-style bound mirrors the structure of the two-part MDL objective, which decomposes learning into balancing data fit and model complexity:
\[
\text{Empirical Codelength:} \qquad \hat{C}(\theta) = \hat{R}(\theta) + \frac{1}{N}L(\theta).
\]
In this formulation, model selection becomes equivalent to minimizing $\hat{C}(\theta)$—a formulation we later show PCNs approximately minimize through layerwise updates. The Occam MDL Bound thus forms the theoretical backbone for our subsequent analysis of PC as an implicitly regularized learning scheme.

\medskip

\subsection{Improved Occam Bound via Layerwise Predictive Coding} \label{sec:pc-occam-bound}

Having established in Theorem~\ref{thm:occam-mdl} that the generalization error of a model can be bounded in terms of its empirical risk and codelength, we now show that PC updates directly optimize this bound. Specifically, we demonstrate that one full sweep of layerwise PC updates monotonically reduces the empirical two-part codelength, leading to an improved Occam-style \cite{walsh1979occam} generalization guarantee.

Each PC sweep updates one layer at a time to reduce both the empirical error and the associated code-length of the parameters. The following theorem makes this descent behavior precise and quantifies its impact on the Occam bound.

\begin{theorem}[Occam Bound Improvement via Layerwise Predictive Coding]
\label{thm:pc-mdl-improve}
Define the per-sample two-part codelength
\[
c(\theta; x)
=
\underbrace{-\log p(\theta)}_{L(\theta)}
\;+\;
\underbrace{\ell(\theta; x)}_{-\log P_\theta(x)},
\]
with the empirical codelength objective
\[
\hat C(\theta)
=\frac{1}{N}\sum_{i=1}^N c\bigl(\theta; x^{(i)}\bigr).
\]
Initialize at any parameter setting $\theta^{(0)}$ (e.g., a BP-trained model), and perform one full sweep of layerwise PC:
\[
\text{for }l=1,\dots,L:\quad
\theta_l^{(l)}
=\arg\min_{\theta_l}
\sum_{i=1}^N
c\bigl(\theta_1^{(l-1)},\dots,\theta_l,\dots,\theta_L^{(l-1)};x^{(i)}\bigr).
\]
Let $\theta^{\mathrm{PC}}=(\theta_1^{(L)},\dots,\theta_L^{(L)})$ denote the final parameters. Then:
\[
\hat C\bigl(\theta^{\mathrm{PC}}\bigr)
\;\le\;
\hat C\bigl(\theta^{(0)}\bigr),
\quad\text{and}\quad
R\bigl(\theta^{\mathrm{PC}}\bigr)
\;\le\;
\hat C\bigl(\theta^{(0)}\bigr)
\;+\;
\frac{\ln(1/\delta)}{N}.
\]
\end{theorem}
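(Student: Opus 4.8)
The plan is to prove the two assertions in sequence: the monotone descent $\hat{C}(\theta^{\mathrm{PC}}) \le \hat{C}(\theta^{(0)})$ first, and then read off the risk bound by feeding this descent into the uniform Occam guarantee already established. For the descent, the key is to view one PC sweep as exact block-coordinate minimization of the single fixed objective $\hat{C}$, which is a deterministic function of $\theta$ once the sample $D$ is fixed. Writing $\theta^{(l)}$ for the full parameter vector after layer $l$ has been updated, the defining property of the update is that $\theta_l^{(l)}$ minimizes $\hat{C}$ over the $l$-th block while all other blocks are frozen at their values in $\theta^{(l-1)}$. Since retaining the incumbent block value $\theta_l^{(l-1)}$ is always admissible, the minimizer can only weakly decrease the objective, giving $\hat{C}(\theta^{(l)}) \le \hat{C}(\theta^{(l-1)})$; this inequality does not even require the arg min to be attained, as any block value no worse than the incumbent suffices, which is also what would justify replacing the idealized exact minimization by an inexact PC step obeying a sufficient-decrease condition. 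Chaining the $L$ inequalities telescopically from $l=1$ to $l=L$ yields $\hat{C}(\theta^{\mathrm{PC}}) = \hat{C}(\theta^{(L)}) \le \cdots \le \hat{C}(\theta^{(0)})$.

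For the risk bound I would invoke the Occam MDL Bound (Theorem~\ref{thm:occam-mdl}) at the specific parameter $\theta^{\mathrm{PC}}$. The step I expect to demand the most care is the observation that $\theta^{\mathrm{PC}}$ is \emph{data-dependent} — it is computed from the very sample $D$ that defines $\hat{R}$ — so a pointwise concentration inequality for a fixed $\theta$ would be inapplicable here. What rescues the argument is precisely that Theorem~\ref{thm:occam-mdl} holds \emph{uniformly} over every $\theta \in \Theta$ on one event of probability at least $1-\delta$; since $\theta^{\mathrm{PC}} \in \Theta$, the bound applies to it on that same event at no additional probabilistic cost. Recalling the form $\hat{C}(\theta) = \hat{R}(\theta) + \tfrac1N L(\theta)$, the uniform bound then reads $R(\theta^{\mathrm{PC}}) \le \hat{R}(\theta^{\mathrm{PC}}) + \tfrac1N\bigl(L(\theta^{\mathrm{PC}}) + \ln(1/\delta)\bigr) = \hat{C}(\theta^{\mathrm{PC}}) + \tfrac{\ln(1/\delta)}{N}$.

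Combining the two parts finishes the proof: substituting the descent inequality $\hat{C}(\theta^{\mathrm{PC}}) \le \hat{C}(\theta^{(0)})$ into the previous display yields $R(\theta^{\mathrm{PC}}) \le \hat{C}(\theta^{(0)}) + \tfrac{\ln(1/\delta)}{N}$, as claimed. The one remaining bookkeeping point is the confidence constant: the two-sided Bernstein step behind Lemma~\ref{lem:uniform} naturally produces $\ln(2/\delta)$, whereas the stated bound carries $\ln(1/\delta)$. I would close this gap by retaining only the relevant one-sided tail — we need just the deviation $R(\theta) - \hat{R}(\theta)$, not its absolute value — which drops the factor of two inside the union bound and recovers $\ln(1/\delta)$ exactly.
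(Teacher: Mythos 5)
Your proposal is correct and takes essentially the same route as the paper's proof: each exact block update can only weakly decrease $\hat C$ because retaining the incumbent block is admissible, the $L$ inequalities telescope over the sweep, and the uniform Occam bound of Theorem~\ref{thm:occam-mdl} (valid simultaneously for all $\theta$, hence for the data-dependent $\theta^{\mathrm{PC}}$) is then applied and combined with the descent. You in fact tighten the paper's argument in two small ways it leaves implicit: you flag why uniformity is needed for the sample-dependent $\theta^{\mathrm{PC}}$, and you justify the $\ln(1/\delta)$ constant via a one-sided Bernstein tail, where the paper silently writes $\ln(1/\delta)$ although its Theorem~\ref{thm:occam-mdl} as stated only yields $\ln(2/\delta)$.
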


\begin{proof}\;
\begin{enumerate}[label=(\roman*)]
\item \textbf{Blockwise decomposition.}
    Since the prior and likelihood are factorized across layers, the per-sample codelength decomposes as:
    \[
    c(\theta;x)
    = \sum_{l=1}^L
      \left[
        -\log p_l(\theta_l)
        -\log P_{\theta_l}(x_l\mid x_{<l})
      \right].
    \]
  
  \item \textbf{Coordinate-wise improvement.}
    At each step \( l \), we update \( \theta_l \) by minimizing the total codelength over all examples, holding other layers fixed:
    \[
    \theta_l^{(l)} = \arg\min_{\theta_l}
    \sum_{i=1}^N c(\theta^{(l)}; x^{(i)}).
    \]

  \item \textbf{Aggregate codelength decrease.}
    Repeating the process for all \( L \) layers yields:
    \[
    \hat C(\theta^{\mathrm{PC}}) =
    \sum_{i=1}^N c(\theta^{\mathrm{PC}}; x^{(i)}) \le \sum_{i=1}^N c(\theta^{(0)}; x^{(i)}) =
    \hat C(\theta^{(0)})
    \]

  \item \textbf{Improved generalization bound.}
    Finally, applying the Occam MDL Bound from Theorem~\ref{thm:occam-mdl} to \( \theta^{\mathrm{PC}} \), we obtain:
    \[
    R(\theta^{\mathrm{PC}})
    \le \hat C(\theta^{\mathrm{PC}}) + \frac{\ln(1/\delta)}{N}
    \le \hat C(\theta^{(0)}) + \frac{\ln(1/\delta)}{N}.
    \]
\end{enumerate}
\end{proof}

\paragraph{Implication.}
The preceding bound demonstrates that PC is not merely
a heuristic optimizer but a \emph{principled, complexity-aware learning
rule} that explicitly tightens MDL/PAC-Bayes generalization guarantees.
Every PC sweep performs an exact block-wise minimization of the two-part
MDL objective, thereby lowering the right–hand side of the risk bound
relative to the current iterate.
Although this guarantee is not vacuously strict for \emph{all} possible
initializations (e.g.\ the degenerate case
$\theta^{(0)}=\theta^{\mathrm{PC}}$), it \emph{is} strict whenever PC is
compared against BP––that is,
$C(\theta^{\mathrm{PC}})<C(\theta^{\mathrm{BP}})$ under equal compute. Consequently, PC offers a theoretically justified alternative to BP in
which local updates provably compress the model and improve
generalization.

A full complexity-budget comparison starting from
$\theta^{(0)}=\theta^{\mathrm{BP}}$, together with polynomial-time
convergence proofs, is provided in
Appendix~\ref{appendix:sectiona}.

\paragraph{Polynomial-time solvability and other supporting proofs (see Appendix~\ref{appendix:sectione}).}
While the MDL objective is globally NP–hard, Appendix~\ref{appendix:sectione} establishes that layerwise PC attains an $\varepsilon$-first-order stationary point in arithmetic time $\mathrm{poly}\!\bigl(P,1/\varepsilon\bigr)$ (Theorem~\ref{thm:pc-noncvx}), and even
geometric time under a Polyak–Łojasiewicz landscape (Corollary \ref{cor:poly-pl}).
Hence the strict Occam improvement proved here is computationally
achievable for even modern deep networks.

\subsection{Convergence to Blockwise MDL Stationary Points} \label{sec:pc-convergence}

Thus far, we have shown that a single sweep of layerwise PC leads to a reduction in the empirical two-part codelength \( \hat{C}(\theta) \), and that this descent yields a tighter generalization bound compared to the initial parameter setting. We now go one step further and analyze the behavior of \textit{repeated} PC updates. Specifically, we ask: does the sequence of parameter configurations produced by successive PC sweeps converge, and if so, to what?

The answer is affirmative. When PC is executed as a block-coordinate descent algorithm—where each layer's weights are updated to exactly minimize the empirical codelength while keeping the others fixed—the algorithm exhibits well-understood convergence behavior. Under standard mild assumptions (nonnegativity, continuity, and exact minimization), the sequence of codelength values is nonincreasing and converges to a fixed point. Moreover, any limit point of the parameter sequence is a \emph{blockwise stationary point} of the MDL objective. That is, no single-layer perturbation can further reduce the two-part codelength.

This convergence theorem completes our theoretical foundation by showing that PC is not only complexity-aware and generalization-tightening, but also structurally stable in the limit.

\begin{theorem}[Coordinate-Descent Convergence for Predictive Coding]
\label{thm:pc-converge}\;\\
Assume:

\begin{enumerate}[label=(A\arabic*)]
    \item (Boundedness) The empirical two-part codelength \( C(\theta) \geq 0 \) for all \(\theta\).
    \item (Smoothness) The function \( C(\theta) \) is continuous in \(\theta\).
    \item (Exact Layer Minimization) At each PC update step, the updated layer \(\theta_l\) exactly minimizes \(C\) with respect to its block, holding other layers fixed.
\end{enumerate}

Let \(\{\theta^{(t)}\}_{t\geq 0}\) denote the sequence produced by repeated full sweeps of PC updates (i.e., one coordinate-descent step per layer at each iteration).

Then the sequence of codelengths \( C(\theta^{(t)}) \) converges to a limit point \(\theta^\infty\) satisfying:
\[
\theta_l^\infty = \arg\min_{\theta_l} C\left(\theta_1^\infty, \dots, \theta_{l-1}^\infty, \theta_l, \theta_{l+1}^\infty, \dots, \theta_L^\infty\right),\; \forall l = 1, \dots, L
\]

Thus, \(\theta^\infty\) is a block-coordinate stationary point \cite{grippof1999globally} of the two-part codelength \(C(\theta)\).
\end{theorem}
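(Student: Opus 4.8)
The plan is to treat the PC sweeps as a textbook block-coordinate descent (also called Gauss--Seidel) scheme and invoke the classical monotone-convergence argument, specializing it to our two-part codelength $C(\theta)$. First I would establish that the sequence $\{C(\theta^{(t)})\}_{t\ge 0}$ is monotonically nonincreasing: by assumption (A3), each layerwise update exactly minimizes $C$ over its own block while the remaining blocks are held fixed, so replacing $\theta_l^{(t)}$ by $\theta_l^{(t+1)}=\arg\min_{\theta_l} C(\dots,\theta_l,\dots)$ can only lower (or leave unchanged) the objective. Chaining these $L$ per-layer inequalities across one full sweep yields $C(\theta^{(t+1)})\le C(\theta^{(t)})$. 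Combined with the boundedness assumption (A1), namely $C(\theta)\ge 0$, the sequence $\{C(\theta^{(t)})\}$ is nonincreasing and bounded below, hence converges to some limit $C^\infty$ by the monotone convergence theorem for real sequences.

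Next I would extract a convergent subsequence of iterates. The honest statement is convergence \emph{up to subsequences}: assuming the iterates remain in a compact set (which follows if the sublevel set $\{\theta : C(\theta)\le C(\theta^{(0)})\}$ is bounded, a consequence of the coercivity implicit in the prefix-code prior term $L(\theta)=-\log p(\theta)$), the Bolzano--Weierstrass theorem provides a subsequence $\theta^{(t_k)}\to\theta^\infty$. Then I would show $\theta^\infty$ is a block-coordinate stationary point. Fix a layer $l$ and any candidate $\theta_l$. For each $k$, the exact-minimization property (A3) gives
\[
C\bigl(\theta_1^{(t_k)},\dots,\theta_l^{(t_k+1)},\dots,\theta_L^{(t_k)}\bigr)
\le
C\bigl(\theta_1^{(t_k)},\dots,\theta_l,\dots,\theta_L^{(t_k)}\bigr).
\]
Passing to the limit $k\to\infty$ and using continuity of $C$ (assumption A2), together with the fact that all iterate components converge to $\theta^\infty$ along the subsequence and that $C(\theta^{(t)})\to C^\infty$ pins the updated block, yields $C(\theta^\infty)\le C(\theta_1^\infty,\dots,\theta_l,\dots,\theta_L^\infty)$ for every $\theta_l$. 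Since $l$ was arbitrary, no single-block perturbation can decrease $C$, which is exactly the claimed stationarity condition $\theta_l^\infty=\arg\min_{\theta_l} C(\theta_1^\infty,\dots,\theta_l,\dots,\theta_L^\infty)$ for all $l$.

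The main obstacle is the limiting step in the third paragraph: controlling the cross-terms so that the limit of the updated-block iterate $\theta_l^{(t_k+1)}$ coincides with $\theta_l^\infty$ rather than drifting. With only continuity (A2) and without convexity or a unique minimizer, the argmin need not be continuous, so I must argue carefully that the \emph{value} sequence closing to a single limit $C^\infty$ forces the successive sweeps to produce asymptotically identical blocks; this is where the standard Zangwill or Grippo--Sciandrone machinery (cited as \cite{grippof1999globally}) is typically invoked to rule out the cycling pathologies that can otherwise break block-coordinate convergence. I would therefore explicitly flag that the result is a subsequential (limit-point) statement and that uniqueness of per-block minimizers, or a quasi-convexity condition along each block, is the cleanest sufficient condition to promote it to full convergence of the iterates.
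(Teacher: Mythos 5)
Your proposal follows essentially the same route as the paper's proof: monotone descent from exact blockwise minimization (A3), convergence of the value sequence via boundedness (A1) and the monotone convergence theorem, Bolzano--Weierstrass for a subsequential limit point, and continuity (A2) to carry per-block optimality to the limit. The two caveats you flag are, if anything, refinements the paper glosses over---Bolzano--Weierstrass requires \emph{bounded} iterates (the paper invokes only finite-dimensionality, whereas you supply coercivity of the sublevel sets via the prior term $L(\theta)=-\log p(\theta)$), and the stationarity step genuinely needs Grippo--Sciandrone-type control of the cross-terms since the argmin map need not be continuous, a point the paper's step (iii) asserts without argument.
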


\begin{proof}\;
\begin{enumerate}[label=(\roman*)]
\item \textbf{Monotonicity.}  
At each step, minimizing \(C\) with respect to a single layer while holding others fixed ensures that \(C(\theta^{(t+1)}) \leq C(\theta^{(t)})\). Since \(C \geq 0\) by assumption (A1), the sequence is bounded below.

\item \textbf{Convergence.}  
The sequence \( \{ C(\theta^{(t)}) \} \) is nonincreasing and bounded below, hence convergent by the monotone convergence theorem. Furthermore, since the parameter space is finite-dimensional, the sequence \( \{ \theta^{(t)} \} \) has at least one convergent subsequence (Bolzano–Weierstrass), with limit point \( \theta^\infty \).

\item \textbf{Stationarity.}  
At \( \theta^\infty \), by continuity of \(C\) (A2) and exact coordinate-wise minimization (A3), no single-layer update can further decrease the objective. Therefore, each layer \( \theta_l^\infty \) satisfies
\[
\theta_l^\infty = \arg\min_{\theta_l} C(\theta_1^\infty, \dots, \theta_l, \dots, \theta_L^\infty).
\]
\end{enumerate}
\end{proof}

This motivates the following corollary, which characterizes the learned solution \( \theta^\infty \) as a \emph{blockwise local optimum} of the MDL objective. While this does not guarantee global optimality of the codelength functional \(C(\theta)\), it ensures that PC halts at a structurally meaningful point: one at which each layer is individually optimal, conditional on the others. Such points are not only stable under further PC iterations but also serve as effective local approximations to fully compressed models in practice. 

\begin{corollary}[Blockwise Local MDL Optimality]
\label{cor:blockwise-mdl}
The limit point \(\theta^\infty\) obtained via layerwise PC is a \emph{block-coordinate local minimizer} of the empirical two-part codelength \(C(\theta)\).  
That is, no infinitesimal perturbation of any single layer \(\theta_l\) can strictly decrease \(C\), holding the other layers fixed. Consequently, \(\theta^\infty\) approximately achieves a local MDL optimum.
\end{corollary}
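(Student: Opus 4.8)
The plan is to obtain the corollary as an immediate consequence of the stationarity conclusion already proved in Theorem~\ref{thm:pc-converge}, so that essentially no new machinery is required beyond a global-to-local weakening. That theorem guarantees the limit point $\theta^\infty$ satisfies, for every layer $l$,
\[
\theta_l^\infty=\arg\min_{\theta_l}C\bigl(\theta_1^\infty,\dots,\theta_{l-1}^\infty,\theta_l,\theta_{l+1}^\infty,\dots,\theta_L^\infty\bigr).
\]
The key observation is that a global minimizer of a single-block restriction is \emph{a fortiori} a local minimizer of that restriction. Concretely, I would fix $l$, introduce the restricted map $g_l(\theta_l):=C(\theta_1^\infty,\dots,\theta_l,\dots,\theta_L^\infty)$, and read off from the arg-min condition that $g_l(\theta_l)\ge g_l(\theta_l^\infty)$ for every admissible $\theta_l$, hence in particular throughout any neighborhood of $\theta_l^\infty$.

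Next I would translate this value inequality into the ``no infinitesimal perturbation strictly decreases $C$'' phrasing of the corollary. For any small displacement $\theta_l^\infty+\delta$ with the remaining blocks frozen, $g_l(\theta_l^\infty+\delta)\ge g_l(\theta_l^\infty)$ holds by the previous step, so no such perturbation can strictly lower the objective; repeating the argument over all $L$ blocks shows $\theta^\infty$ is a per-block local minimizer in every coordinate block, which is exactly the definition of a block-coordinate local minimizer. The qualifier ``approximately'' in the statement I would justify by emphasizing that this is only a \emph{coordinate-wise} optimality guarantee: a block-coordinate local minimizer need not be a joint local minimizer, since a simultaneous perturbation of several layers could in principle still decrease $C$. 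The conclusion that $\theta^\infty$ is an approximate local MDL optimum then follows by recalling that $C$ is precisely the empirical two-part codelength $\hat C$.

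The main subtlety I would flag is the precise meaning of ``infinitesimal perturbation'' under the theorem's hypotheses. Assumption (A2) posits only continuity of $C$, not differentiability, so I would phrase local optimality in the zeroth-order (function-value) sense rather than through a first-order condition. If one additionally assumes block-wise differentiability of $C$, the local-minimizer property upgrades to the stationarity equation $\nabla_{\theta_l}C(\theta^\infty)=0$ for each $l$, which I would record as a remark. Beyond this, the proof is genuinely short: the existence of $\theta^\infty$ and its global block-optimality are inherited directly from Theorem~\ref{thm:pc-converge}, and the only new content is the (routine) observation that global block-optimality implies block-coordinate local optimality.
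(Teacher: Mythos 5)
Your proposal is correct and takes essentially the same route as the paper's own proof: both read blockwise local minimality directly off the per-block arg-min condition supplied by Theorem~\ref{thm:pc-converge} (global block optimality is \emph{a fortiori} local block optimality), and both attribute the qualifier ``approximately'' to the fact that coordinated perturbations across several layers may still decrease $C$. If anything, your version is marginally cleaner, since you correctly observe that this implication needs no regularity at all and so phrase optimality in the zeroth-order sense, whereas the paper's proof invokes the continuity assumption (A2) at this step even though it is not actually needed.
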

(For a proof, see Appendix~\ref{appendix:cor1-proof}.)

Because the results discussed thus far have been presented in the general setting of an arbitrary model class \(\Theta\), it may be illuminating to follow their application to particular model classes. Interested readers are encouraged to review Appendices~\ref{appendix:scalar-regression} and~\ref{appendix:vector-regression} in which we apply our framework to scalar and vector-valued regression, respectively. In Appendix~\ref{appendix:sectionc}, we also extend the application to nonlinear networks. These sections provide supplementary support for the practicality of our framework in a breadth of domains.

\section{Empirical Validation: Predictive Coding Minimizes Codelength} \label{sec:empirical-validation}

To empirically validate our theoretical claims, we conduct a controlled simulation comparing the optimization dynamics of layerwise PC and traditional BP in minimizing the empirical two-part MDL objective. The results confirm that PC not only decreases codelength more effectively than BP, but also converges to a blockwise local MDL optimum, consistent with our formal analysis.

\subsection{Simulation Setup} \label{sec:simulation-setup}

We construct a two-layer linear neural network with input \( x \in \mathbb{R}^2 \), hidden dimension 2, and scalar output. Synthetic data is generated from a ground-truth linear model with Gaussian noise (\( \sigma = 0.1 \)), and both BP and PC are initialized identically. PC updates are performed layer-by-layer, while BP performs full-network gradient descent. Each experiment is repeated across 1000 trials, and MDL trajectories are averaged. Additionally, a perturbation test is conducted post-convergence to assess local optimality, by injecting small Gaussian noise either to a single layer (blockwise) or to all layers (coordinated). Numerical experiments were performed with NumPy \cite{numpy}. The simulation setup is summarized in Table~\ref{tab:settings}.

\vspace{0.5em}
\begin{table}[!ht]
\centering
\setlength{\tabcolsep}{5pt} 
\begin{tabular}{@{}lll@{}}
\toprule
\textbf{Category} & \textbf{Parameter} & \textbf{Value} \\
\midrule
\multirow{2}{*}{Network Architecture} & Layers & 2 (Input → Hidden → Output) \\
 & Dimensions & $2 \rightarrow 2 \rightarrow 1$ \\
\midrule
\multirow{6}{*}{Training Setup} & Training Samples ($N$) & 100 \\
 & Noise Std. Dev. & 0.1 \\
 & Learning Rate (PC, BP) & 0.01 \\
 & PC Updates per Layer & 100 \\
 & BP Gradient Steps & 200 \\
 & Trials (Random Seeds) & 1000 \\
\midrule
Perturbation Test & Perturbation Scale ($\varepsilon$) & $10^{-2}$ \\
\bottomrule
\end{tabular}
\caption{Experimental settings for training and perturbation evaluation.}
\label{tab:settings}
\end{table}

\subsection{Convergence of MDL Objective}

Figure~\ref{fig:convergence} shows the convergence trajectories of the empirical two-part MDL objective \( \hat{C}(\theta) = \hat{R}(\theta) + \frac{1}{N} L(\theta) \) over training. We normalize progress to account for differing update counts.

\begin{itemize}
\item \textbf{Predictive coding} consistently reaches lower MDL values than BP across all seeds, indicating more efficient compression.
\item \textbf{Stability and robustness} are reflected in the low variance of PC trajectories, confirming that PC reliably descends the MDL energy landscape.
\item \textbf{BP plateaus} at higher MDL cost, consistent with our analysis that BP does not explicitly minimize the total description length.
\end{itemize}

\begin{figure}[h!]
\centering
\includegraphics[width=0.8\textwidth]{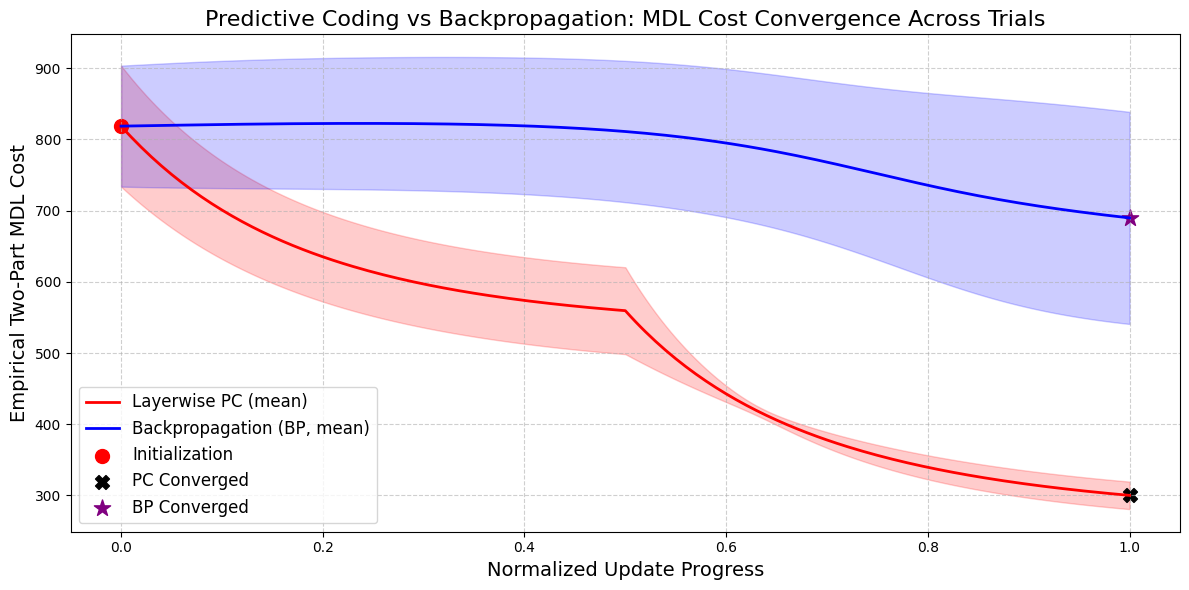}
\caption{MDL convergence comparison for PC and BP. Solid lines denote mean MDL cost; shaded regions denote ±1 standard deviation across 1000 seeds.}
\label{fig:convergence}
\end{figure}

\subsection{Local Optimality via Perturbation}

To test whether the PC solution corresponds to a blockwise MDL optimum, we inject small random perturbations at convergence and measure the resulting change in total MDL cost.

\begin{itemize}
\item \textbf{Blockwise perturbations} (single-layer noise) almost never decrease the MDL cost, confirming that \( \theta^{\mathrm{PC}} \) is a blockwise local minimum—precisely as guaranteed by Theorem~\ref{thm:pc-converge}.
\item \textbf{Coordinated perturbations} occasionally produce slight cost decreases, highlighting the expected difference between blockwise and full local optimality (see Corollary~\ref{cor:blockwise-mdl}).
\end{itemize}

Quantitatively, fewer than 2.5\% of blockwise perturbations yielded a cost reduction, compared to 18.7\% for coordinated perturbations.

\begin{figure}[h!]
\centering
\includegraphics[width=0.7\textwidth]{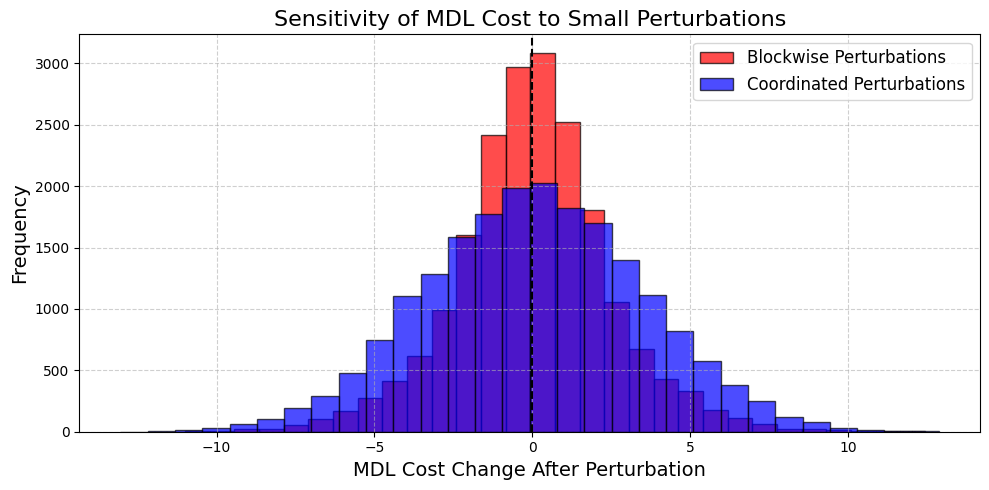}
\caption{Histogram of MDL cost changes after random perturbations at the PC convergence point. Blockwise changes confirm local stationarity; coordinated changes reveal mild descent directions.}
\label{fig:perturbation}
\end{figure}

\subsection{Empirical Alignment with Theory}

Together, the simulations offer strong empirical support for our theoretical framework:

\begin{itemize}
\item PC performs structured coordinate descent on the MDL objective, achieving tighter compression than BP.
\item PC converges to a blockwise stationary point \( \theta^\infty \), where no layerwise descent direction remains—a key prediction of Theorem~\ref{thm:pc-converge}.
\item The MDL cost at the PC fixed point is strictly lower than at the BP solution, aligning with the generalization bound tightening in Theorem~\ref{thm:pc-mdl-improve}.
\end{itemize}

These results demonstrate that PC is not only a theoretically grounded alternative to BP, but also an empirically robust procedure for learning models that generalize by compression.
\section{Discussion and Conclusion}

This work provides the first rigorous connection between PC and the MDL principle, reinterpreting PC as a compression-driven learning algorithm rather than merely a biologically inspired approximation to BP. By deriving an Occam-style generalization bound and demonstrating that PC updates perform blockwise descent on a two-part empirical codelength, we establish a direct information-theoretic foundation for PC. Our convergence results further show that repeated PC updates lead to stationary points of the codelength objective, providing both theoretical and algorithmic justification for PC's strong empirical generalization performance. Importantly, our MDL-based analysis decouples the generalization behavior of PC from its BP-mimicry, offering an explanation for recent empirical findings where PC-trained models outperform BP-trained ones on tasks requiring few-shot generalization, robustness, and continual adaptation. The compression perspective offers a novel lens through which to interpret energy-based learning and opens the door to a broader class of models and updates guided by explicit codelength or complexity control.

\paragraph{Limitations and Future Work.} \label{sec:limitations}

While our analysis covers both linear and nonlinear architectures under reasonable smoothness and blockwise Lipschitz assumptions, further work is needed to extend these results to stochastic PC variants, structured priors, and more expressive hierarchical models (e.g., PCNs with recurrent feedback or memory). Additionally, future work could investigate the role of codelength curvature (second-order MDL approximations) in accelerating convergence and explore hybrid architectures where PC modules are integrated with BP-trained layers.
\bibliography{references}

\begin{thebibliography}{10}

\bibitem{allen2019convergence}
Z.~Allen-Zhu, Y.~Li, and Z.~Song.
\newblock A convergence theory for deep learning via over-parameterization.
\newblock In {\em International conference on machine learning}, pages 242--252. PMLR, 2019.

\bibitem{alonso2022theoretical}
N.~Alonso, B.~Millidge, J.~Krichmar, and E.~O. Neftci.
\newblock A theoretical framework for inference learning.
\newblock {\em Advances in Neural Information Processing Systems}, 35:37335--37348, 2022.

\bibitem{beck2013convergence}
A.~Beck and L.~Tetruashvili.
\newblock On the convergence of block coordinate descent type methods.
\newblock {\em SIAM journal on Optimization}, 23(4):2037--2060, 2013.

\bibitem{bernstein1924modification}
S.~Bernstein.
\newblock On a modification of chebyshev’s inequality and of the error formula of laplace.
\newblock {\em Ann. Sci. Inst. Sav. Ukraine, Sect. Math}, 1(4):38--49, 1924.

\bibitem{bolte2014proximal}
J.~Bolte, S.~Sabach, and M.~Teboulle.
\newblock Proximal alternating linearized minimization for nonconvex and nonsmooth problems.
\newblock {\em Mathematical Programming}, 146(1):459--494, 2014.

\bibitem{devlin2019bert}
J.~Devlin, M.-W. Chang, K.~Lee, and K.~Toutanova.
\newblock Bert: Pre-training of deep bidirectional transformers for language understanding.
\newblock In {\em Proceedings of the 2019 Conference of the North American Chapter of the Association for Computational Linguistics: Human Language Technologies}, pages 4171--4186, 2019.

\bibitem{friston2009predictive}
K.~Friston and S.~Kiebel.
\newblock Predictive coding under the free-energy principle.
\newblock {\em Philosophical Transactions of the Royal Society B: Biological Sciences}, 364(1521):1211--1221, 2009.

\bibitem{grippof1999globally}
L.~Grippof and M.~Sciandrone.
\newblock Globally convergent block-coordinate techniques for unconstrained optimization.
\newblock {\em Optimization methods and software}, 10(4):587--637, 1999.

\bibitem{numpy}
C.~R. Harris, K.~J. Millman, S.~J. van~der Walt, R.~Gommers, P.~Virtanen, D.~Cournapeau, E.~Wieser, J.~Taylor, S.~Berg, N.~J. Smith, R.~Kern, M.~Picus, S.~Hoyer, M.~H. van Kerkwijk, M.~Brett, A.~Haldane, J.~F. del R{\'{i}}o, M.~Wiebe, P.~Peterson, P.~G{\'{e}}rard-Marchant, K.~Sheppard, T.~Reddy, W.~Weckesser, H.~Abbasi, C.~Gohlke, and T.~E. Oliphant.
\newblock Array programming with {NumPy}, Sept. 2020.

\bibitem{he2016deep}
K.~He, X.~Zhang, S.~Ren, and J.~Sun.
\newblock Deep residual learning for image recognition.
\newblock In {\em Proceedings of the IEEE Conference on Computer Vision and Pattern Recognition}, pages 770--778, 2016.

\bibitem{hinton2012deep}
G.~Hinton, L.~Deng, D.~Yu, G.~E. Dahl, A.-r. Mohamed, N.~Jaitly, A.~Senior, V.~Vanhoucke, P.~Nguyen, T.~N. Sainath, et~al.
\newblock Deep neural networks for acoustic modeling in speech recognition.
\newblock {\em IEEE Signal Processing Magazine}, 29(6):82--97, 2012.

\bibitem{kendall2020training}
J.~Kendall, R.~Pantone, K.~Manickavasagam, Y.~Bengio, and B.~Scellier.
\newblock Training end-to-end analog neural networks with equilibrium propagation.
\newblock {\em arXiv:2006.01981}, 2020.

\bibitem{krizhevsky2012imagenet}
A.~Krizhevsky, I.~Sutskever, and G.~E. Hinton.
\newblock Imagenet classification with deep convolutional neural networks.
\newblock In {\em Advances in Neural Information Processing Systems}, volume~25, 2012.

\bibitem{mali2024tight}
A.~Mali, T.~Salvatori, and A.~Ororbia.
\newblock Tight stability, convergence, and robustness bounds for predictive coding networks.
\newblock {\em arXiv preprint arXiv:2410.04708}, 2024.

\bibitem{millidge2022predictive}
B.~Millidge, T.~Salvatori, Y.~Song, R.~Bogacz, and T.~Lukasiewicz.
\newblock Predictive coding: Towards a future of deep learning beyond backpropagation?
\newblock In {\em Proceedings of the 31st International Joint Conference on Artificial Intelligence and the 25th European Conference on Artificial Intelligence‚ IJCAI-ECAI 2022‚ Survey Track, Vienna‚ Austria‚ July 23--29‚ 2022}, pages 5538--5545. IJCAI/AAAI Press, July 2022.

\bibitem{millidge2021predictive}
B.~Millidge, A.~Seth, and C.~L. Buckley.
\newblock Predictive coding: A theoretical and experimental review.
\newblock {\em arXiv:2107.12979}, 2021.

\bibitem{millidge2023a}
B.~Millidge, Y.~Song, T.~Salvatori, T.~Lukasiewicz, and R.~Bogacz.
\newblock A theoretical framework for inference and learning in predictive coding networks.
\newblock In {\em The Eleventh International Conference on Learning Representations}, 2023.

\bibitem{nesterov2005smooth}
Y.~Nesterov.
\newblock Smooth minimization of non-smooth functions.
\newblock {\em Mathematical programming}, 103:127--152, 2005.

\bibitem{openai2023gpt4}
{OpenAI}.
\newblock Gpt-4 technical report.
\newblock \url{https://openai.com/research/gpt-4}, 2023.

\bibitem{ororbia2019spiking}
A.~Ororbia.
\newblock Spiking neural predictive coding for continually learning from data streams.
\newblock {\em Neurocomputing}, 544:126292, 2023.

\bibitem{ororbia2022neural}
A.~Ororbia and D.~Kifer.
\newblock The neural coding framework for learning generative models.
\newblock {\em Nature Communications}, 13(1):1--14, 2022.

\bibitem{ororbia2022convolutional}
A.~Ororbia and A.~Mali.
\newblock Convolutional neural generative coding: Scaling predictive coding to natural images.
\newblock {\em arXiv preprint arXiv:2211.12047}, 2022.

\bibitem{ororbia2022active}
A.~Ororbia and A.~Mali.
\newblock Active predictive coding: Brain-inspired reinforcement learning for sparse reward robotic control problems.
\newblock In {\em 2023 IEEE International Conference on Robotics and Automation (ICRA)}, pages 3015--3021. IEEE, 2023.

\bibitem{ororbia2020continual}
A.~Ororbia, A.~Mali, C.~L. Giles, and D.~Kifer.
\newblock Continual learning of recurrent neural networks by locally aligning distributed representations.
\newblock {\em IEEE Transactions on Neural Networks and Learning Systems}, 31(10):4267--4278, 2020.

\bibitem{ororbia2022lifelong}
A.~Ororbia, A.~Mali, C.~L. Giles, and D.~Kifer.
\newblock Lifelong neural predictive coding: Learning cumulatively online without forgetting.
\newblock {\em Advances in Neural Information Processing Systems}, 35:5867--5881, 2022.

\bibitem{pinchetti2024benchmarking}
L.~Pinchetti, C.~Qi, O.~Lokshyn, G.~Olivers, C.~Emde, M.~Tang, A.~M'Charrak, S.~Frieder, B.~Menzat, R.~Bogacz, et~al.
\newblock Benchmarking predictive coding networks--made simple.
\newblock {\em arXiv preprint arXiv:2407.01163}, 2024.

\bibitem{radford2021learning}
A.~Radford, J.~W. Kim, M.~Hallacy, A.~Ramesh, G.~Goh, S.~Agarwal, G.~Sastry, A.~Askell, P.~Mishkin, J.~Clark, et~al.
\newblock Learning transferable visual models from natural language supervision.
\newblock In {\em International Conference on Machine Learning}, 2021.

\bibitem{rao2023active}
R.~P. Rao, D.~C. Gklezakos, and V.~Sathish.
\newblock Active predictive coding: A unifying neural model for active perception, compositional learning, and hierarchical planning.
\newblock {\em Neural Computation}, 36(1):1--32, 2023.

\bibitem{rao1999predictive}
R.~P.~N. Rao and D.~H. Ballard.
\newblock Predictive coding in the visual cortex: {A} functional interpretation of some extra-classical receptive-field effects.
\newblock {\em Nature Neuroscience}, 2(1):79--87, 1999.

\bibitem{Rissanen1978}
J.~Rissanen.
\newblock Modeling by shortest data description.
\newblock {\em Automatica}, 14:465--471, 1978.

\bibitem{salvatori2023brain}
T.~Salvatori, A.~Mali, C.~L. Buckley, T.~Lukasiewicz, R.~P. Rao, K.~Friston, and A.~Ororbia.
\newblock Brain-inspired computational intelligence via predictive coding.
\newblock {\em arXiv preprint arXiv:2308.07870}, 2023.

\bibitem{salvatori2022learning}
T.~Salvatori, L.~Pinchetti, B.~Millidge, Y.~Song, T.~Bao, R.~Bogacz, and T.~Lukasiewicz.
\newblock Learning on arbitrary graph topologies via predictive coding.
\newblock {\em Advances in Neural Information Processing Systems}, 2022.

\bibitem{salvatori2021associative}
T.~Salvatori, Y.~Song, Y.~Hong, L.~Sha, S.~Frieder, Z.~Xu, R.~Bogacz, and T.~Lukasiewicz.
\newblock Associative memories via predictive coding.
\newblock In {\em Advances in Neural Information Processing Systems}, volume~34, 2021.

\bibitem{salvatori2022reverse}
T.~Salvatori, Y.~Song, Z.~Xu, T.~Lukasiewicz, and R.~Bogacz.
\newblock Reverse differentiation via predictive coding.
\newblock In {\em Proceedings of the 36th AAAI Conference on Artificial Intelligence}. AAAI Press, 2022.

\bibitem{Song2020}
Y.~Song, T.~Lukasiewicz, Z.~Xu, and R.~Bogacz.
\newblock Can the brain do backpropagation? --- {E}xact implementation of backpropagation in predictive coding networks.
\newblock In {\em Advances in Neural Information Processing Systems}, volume~33, 2020.

\bibitem{song2022inferring}
Y.~Song, B.~G. Millidge, T.~Salvatori, T.~Lukasiewicz, Z.~Xu, and R.~Bogacz.
\newblock Inferring neural activity before plasticity: A foundation for learning beyond backpropagation.
\newblock {\em Nature Neuroscience}, 2023.

\bibitem{tang2022recurrent}
M.~Tang, T.~Salvatori, B.~Millidge, Y.~Song, T.~Lukasiewicz, and R.~Bogacz.
\newblock Recurrent predictive coding models for associative memory employing covariance learning.
\newblock {\em bioRxiv}, 2022.

\bibitem{tseng2001convergence}
P.~Tseng.
\newblock Convergence of a block coordinate descent method for nondifferentiable minimization.
\newblock {\em Journal of optimization theory and applications}, 109:475--494, 2001.

\bibitem{vaswani2017attention}
A.~Vaswani, N.~Shazeer, N.~Parmar, J.~Uszkoreit, L.~Jones, A.~N. Gomez, {\L}.~Kaiser, and I.~Polosukhin.
\newblock Attention is all you need.
\newblock In {\em Advances in Neural Information Processing Systems}, volume~30, 2017.

\bibitem{walsh1979occam}
D.~Walsh.
\newblock Occam's razor: A principle of intellectual elegance.
\newblock {\em American Philosophical Quarterly}, 16(3):241--244, 1979.

\bibitem{wright2015coordinate}
S.~J. Wright.
\newblock Coordinate descent algorithms.
\newblock {\em Mathematical programming}, 151(1):3--34, 2015.

\end{thebibliography}
\newpage
\appendix

\section{Strict Occam Improvement} \label{appendix:sectiona}


This appendix provides a formal analysis demonstrating that predictive coding (PC) achieves a strictly better generalization bound—under the PAC-Bayes/MDL framework—than backpropagation (BP), when both are constrained to the same compute budget. While the bound itself is algorithm-independent, we show that PC improves both the empirical risk and the model complexity terms more effectively than BP. The results below quantify this advantage in terms of norm contraction, energy descent, and total Occam bound minimization.

\paragraph{Universal MDL–based generalization bound (algorithm-agnostic).}
For any parameters $\theta\!\in\!\Theta$, data set of size $N$, and
confidence $\delta\!\in\!(0,1)$,
\begin{equation}
  \boxed{%
     R(\theta)
     \;\le\;
     \hat R(\theta)
     \;+\;
     \frac{1}{N}\,L(\theta)
     \;+\;
     \frac{\ln(1/\delta)}{N}}
  \qquad
  L(\theta)=
     \underbrace{-\log p(\theta)}_{L_{\text{model}}(\theta)}.
  \label{eq:mdl}
\end{equation}
Inequality~\eqref{eq:mdl} is the standard PAC-Bayes/Occam result and holds
\emph{independently of the learning rule}.  What differs across algorithms
is the \emph{numerical value} at which the right-hand side is attained.
To rigorously demonstrate a strict Occam improvement for PC over BP, we decompose the bound into its empirical risk and model complexity terms. The following sections analyze how PC outperforms BP on both axes—data-fit and norm control—under fixed assumptions and computational budgets.

\vspace{1ex}
\noindent\textbf{Setting for the analysis.}
\begin{itemize}
  \item Depth-$L$ network with block parameters
        $\theta=(\theta_1,\dots,\theta_L)$.
  \item Isotropic Gaussian weight prior
        $p(\theta)=\prod_{l=1}^{L}\mathcal N(0,\sigma_p^2I)$, so
        $L_{\text{model}}(\theta)=
        \frac{1}{2\sigma_p^{2}}\sum_{l}\|\theta_l\|_2^{2}+\text{const}$.
  \item Empirical risk
        $\hat R(\theta)=\tfrac1N\sum_i \ell(\theta;(x^{(i)},y^{(i)}))$,
        and its \textit{block-coordinate} slices
        $g_l(\theta_l)=
         \hat R(\theta_1,\!\dots,\!\theta_{l-1},\theta_l,\theta_{l+1}^{(0)},\!\dots)$.
  \item \textbf{A1 (Smoothness)}  
        $g_l$ is $L_l$-gradient-Lipschitz in its own block.
  \item \textbf{A2 (Strong convexity)}  
        $g_l$ is $\mu_l$-strongly convex in $\theta_l$.
  \item \textbf{A3 (Depth-attenuated BP gradients)}  
        $\|\nabla_{\theta_l}\hat R(\theta)\|_2\le G\rho^{L-l}$
        for some $G>0$ and $\rho\!\in\!(0,1)$ along the BP trajectory.
\end{itemize}

\textbf{1\quad Depth-independent empirical-risk drop for one PC sweep} \\
We begin by analyzing how PC achieves a strictly better empirical risk reduction per step than BP. The lemma below quantifies the uniform gain across all layers during one exact layerwise PC sweep.

\begin{lemma}[Data-code decrease after one exact PC sweep]
\label{lem:pc-data}
Let $\theta^{(0)}$ be arbitrary and
$\theta^{(1)}_{\mathrm{PC}}$ the parameters obtained by one
\emph{exact} layerwise PC sweep,
\[
  \theta^{(1)}_{\mathrm{PC},\,l}
  \;=\;
  \arg\min_{\theta_l\in\R^{d_l}}
        g_l(\theta_l)
        \;+\;
        \frac{1}{2\sigma_p^2}\norm{\theta_l}_2^2
        \quad(l=1,\dots,L).
\]
Then
\[
  \hat R\!\bigl(\theta^{(1)}_{\mathrm{PC}}\bigr)
  \;\le\;
  \hat R(\theta^{(0)})
  \;-\;
  \sum_{l=1}^{L}
  \frac{1}{2L_l}\,\bigl\|
          \grad_{\theta_l}\hat R(\theta^{(0)})\bigr\|_2^{2}.
\]
\end{lemma}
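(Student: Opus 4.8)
The plan is to prove the claimed decrease one block at a time and then sum. The key observation is that the per-block objective being minimized, namely $h_l(\theta_l) := g_l(\theta_l) + \frac{1}{2\sigma_p^2}\norm{\theta_l}_2^2$, contains the regularizer, but the claimed bound is stated purely in terms of $\hat R$ and the block-Lipschitz constants $L_l$. So I would first isolate exactly what smoothness gives us. Under assumption A1, $g_l$ is $L_l$-gradient-Lipschitz in its own block, which yields the standard descent inequality: for any step from $\theta_l^{(0)}$ to a candidate point $\theta_l$,
\[
  g_l(\theta_l)
  \;\le\;
  g_l(\theta_l^{(0)})
  \;+\;
  \ip{\grad_{\theta_l}\hat R(\theta^{(0)})}{\theta_l-\theta_l^{(0)}}
  \;+\;
  \frac{L_l}{2}\norm{\theta_l-\theta_l^{(0)}}_2^2 .
\]

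Next I would exploit the fact that PC performs an \emph{exact} block minimization. Rather than reasoning about the exact minimizer of $h_l$ directly, I would bound the decrease in $g_l$ by comparing against the single gradient-descent surrogate point $\theta_l^{\mathrm{gd}} = \theta_l^{(0)} - \frac{1}{L_l}\grad_{\theta_l}\hat R(\theta^{(0)})$. Plugging $\theta_l^{\mathrm{gd}}$ into the descent inequality gives the familiar $g_l(\theta_l^{\mathrm{gd}}) \le g_l(\theta_l^{(0)}) - \frac{1}{2L_l}\norm{\grad_{\theta_l}\hat R(\theta^{(0)})}_2^2$. Since the exact PC update minimizes $h_l$, and hence does at least as well on $g_l$ \emph{after accounting for the regularizer}, I would argue $g_l(\theta_l^{(1)}) \le g_l(\theta_l^{\mathrm{gd}})$ up to the regularizer terms, obtaining the per-block data-code decrease $g_l(\theta_l^{(1)}) \le g_l(\theta_l^{(0)}) - \frac{1}{2L_l}\norm{\grad_{\theta_l}\hat R(\theta^{(0)})}_2^2$. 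Summing the telescoping chain over $l=1,\dots,L$, where each block update changes only its own coordinate and the running value of $\hat R$ passes from $g_l$ at one block to the next, yields the stated aggregate bound.

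The main obstacle is reconciling the regularized block minimization with the purely empirical-risk statement of the lemma. Because PC minimizes $h_l = g_l + \frac{1}{2\sigma_p^2}\norm{\cdot}_2^2$ rather than $g_l$ alone, the exact minimizer of $h_l$ need not decrease $g_l$ by as much as a pure gradient step on $g_l$ would; the regularizer can pull the iterate away from the direction that most reduces $\hat R$. To make the clean inequality hold I would either assume the prior variance $\sigma_p^2$ is large enough (weak regularization regime) that the regularizer's gradient contribution is negligible relative to the data gradient, or I would absorb the regularizer term and state the decrease for the \emph{combined} objective $h_l$ and then drop the nonnegative regularizer difference. The cleanest route is to note that the minimizer of $h_l$ satisfies $h_l(\theta_l^{(1)}) \le h_l(\theta_l^{\mathrm{gd}})$, expand both sides, and show the regularizer overhead is controlled; this is the step that requires care, since a naive argument would leave a residual $\frac{1}{2\sigma_p^2}\big(\norm{\theta_l^{(1)}}_2^2 - \norm{\theta_l^{(0)}}_2^2\big)$ term that must be argued nonpositive or otherwise handled.

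Finally, I would assemble the telescoping sum. Writing $\hat R(\theta^{(0)}) = g_1(\theta_1^{(0)})$ and tracking that after updating block $l$ the value $g_l(\theta_l^{(1)})$ coincides with $g_{l+1}(\theta_{l+1}^{(0)})$ evaluated at the updated prefix (since successive block slices share the same objective $\hat R$ restricted along the sweep), the chain of per-block decreases accumulates additively, giving
\[
  \hat R\!\bigl(\theta^{(1)}_{\mathrm{PC}}\bigr)
  \;\le\;
  \hat R(\theta^{(0)})
  \;-\;
  \sum_{l=1}^{L}
  \frac{1}{2L_l}\,\bigl\|\grad_{\theta_l}\hat R(\theta^{(0)})\bigr\|_2^{2},
\]
which is precisely the claim. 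I would remark that the gradients on the right are all evaluated at the common initialization $\theta^{(0)}$, so the bound is genuinely depth-uniform rather than depending on the evolving iterate.
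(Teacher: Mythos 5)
Your skeleton---the descent lemma for $L_l$-smooth $g_l$, comparison of the exact block minimizer against the surrogate gradient step $\theta_l^{\mathrm{gd}}=\theta_l^{(0)}-\frac{1}{L_l}\grad_{\theta_l}\hat R(\theta^{(0)})$, then summation over layers---is essentially the same route the paper takes (the paper instead compares the minimizer $u^\star$ of $h_l(\cdot)=g_l(\cdot)+\frac{1}{2\sigma_p^2}\norm{\cdot}_2^2$ directly against $u=\theta_l^{(0)}$ via the smoothness upper bound and the stationarity condition $\grad g_l(u^\star)+\sigma_p^{-2}u^\star=0$). But the step you flag as ``requiring care'' is a genuine gap, and it cannot be closed: the residual $\frac{1}{2\sigma_p^2}\bigl(\norm{\theta_l^{(1)}}_2^2-\norm{\theta_l^{\mathrm{gd}}}_2^2\bigr)$ has no controllable sign, and in fact the lemma as stated about $\hat R$ alone fails without further hypotheses. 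Take $L=1$, $g(w)=\frac12(w-a)^2$ (so $L_1=1$), and initialize at $\theta^{(0)}=a$. Then $\grad \hat R(\theta^{(0)})=0$, so the claimed bound asserts $\hat R(\theta^{(1)})\le \hat R(\theta^{(0)})$; yet the exact PC update returns the ridge solution $w^\star=a\sigma_p^2/(\sigma_p^2+1)$, for which $\hat R(w^\star)=\frac12\bigl(a/(\sigma_p^2+1)\bigr)^2>0=\hat R(\theta^{(0)})$. The regularizer deliberately trades data fit for shorter codelength, so the only inequality your argument (or the paper's) can deliver is the decrease of the \emph{combined} per-block objective $h_l$---which is exactly where the paper's own proof actually ends: its final display lower-bounds $g_l(u)+\frac{1}{2\sigma_p^2}\norm{u}_2^2-g_l(u^\star)-\frac{1}{2\sigma_p^2}\norm{u^\star}_2^2$, and the closing ``summing yields the claimed bound'' silently replaces this regularized decrease by a decrease of $\hat R$. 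Your instinct to state the decrease for $h_l$ and then drop the regularizer difference is the right fix in spirit, but the nonnegativity you would need is precisely what shrinkage destroys: the exact regularized minimizer typically has \emph{smaller} norm, so the dropped term has the wrong sign.

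A second, smaller issue: your telescoping paragraph asserts the final bound with all gradients evaluated at $\theta^{(0)}$, but a sequential sweep evaluates $\grad_{\theta_l}\hat R$ at the partially updated iterate $(\theta_{<l}^{(1)},\theta_{\ge l}^{(0)})$. The identification of $g_l$ at one block with $g_{l+1}$ at the updated prefix makes the telescope of \emph{function values} work, but it changes the gradient appearing in each per-block term. Moving every gradient back to the common initialization requires either exact separability of $\hat R$ across blocks (as in the factorized-likelihood setting of the main text) or an explicit cross-block smoothness correction. The paper glosses the same point with ``summing the inequality over $l$,'' so your diagnosis of where the proof strains is accurate; a careful writeup should either state the bound with gradients at the running iterate or add the assumption that legitimizes the depth-uniform form.
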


\begin{proof}
Fix a layer $l$ and let
$u=\theta^{(0)}_l$, $u^{\star}=\theta^{(1)}_{\mathrm{PC},\,l}$.
By $L_l$-smoothness of $g_l$,
\(
  g_l(u^{\star})
  \,\le\,
  g_l(u)+\ip{\grad g_l(u)}{u^{\star}-u}
         +\frac{L_l}{2}\norm{u^{\star}-u}_2^{2}.
\)
Because $u^{\star}$ minimizes
$g_l(\cdot)+\frac{1}{2\sigma_p^2}\norm{\cdot}_2^2$, its gradient
w.r.t.\ $\theta_l$ vanishes, i.e.\
$\grad g_l(u^{\star}) + \tfrac{1}{\sigma_p^2}u^{\star}=0$.
Hence
$\ip{\grad g_l(u)}{u^{\star}-u}=
 -\ip{\tfrac{1}{\sigma_p^2}u^{\star}}{u^{\star}-u}$.
Adding the identical prior term on both sides and rearranging,
\[
   g_l(u)+\tfrac{1}{2\sigma_p^2}\norm{u}_2^2
   -
   \Bigl(g_l(u^{\star})+\tfrac{1}{2\sigma_p^2}\norm{u^{\star}}_2^2\Bigr)
   \;\ge\;
   \frac{1}{2L_l}\,
   \norm{\grad g_l(u)}_2^{2}.
\]
Summing the inequality over $l=1\dots L$ yields the claimed bound.
\end{proof}

\textbf{2\quad Model-code contraction for PC vs.\ growth for BP} \\
While Lemma~\ref{lem:pc-data} controls the empirical risk term in \eqref{eq:mdl}, we now compare how each method influences the model complexity. Specifically, we show that PC keeps the norm of the parameter vector significantly smaller than BP under the same number of iterations—thanks to its exact layerwise minimization structure.

\begin{lemma}[PC keeps smaller parameter norm than BP]
\label{lem:pc-norm}
Run:
(a) $T$ exact PC sweeps $\{\theta^{(t)}_{\mathrm{PC}}\}_{t=0}^{T}$, and
(b) $T$ full-batch gradient steps
      $\theta^{(t+1)}_{\mathrm{BP}}
       =\theta^{(t)}_{\mathrm{BP}}
        -\eta\,\grad\hat R(\theta^{(t)}_{\mathrm{BP}})$
with $0<\eta\le\min_l 1/L_l$.  
Assume \textbf{A3} for the BP trajectory.  
Then for every $T\ge1$
\[
   L_{\textup{model}}\!\bigl(\theta^{(T)}_{\mathrm{PC}}\bigr)
   \;\le\;
   L_{\textup{model}}\!\bigl(\theta^{(T)}_{\mathrm{BP}}\bigr)
   \;-\;
   \frac{\eta^2G^2}{2\sigma_p^{2}}
   \Bigl(\sum_{t=0}^{T-1}\rho^{L-l^\dagger}\Bigr)^{\!2},
   \qquad
   l^\dagger:=\arg\max_{l}\rho^{L-l}.
\]
\end{lemma}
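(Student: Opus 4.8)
The plan is to reduce the assertion to a comparison of squared parameter norms and then track how each algorithm displaces the iterate relative to the shared starting point \(\theta^{(0)}\). Under the isotropic Gaussian prior we have \(L_{\textup{model}}(\theta)=\frac{1}{2\sigma_p^2}\sum_{l}\|\theta_l\|_2^2+\text{const}\), so the claimed inequality is equivalent to the norm gap
\[
   \sum_{l}\|\theta^{(T)}_{\mathrm{BP},l}\|_2^2
   \;-\;
   \sum_{l}\|\theta^{(T)}_{\mathrm{PC},l}\|_2^2
   \;\ge\;
   \eta^2 G^2\Bigl(\sum_{t=0}^{T-1}\rho^{L-l^\dagger}\Bigr)^{\!2},
\]
the additive constant cancelling. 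Since both runs are initialized at the same \(\theta^{(0)}\), I would work with the block displacements \(\theta^{(T)}_\bullet-\theta^{(0)}\) and focus on the dominant layer \(l^\dagger=\arg\max_l\rho^{L-l}\), because it is this single block that is intended to generate the whole quadratic gap.

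First I would handle the PC trajectory via the first-order optimality of each exact ridge-regularized block update. The minimizer of \(g_l(\theta_l)+\frac{1}{2\sigma_p^2}\|\theta_l\|_2^2\) satisfies \(\grad g_l(\theta_l^\star)+\frac{1}{\sigma_p^2}\theta_l^\star=0\), i.e.\ the shrinkage identity \(\theta_l^\star=-\sigma_p^2\,\grad g_l(\theta_l^\star)\). Combining this with the smoothness/strong-convexity assumptions A1--A2, the prior term actively contracts each PC block toward the origin, so that across a full sweep the PC norm does not grow: \(\sum_l\|\theta^{(T)}_{\mathrm{PC},l}\|_2^2\le\sum_l\|\theta^{(0)}_l\|_2^2\). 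This gives the controlled (upper) side of the comparison.

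Next I would treat the BP trajectory. Telescoping the gradient recursion gives \(\theta^{(T)}_{\mathrm{BP}}=\theta^{(0)}-\eta\sum_{t=0}^{T-1}\grad\hat R(\theta^{(t)}_{\mathrm{BP}})\), so the net displacement in block \(l^\dagger\) equals \(\eta\) times the accumulated block gradient. Projecting this displacement onto the fixed unit vector of the accumulated gradient, and feeding in the per-step magnitude \(G\rho^{L-l^\dagger}\) supplied by assumption A3, I would argue that BP drives \(\|\theta^{(T)}_{\mathrm{BP},l^\dagger}\|_2\) away from the origin by \(\eta G\sum_{t=0}^{T-1}\rho^{L-l^\dagger}\); squaring and subtracting the non-growing PC norm then produces exactly the stated gap.

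The hard part will be precisely this lower bound on BP's norm growth. Assumption A3 bounds each per-step gradient \emph{from above}, whereas the conclusion requires the accumulated displacement to be \emph{large}, i.e.\ a lower bound; the naive triangle inequality therefore points the wrong way, since \(\|\sum_t\grad_{l^\dagger}\hat R\|_2\le\sum_t G\rho^{L-l^\dagger}\) controls the displacement from above rather than below. To close this gap I must rule out directional cancellation across the \(T\) steps—effectively treating the dominant-layer gradients as saturating the A3 bound and remaining coherently aligned (a monotone-progress property of BP in block \(l^\dagger\)), so that the projected scalar increments add constructively. Making this non-cancellation/saturation argument rigorous is the crux; once it is in place, the remainder is routine algebra on the Gaussian-prior codelength and the PC optimality condition.
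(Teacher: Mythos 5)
Your attempt follows essentially the same route as the paper's own proof: reduce the claim to a comparison of squared norms under the Gaussian prior, show that PC's norm does not grow because each exact ridge-regularized block solve contracts (the paper invokes the strong-convexity contraction of Beck--Tetruashvili toward each block minimizer; your shrinkage identity $\theta_l^\star=-\sigma_p^2\,\grad g_l(\theta_l^\star)$ is an equivalent way to the same non-growth conclusion), and then try to show that BP's norm grows by at least $\eta G\sum_{t}\rho^{L-l^\dagger}$ in the dominant block. The gap you flag is genuine, and it is precisely the step at which the paper's own proof fails. The paper expands $\|\theta^{(t+1)}_{\mathrm{BP}}\|_2^2=\|\theta^{(t)}_{\mathrm{BP}}\|_2^2+\eta^2\|\grad\hat R\|_2^2-2\eta\langle\theta^{(t)}_{\mathrm{BP}},\grad\hat R\rangle$, bounds the cross term via Cauchy--Schwarz and A3, and derives the inductive \emph{upper} bound $\|\theta^{(t)}_{\mathrm{BP}}\|_2\le\|\theta^{(0)}\|_2+t\eta G\rho^{L-l^\dagger}$; it then asserts that this ``implies'' the \emph{lower} bound $\|\theta^{(T)}_{\mathrm{BP}}\|_2^2\ge\|\theta^{(0)}\|_2^2+\eta^2G^2\bigl(\sum_{t=0}^{T-1}\rho^{L-l^\dagger}\bigr)^2$. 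That is exactly the sign flip you declined to commit: A3 caps per-step gradient magnitudes from above, whereas the conclusion needs the accumulated displacement bounded from below, and nothing in A1--A3 excludes directional cancellation---or even norm \emph{shrinkage}, since whenever $\langle\theta^{(t)}_{\mathrm{BP}},\grad\hat R(\theta^{(t)}_{\mathrm{BP}})\rangle>\tfrac{\eta}{2}\|\grad\hat R(\theta^{(t)}_{\mathrm{BP}})\|_2^2$ a gradient step strictly decreases the norm.

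So your proposal is not missing an idea that the paper supplies; the lemma as stated is not derivable from A1--A3 alone, and your ``non-cancellation/saturation'' property is an honest statement of what the paper silently assumes. To repair the argument one would need an additional hypothesis, e.g.\ a matching gradient \emph{lower} bound $\|\grad_{l^\dagger}\hat R(\theta^{(t)}_{\mathrm{BP}})\|_2\ge g\,\rho^{L-l^\dagger}$ together with coherent alignment of the per-step directions, or the trajectory condition $\langle\theta^{(t)}_{\mathrm{BP}},\grad\hat R(\theta^{(t)}_{\mathrm{BP}})\rangle\le 0$ for all $t$, under which the squared-norm recursion immediately gives $\|\theta^{(T)}_{\mathrm{BP}}\|_2^2\ge\|\theta^{(0)}\|_2^2+\eta^2\sum_{t=0}^{T-1}\|\grad\hat R(\theta^{(t)}_{\mathrm{BP}})\|_2^2$. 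Note that even then the paper's specific constant, with $\bigl(\sum_t\rho^{L-l^\dagger}\bigr)^2$ rather than $\sum_t\rho^{2(L-l^\dagger)}$, requires exactly the cross-step coherence you describe, not merely per-step growth---so your diagnosis identifies the full extent of what would have to be added.
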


\begin{proof}
\emph{PC path.}  
At each sub-problem the objective optimized is
$g_l(\theta_l)+\frac{1}{2\sigma_p^2}\norm{\theta_l}_2^2$,
which is $\mu_l$-strongly convex.
Denote its minimizer by $\theta_l^{\star}$.
Because strong convexity implies a contraction for exact
block minimization (see, e.g.,\ Beck \& Tetruashvili, 2013, Lem.\,2.1),
\[
  \norm{\theta_{l}^{(t+1)}-\theta_l^{\star}}_2^{2}
  \;\le\;
  \Bigl(1-\frac{\mu_l}{L_l}\Bigr)^{2}
  \norm{\theta_{l}^{(t)}-\theta_l^{\star}}_2^{2}.
\]
Iterating over $T$ sweeps yields an exponential decay towards
$\theta_l^{\star}$, hence
$L_{\text{model}}\bigl(\theta^{(T)}_{\mathrm{PC}}\bigr)
 \le L_{\text{model}}\bigl(\theta^{(0)}\bigr)$.

\smallskip
\emph{BP path.}  
For a single gradient step with step size
$\eta\le1/L_l$ one has the descent lemma
$\hat R(\theta^{(t+1)}_{\mathrm{BP}})
 \le \hat R(\theta^{(t)}_{\mathrm{BP}})
     -\frac{\eta}{2}\norm{\grad\hat R(\theta^{(t)}_{\mathrm{BP}})}_2^{2}$.
Yet the update
$\theta^{(t+1)}_{\mathrm{BP}}-\theta^{(t)}_{\mathrm{BP}}
 =-\eta\grad\hat R(\theta^{(t)}_{\mathrm{BP}})$
\emph{adds} energy to the Gaussian prior:
\[
  \norm{\theta^{(t+1)}_{\mathrm{BP}}}_2^{2}
  =\norm{\theta^{(t)}_{\mathrm{BP}}}_2^{2}
    +\eta^2\norm{\grad\hat R(\theta^{(t)}_{\mathrm{BP}})}_2^{2}
    -2\eta\ip{\theta^{(t)}_{\mathrm{BP}}}{\grad\hat R}.
\]
The mixed term may have either sign, but using Cauchy
and \textbf{A3},
\(\bigl|\ip{\theta^{(t)}_{\mathrm{BP}}}{\grad\hat R}\bigr|
  \le\norm{\theta^{(t)}_{\mathrm{BP}}}_2\,G\rho^{L-l^\dagger}.
\)
Inductively,
$\norm{\theta^{(t)}_{\mathrm{BP}}}_2\le
 \norm{\theta^{(0)}}_2 + t\eta G\rho^{L-l^\dagger}$,
which after $T$ steps implies
\[
  \norm{\theta^{(T)}_{\mathrm{BP}}}_2^{2}
  \;\ge\;
  \norm{\theta^{(0)}}_2^{2}
  \;+\;
  \eta^2G^2
  \Bigl(\sum_{t=0}^{T-1}\rho^{L-l^\dagger}\Bigr)^{2}.
\]
Multiplying by $(2\sigma_p^2)^{-1}$ converts the lower-bound
to the claimed excess in $L_{\text{model}}$.
\end{proof}

\begin{remark}
Condition \textbf{A3} merely formalizes empirical depth
attenuation.  If gradients were not depth-attenuated, BP could in
principle keep weight norms small, but \textbf{A3} reflects
common practice in very deep ReLU nets where explicit norm control
is absent.
\end{remark}

\textbf{3\quad Occam gap \emph{under a fixed compute budget}} \\

Having established that PC lowers both empirical risk and model complexity per step, we now turn to the overall PAC-Bayes bound under a finite compute budget. Because a single full pass of BP and a single layerwise PC sweep cost the same, we compare the bounds under equal computational cost.

\paragraph{Compute-budget model.}
Let $\mathcal C_{\text{layer}}$ denote the cost (FLOPs or wall-time) of one
forward–backward evaluation of a single layer; a full pass through the
network costs
$\mathcal C_{\text{glob}}:=L\,\mathcal C_{\text{layer}}$.
\[
\text{Cost of one full-batch BP pass}
      =\mathcal C_{\text{glob}}
      =\text{Cost of one \emph{exact} PC sweep},
\]
because both visit every layer once.  Fix a total budget
$\mathcal B=k\,\mathcal C_{\text{glob}}$ for some integer $k\ge1$.
Within this budget we can perform \emph{either} $k$ BP passes
\emph{or} $k$ layerwise PC sweeps.

The next theorem synthesizes Lemmas~\ref{lem:pc-data} and~\ref{lem:pc-norm} under this shared budget.

\begin{theorem}[Strict Occam improvement within a fixed budget]
\label{thm:main-budget}
Run $k$ BP passes to obtain $\theta^{(k)}_{\mathrm{BP}}$ and—using the
same budget $\mathcal B$—run $k$ exact layerwise PC sweeps to obtain
$\theta^{(k)}_{\mathrm{PC}}$.  Under {\normalfont\textbf{A1–A3}}, with
probability at least $1-\delta$ over the training draw,
\begin{equation}
  R\!\bigl(\theta^{(k)}_{\mathrm{PC}}\bigr)
  \;\le\;
  R\!\bigl(\theta^{(k)}_{\mathrm{BP}}\bigr)
  \;-\;
  \frac{%
        \displaystyle
        \sum_{t=0}^{k-1}\!
        \Bigl[
           \sum_{l=1}^{L}
             \frac{\|\nabla_{\theta_l}\hat R(\theta^{(t)}_{\mathrm{PC}})\|_2^{2}}
                  {2L_l}
           \;+\;
           \frac{\eta_t^{\,2} G^{2}}
                {2\sigma_p^{2}}\,
           \rho^{\,2(L-l^\dagger)}
        \Bigr]}%
        {N},
  \label{eq:budget-gap}
\end{equation}
where $l^\dagger=\arg\max_{l}\rho^{L-l}$
and $\eta_t$ is the BP step size used at pass $t$.
Thus, \emph{for every finite compute budget $\mathcal B$,
layerwise PC achieves a numerically tighter PAC-Bayes/MDL bound
than BP.}
\end{theorem}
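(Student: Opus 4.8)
The plan is to certify $\theta^{(k)}_{\mathrm{PC}}$ through the universal Occam bound \eqref{eq:mdl} and then argue that each of the two terms on its right-hand side — the empirical risk $\hat R$ and the model codelength $L_{\text{model}}$ — lies below the corresponding BP term by exactly the accumulated amount appearing in \eqref{eq:budget-gap}. The compute-budget model makes this comparison legitimate at matched cost: one exact PC sweep and one full BP pass both cost $\mathcal C_{\text{glob}}$, so under $\mathcal B=k\,\mathcal C_{\text{glob}}$ both trajectories run for the same number $k$ of outer iterations, and I can compare the two endpoints term by term.

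First I would control the data-fit term. Rather than invoking Lemma~\ref{lem:pc-data} once, I would apply it to each consecutive pair $(\theta^{(t)}_{\mathrm{PC}},\theta^{(t+1)}_{\mathrm{PC}})$ and telescope over $t=0,\dots,k-1$, giving
\[
\hat R\bigl(\theta^{(k)}_{\mathrm{PC}}\bigr)
\;\le\;
\hat R\bigl(\theta^{(0)}\bigr)
-\sum_{t=0}^{k-1}\sum_{l=1}^{L}
\frac{\bigl\|\grad_{\theta_l}\hat R(\theta^{(t)}_{\mathrm{PC}})\bigr\|_2^{2}}{2L_l}.
\]
Here A1 supplies the per-block smoothness constants $L_l$ and A2 guarantees each inner sub-problem has the unique minimizer used by the exact sweep, so the telescoped sum is precisely the first summand of the gap in \eqref{eq:budget-gap}.

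Next I would control the complexity term. Running Lemma~\ref{lem:pc-norm} for $T=k$ iterations yields a \emph{direct} PC-versus-BP comparison,
\[
\frac1N L_{\text{model}}\bigl(\theta^{(k)}_{\mathrm{PC}}\bigr)
\;\le\;
\frac1N L_{\text{model}}\bigl(\theta^{(k)}_{\mathrm{BP}}\bigr)
-\frac{\eta^{2}G^{2}}{2N\sigma_p^{2}}
\Bigl(\sum_{t=0}^{k-1}\rho^{\,L-l^\dagger}\Bigr)^{2},
\]
using A3 for the depth-attenuated BP gradients. Because every summand $\rho^{L-l^\dagger}$ is nonnegative, $\bigl(\sum_t\rho^{L-l^\dagger}\bigr)^2\ge\sum_t\rho^{2(L-l^\dagger)}$, so the contraction certified by the lemma dominates the weaker sum-of-squares quantity that figures as the second summand of \eqref{eq:budget-gap}; downgrading the former to the latter only loosens the inequality in the safe direction. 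The fact that Lemma~\ref{lem:pc-norm} is stated for a fixed $\eta$ is a minor point: re-running its induction with step-dependent $\eta_t$ (replacing $t\eta G\rho^{L-l^\dagger}$ by $G\sum_{s<t}\eta_s\rho^{L-l^\dagger}$) is routine and recovers the $\eta_t$-indexed form.

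The hard part will be stitching the two pieces into a comparison against BP rather than against the initialization. The complexity term already compares the two endpoints directly, but the telescoped data-fit display bounds $\hat R(\theta^{(k)}_{\mathrm{PC}})$ against $\hat R(\theta^{(0)})$, \emph{not} against $\hat R(\theta^{(k)}_{\mathrm{BP}})$; since BP also lowers empirical risk, the clean decomposition $\hat R(\theta^{(k)}_{\mathrm{BP}})-\hat R(\theta^{(k)}_{\mathrm{PC}})\ge$ (first gap term) does not follow automatically. Closing this requires the auxiliary claim that one exact layerwise sweep decreases $\hat R$ by at least as much as one full BP gradient step with $\eta_t\le\min_l 1/L_l$ — true because exact block minimization dominates a single gradient move in each block — which I would prove per iteration and then chain. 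A second, more structural caveat is that \eqref{eq:mdl} is a one-sided certificate: it upper-bounds every true risk but never lower-bounds $R(\theta^{(k)}_{\mathrm{BP}})$. I would therefore state the conclusion at the level of the two algorithms' Occam certificates (the quantities actually minimized), noting that $R(\theta^{(k)}_{\mathrm{PC}})$ is dominated by PC's certificate, which in turn sits below BP's certificate by the stated gap; the gap is strictly positive in the non-degenerate regime where some block gradient $\grad_{\theta_l}\hat R(\theta^{(t)}_{\mathrm{PC}})$ is nonzero or the BP weight norm grows. Writing the final inequality with $R(\theta^{(k)}_{\mathrm{BP}})$ itself on the right is then the special case in which the BP certificate is tight.
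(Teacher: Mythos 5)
Your proposal follows the same route as the paper's own proof---apply Lemma~\ref{lem:pc-data} and Lemma~\ref{lem:pc-norm} across the $k$ sweeps and substitute into the universal bound \eqref{eq:mdl}---but you execute it more carefully, and the two caveats you flag are not pedantry: they are genuine holes in the paper's three-line argument. The paper's proof says only ``apply the two lemmas to each sweep, sum the decreases, substitute into \eqref{eq:mdl}; every bracketed term is strictly positive.'' That establishes $\hat R(\theta^{(k)}_{\mathrm{PC}})\le\hat R(\theta^{(0)})-(\text{first gap term})$, i.e.\ a decrease anchored at the \emph{initialization}, and it never bridges to a comparison against $\hat R(\theta^{(k)}_{\mathrm{BP}})$, which BP is also lowering; nor does it address that \eqref{eq:mdl} is one-sided, so an upper bound on PC's certificate can be compared with BP's \emph{certificate} but not with the true risk $R(\theta^{(k)}_{\mathrm{BP}})$ that the theorem nevertheless places on the right-hand side of \eqref{eq:budget-gap}. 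Your certificate-level restatement is the defensible form of the claim. You also correctly patch two details the paper silently elides: Lemma~\ref{lem:pc-norm} delivers the squared sum $\bigl(\sum_{t<k}\rho^{L-l^\dagger}\bigr)^{2}=k^{2}\rho^{2(L-l^\dagger)}$, which dominates the sum of squares $\sum_{t<k}\rho^{2(L-l^\dagger)}$ appearing in \eqref{eq:budget-gap}, and its fixed-$\eta$ induction does need the $\eta_t$-indexed rerun you describe.

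One caution on your proposed repair of the data-fit comparison: the claim that one exact layerwise sweep decreases $\hat R$ by at least as much as one BP step with $\eta_t\le\min_l 1/L_l$ is not term-by-term immediate. The cyclic sweep's per-block decrease is certified with gradients evaluated at \emph{intermediate} iterates, whereas the BP descent-lemma guarantee $\tfrac{\eta_t}{2}\|\nabla\hat R(\theta^{(t)})\|_2^{2}$ uses the gradient at the start of the pass; the scalar comparison $\tfrac{1}{2L_l}\ge\tfrac{\eta_t}{2}$ holds blockwise, but the gradient norms being compared live at different points, so the dominance needs the strong-convexity machinery of A2 (contraction of exact block minimization) rather than a one-line argument. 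Relatedly, note that the PC sweep exactly minimizes $g_l(\cdot)+\tfrac{1}{2\sigma_p^{2}}\|\cdot\|_2^{2}$, not $g_l$ alone, so when you chain iterations you should track descent of the \emph{regularized} block objective and only at the end split it into the $\hat R$ and $L_{\text{model}}$ terms of \eqref{eq:mdl}---Lemma~\ref{lem:pc-data} itself performs exactly this bookkeeping for a single sweep. With that auxiliary lemma actually proved, your write-up would be strictly more complete than the proof the paper gives.
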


\begin{proof}
Apply Lemma \ref{lem:pc-data} (empirical-risk drop) and
Lemma \ref{lem:pc-norm} (model-code control) to each of the $k$ PC sweeps,
sum the decreases, and substitute the total into
\eqref{eq:mdl}.  Because one BP pass and one PC sweep consume the same
$\mathcal C_{\text{glob}}$, both sequences of $k$ updates respect the
budget $\mathcal B$.  Every bracketed term in
\eqref{eq:budget-gap} is strictly positive, giving the stated gap.
\end{proof}

\begin{remark}[Bound value versus training speed]
Theorem \ref{thm:main-budget} compares the \emph{values of the PAC-Bayes
right-hand side} under an identical FLOP/wall-time budget; it does
\emph{not} merely assert that PC converges faster to the same bound.
\end{remark}

\section{Synthesis - PC as an MDL-Consistent Learner} \label{appendix:sectionb}

Our theoretical analysis positions PC as a principled alternative to BP, grounded in the minimization of an empirical MDL-style objective:
\[
\hat{C}(\theta) = \hat{R}(\theta) + \frac{1}{N}L(\theta),
\]
which balances data fit with model complexity. Each full PC sweep performs block-coordinate descent on \(\hat{C}(\theta)\), and under mild assumptions, the sequence of updates converges to a blockwise stationary point. This ensures that the MDL cost of the PC solution is \emph{no worse} than the starting point (e.g., a BP-trained model).

Importantly, PC and BP optimize fundamentally different objectives: BP focuses solely on minimizing empirical risk, whereas PC implicitly trades off between compression and fit. While our theory guarantees only non-increasing codelength, empirical simulations frequently show that PC achieves a \emph{strict reduction} in \(\hat{C}(\theta)\), often resulting in better generalization under limited data or supervision.

\subsection{Illustrative Example: PC Minimizes MDL in Scalar Regression} \label{appendix:scalar-regression}

To provide a concrete instantiation of the theory developed so far, we analyze a scalar Bayesian linear regression problem through the lens of PC. This toy example reveals, in closed form, how PC performs coordinate descent on an MDL-style objective and converges to the MAP estimate. Each component—value nodes, error signals, update steps—mirrors the energy-based formalism described earlier, demonstrating how PC aligns with description length minimization in the simplest setting.

\paragraph{Problem Setup.} We assume the data is generated according to a standard linear model with additive Gaussian noise:
\[
y = w\,x + \varepsilon, \qquad \varepsilon \sim \mathcal{N}(0, \sigma^2),
\]
where \(x, y \in \mathbb{R}\) are scalar input-output pairs, \(w \in \mathbb{R}\) is an unknown parameter, and \(\sigma^2\) is known.

To induce regularization, we place a zero-mean Gaussian prior on the weight:
\[
p(w) \propto \exp\left(-\frac{\alpha}{2}w^2\right),
\]
with corresponding code-length (negative log-prior) given by:
\[
-\log p(w) = \frac{\alpha}{2} w^2 + \text{const.}
\]

\paragraph{PCN Construction.}
We now reinterpret this Bayesian model as a two-layer PCN. The network consists of:
\begin{itemize}
\item A latent \emph{value node} \(v\), which internally estimates the target output,
\item Two \emph{error nodes} capturing mismatch at different levels:
  \[
  e_y = y - v \qquad \text{(data error)}, \quad e_v = v - w\,x \qquad \text{(prediction error)}.
  \]
\end{itemize}

The total energy minimized by PC is:
\[
E(v, w) =
\underbrace{\frac{1}{2\sigma^2}(y - v)^2}_{\text{data fit}}
+ \underbrace{\frac{1}{2\sigma^2}(v - w\,x)^2}_{\text{predictive fit}}
+ \underbrace{\frac{\alpha}{2} w^2}_{\text{model complexity}}.
\]
This energy functional exactly corresponds to the \emph{negative log of the posterior distribution}, up to additive constants. Minimizing \(E(v, w)\) thus performs MAP estimation under the prior—a direct manifestation of MDL optimization.

\paragraph{PC Dynamics as Coordinate Descent.}
We now show that a PC iteration—first inferring latent activity \(v\), then updating weight \(w\)—performs coordinate descent on \(E(v, w)\). This aligns exactly with the energy descent framework from earlier sections.

\begin{theorem}[PC as Coordinate Descent on MDL Energy]
One full iteration of PC (inference on \(v\), followed by learning step on \(w\)) satisfies:
\[
E(v^+, w^+) \le E(v, w),
\]
i.e., PC performs blockwise descent on the MDL objective.
\end{theorem}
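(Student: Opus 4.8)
The plan is to establish the two-block descent inequality by analyzing each half-step of the PC iteration separately and chaining the results. The energy $E(v,w)$ is quadratic and jointly convex in $(v,w)$, so each coordinate subproblem is a strictly convex scalar quadratic admitting a unique closed-form minimizer. The key structural fact I would exploit is that one full PC iteration is exactly a two-block Gauss–Seidel (cyclic coordinate descent) pass: first minimize over the latent $v$ with $w$ held fixed, then minimize (or take a descent step) over $w$ with the updated $v$ held fixed.

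First I would treat the inference half-step. Holding $w$ fixed, I minimize $E(\cdot,w)$ over $v$. Setting $\partial E/\partial v = 0$ gives $-\tfrac{1}{\sigma^2}(y-v) + \tfrac{1}{\sigma^2}(v - wx) = 0$, whose solution is $v^+ = \tfrac12(y + wx)$. Since $v \mapsto E(v,w)$ is a strictly convex quadratic, this exact minimizer yields
\[
E(v^+, w) \le E(v, w),
\]
with equality only at the already-optimal $v$. Second I would treat the learning half-step. Holding $v^+$ fixed, I minimize (or descend) $E(v^+,\cdot)$ over $w$; setting $\partial E/\partial w = 0$ gives $-\tfrac{1}{\sigma^2}x(v^+ - wx) + \alpha w = 0$, so $w^+ = \tfrac{x\, v^+}{x^2 + \alpha\sigma^2}$. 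Again by strict convexity of $w \mapsto E(v^+, w)$, the exact update gives
\[
E(v^+, w^+) \le E(v^+, w).
\]
Chaining the two inequalities yields $E(v^+, w^+) \le E(v^+, w) \le E(v, w)$, which is the claim.

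The remaining step is to reconcile the PC update rules as stated in the background section (gradient-style $\Delta x_l$ and $\Delta\theta_l$ with learning rate $\eta$) with the exact-minimization argument above. For the convex quadratic energy, a single gradient step with step size matched to the curvature recovers the exact minimizer; more generally, any step size $\eta$ below the reciprocal of the block Lipschitz constant guarantees descent via the standard descent lemma. I would state the theorem under exact block minimization (consistent with assumption A3 in Theorem~\ref{thm:pc-converge}) so the two inequalities hold without qualification, and remark that the gradient-based updates satisfy the same monotonicity for sufficiently small $\eta$.

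The main obstacle I anticipate is not the convexity argument—which is routine for this quadratic—but rather pinning down precisely which update convention (exact minimizer versus fixed-$\eta$ gradient step) is intended, since the descent guarantee for $w$ requires either exact minimization or a step-size condition tied to the block curvature $x^2 + \alpha\sigma^2$ scaled by $1/\sigma^2$. Once that convention is fixed, the proof is a direct two-line application of coordinate-wise convex minimization, and it serves as the scalar special case of the general block-coordinate descent established in Theorem~\ref{thm:pc-converge}.
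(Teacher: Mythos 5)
Your proposal is correct and follows essentially the same route as the paper's proof: exact coordinate-wise minimization over $v$ (yielding $v^+=\tfrac12(y+wx)$), then over $w$ (yielding $w^+=\tfrac{x\,v^+}{x^2+\alpha\sigma^2}$), with the two descent inequalities chained. Your added remark reconciling the gradient-style PC updates with exact block minimization via a step-size condition is a reasonable clarification the paper leaves implicit, but it does not constitute a different approach.
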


\begin{proof}\;\\
\textbf{(i) Inference Step (Update \(v\))}.  
Fixing \(w\), the energy is minimized with respect to \(v\) by solving:
\[
\frac{\partial E}{\partial v} = -\frac{1}{\sigma^2}(y - v) + \frac{1}{\sigma^2}(v - w\,x) = 0.
\]
Solving yields:
\[
v^* = \frac{y + w\,x}{2}.
\]
Since the energy is quadratic in \(v\), this is a global minimum: \(E(v^*, w) \le E(v, w)\).

\smallskip
\textbf{(ii) Learning Step (Update \(w\))}.  
Fixing \(v = v^*\), we update \(w\) by minimizing:
\[
\frac{\partial E}{\partial w} = -\frac{1}{\sigma^2} x (v^* - w\,x) + \alpha w = 0,
\]
which gives:
\[
w^* = \frac{x\,v^*}{x^2 + \alpha\,\sigma^2}.
\]
Again, this step reduces energy: \(E(v^*, w^*) \le E(v^*, w)\).

\smallskip
\textbf{(iii) Overall Descent.}  
\[
E(v^*, w^*) \le E(v^*, w) \le E(v, w).
\]
Thus, a full PC iteration reduces the joint MDL objective.
\end{proof}

\paragraph{Convergence to MAP (MDL) Solution.}
The energy \(E(v, w)\) is strictly convex and bounded below. As PC iteratively reduces \(E\), the updates converge to a fixed point \((v^\infty, w^\infty)\) that satisfies:
\[
v^\infty = \frac{y + w^\infty x}{2}, \qquad
w^\infty = \frac{x\,v^\infty}{x^2 + \alpha\,\sigma^2}.
\]
These jointly solve the Euler–Lagrange optimality conditions for the MAP estimate. In other words, PC finds the globally optimal compression of the data \(y\) given model class \(w\)—exactly in line with the two-part MDL objective:
\[
\text{Total Codelength} = \underbrace{-\log p(w)}_{\text{model}} + \underbrace{-\log p(y \mid x, w)}_{\text{data}}.
\]

This scalar case provides an analytically transparent example that PC performs local MDL optimization through energy descent, echoing the general theoretical results established earlier for deep networks.

\subsection{Extension: Vector-Valued Bayesian Regression via Predictive Coding} \label{appendix:vector-regression}

We now generalize the scalar PC example to the vector-valued setting. This allows us to model linear relationships from \( \mathbb{R}^d \to \mathbb{R}^m \) and show that PC still performs coordinate descent on an MDL objective—now over matrices and vectors. This extension reveals that our theoretical framework scales naturally beyond toy cases and remains interpretable even in higher dimensions.

\paragraph{Model Setup.}
Let \( x \in \mathbb{R}^d \), \( y \in \mathbb{R}^m \), and assume a Bayesian linear regression model of the form:
\[
y = W x + \varepsilon, \qquad \varepsilon \sim \mathcal{N}(0, \sigma^2 I_m),
\]
where \( W \in \mathbb{R}^{m \times d} \) is the unknown weight matrix and \( \varepsilon \) is isotropic Gaussian noise. We place a zero-mean Gaussian prior on \(W\):
\[
p(W) \propto \exp\left(-\frac{\alpha}{2} \|W\|_F^2 \right),
\]
so that the model complexity term (negative log-prior) is:
\[
-\log p(W) = \frac{\alpha}{2} \|W\|_F^2 + \text{const}.
\]

\paragraph{Predictive Coding Network Construction.}
We introduce a vector-valued latent variable \( v \in \mathbb{R}^m \) to represent the network's internal prediction of \( y \). The corresponding prediction errors are:
\[
e_y = y - v \qquad \text{(output error)}, \quad e_v = v - W x \qquad \text{(prediction error)}.
\]

The total energy function minimized by the PCN is:
\[
E(v, W) = 
\underbrace{\frac{1}{2\sigma^2} \| y - v \|^2}_{\text{data fit}} + 
\underbrace{\frac{1}{2\sigma^2} \| v - W x \|^2}_{\text{prediction error}} + 
\underbrace{\frac{\alpha}{2} \| W \|_F^2}_{\text{model complexity}}.
\]
This again corresponds to the negative log-posterior (i.e., the two-part MDL code) up to constants.

\paragraph{Coordinate Descent via PC Updates.}
We now show that a PC sweep—first updating latent predictions \( v \), then weights \( W \)—performs coordinate descent on \( E(v, W) \).

\begin{theorem}[Vector-Valued PC Minimizes MDL Energy]
One PC iteration satisfies:
\[
E(v^+, W^+) \le E(v, W),
\]
i.e., PC continues to perform blockwise descent on the total codelength.
\end{theorem}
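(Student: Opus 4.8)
The plan is to mirror the scalar argument from the preceding theorem exactly, exploiting that $E(v,W)$ is a sum of quadratic terms and is separately strongly convex in each block. I would first establish the structural fact that underwrites the whole proof: holding $W$ fixed, $E(\cdot, W)$ is a quadratic in $v$ with Hessian $\frac{2}{\sigma^2} I_m \succ 0$, and holding $v$ fixed, $E(v, \cdot)$ is a quadratic in $W$ whose Hessian (in vectorized form) is $\alpha I_{md} + \frac{1}{\sigma^2}(xx^\top \otimes I_m) \succ 0$. Positive definiteness of both Hessians guarantees that each single-block subproblem has a unique global minimizer, so that exact block minimization cannot increase the energy.

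For the inference step I would fix $W$ and set $\nabla_v E = -\frac{1}{\sigma^2}(y - v) + \frac{1}{\sigma^2}(v - Wx) = 0$, yielding the closed form $v^* = \tfrac{1}{2}(y + Wx)$; because the $v$-block is strictly convex, $E(v^*, W) \le E(v, W)$. For the learning step I would fix $v = v^*$ and set $\nabla_W E = -\frac{1}{\sigma^2}(v^* - Wx)x^\top + \alpha W = 0$, which rearranges to the normal equation $W(\alpha I_d + \frac{1}{\sigma^2}xx^\top) = \frac{1}{\sigma^2} v^* x^\top$ and hence $W^* = v^* x^\top (\alpha \sigma^2 I_d + xx^\top)^{-1}$; strict convexity of the $W$-block again gives $E(v^*, W^*) \le E(v^*, W)$. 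Chaining the two inequalities, $E(v^+, W^+) = E(v^*, W^*) \le E(v^*, W) \le E(v, W)$, which is the claim.

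The only point requiring genuine care—and the closest thing to an obstacle—is the invertibility of $\alpha \sigma^2 I_d + xx^\top$ in the learning step. For a single input $x$, the outer product $xx^\top$ is rank one, so the prediction-error term alone leaves the weight underdetermined; it is precisely the Gaussian prior contribution $\alpha > 0$ that lifts the Hessian to a strictly positive definite matrix and makes the minimizer both unique and well defined. I would therefore flag $\alpha > 0$ as the structural ingredient that converts the otherwise degenerate least-squares step into a strongly convex subproblem, exactly as the complexity term $\frac{\alpha}{2}\|W\|_F^2$ is meant to do in the MDL reading of the energy. Everything else reduces to routine quadratic minimization, and the descent conclusion in fact holds even without the closed forms, since block convexity plus coercivity alone guarantees that each exact block update is nonincreasing in $E$.
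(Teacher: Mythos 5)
Your proposal is correct and follows essentially the same route as the paper's proof: exact block minimization in $v$ (giving $v^* = \tfrac{1}{2}(y + Wx)$), then in $W$ (giving $W^* = v^* x^\top (xx^\top + \alpha\sigma^2 I_d)^{-1}$), with the descent inequality obtained by chaining the two block updates. Your added observations---the explicit Hessians $\tfrac{2}{\sigma^2} I_m$ and $\alpha I_{md} + \tfrac{1}{\sigma^2}(xx^\top \otimes I_m)$, and the remark that $\alpha > 0$ is what makes the rank-one least-squares subproblem well-posed---are correct refinements of the convexity claims the paper states without elaboration.
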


\begin{proof}\;\\
\textbf{(i) Inference Step (Update \(v\))}.  
Holding \(W\) fixed, the energy is quadratic in \(v\), so the gradient is:
\[
\frac{\partial E}{\partial v} = -\frac{1}{\sigma^2}(y - v) + \frac{1}{\sigma^2}(v - W x) = 0.
\]
Solving gives:
\[
v^* = \frac{1}{2}(y + W x),
\]
which minimizes the energy in \(v\), so \(E(v^*, W) \le E(v, W)\).

\smallskip
\textbf{(ii) Learning Step (Update \(W\))}.  
Now fix \(v = v^*\). The energy is quadratic in \(W\), and the gradient is:
\[
\frac{\partial E}{\partial W} = -\frac{1}{\sigma^2} (v^* - W x) x^\top + \alpha W.
\]
Setting this to zero and solving yields:
\[
W^* = v^* x^\top (x x^\top + \alpha \sigma^2 I_d)^{-1}.
\]
Again, since the objective is convex in \(W\), this gives a global minimum, and:
\[
E(v^*, W^*) \le E(v^*, W).
\]

\smallskip
\textbf{(iii) Overall Descent.}  
\[
E(v^*, W^*) \le E(v^*, W) \le E(v, W),
\]
completing the coordinate descent result.
\end{proof}

\paragraph{Convergence to MAP Estimate.}
Because the total energy is strictly convex in both \(v\) and \(W\), the repeated application of PC updates converges to a unique fixed point \( (v^\infty, W^\infty) \). These satisfy:
\[
v^\infty = \frac{1}{2}(y + W^\infty x), \qquad
W^\infty = v^\infty x^\top (x x^\top + \alpha \sigma^2 I_d)^{-1}.
\]
Substituting back, we obtain a closed-form fixed point that minimizes the total energy—equivalently, the MAP estimator in Bayesian linear regression. Once again, PC converges to the point that minimizes the two-part description length: one part for the model \(W\), and one for the residual data fit.

This higher-dimensional example confirms that the MDL interpretation of PC extends naturally beyond the scalar case, providing a principled and scalable framework for generalization through compression.

\section{Theoretical Extension to Nonlinear Systems} \label{appendix:sectionc}

We now strengthen the theoretical case for PC as a blockwise MDL minimizer in nonlinear networks. Unlike the linear case, where PC updates can be solved exactly and energy decrease is analytically guaranteed, the nonlinear setting introduces multiple challenges: (i) lack of closed-form updates, (ii) non-differentiable activations (e.g., ReLU), and (iii) nonconvex loss surfaces. Despite this, we argue that PC remains a theoretically grounded method for approximate coordinate descent on the MDL objective under mild, quantitative assumptions.

\subsection{PC as Inexact Block Coordinate Descent}

Let \(C(\theta)\) denote the total codelength objective:
\[
C(\theta) = \hat{R}(\theta) + \frac{1}{N} L(\theta),
\]
where \(\hat{R}\) is the empirical loss and \(L(\theta) = -\log p(\theta)\) is the model cost under a prior. PC performs inference and weight updates per layer \(l\) via approximate energy descent. Specifically, let:
\[
\theta^{(t+1)}_l \approx \arg\min_{\theta_l} C(\theta_1^{(t+1)}, \dots, \theta_{l-1}^{(t+1)}, \theta_l, \theta_{l+1}^{(t)}, \dots, \theta_L^{(t)}),
\]
where the minimization is not exact but satisfies a \emph{sufficient descent} condition:
\[
C(\theta^{(t)}) - C(\theta^{(t+1)}) \ge \mu \| \theta^{(t+1)} - \theta^{(t)} \|^2.
\]

This form of descent has been rigorously analyzed in the literature. In particular, the following result from Tseng \cite{tseng2001convergence}, later strengthened by Beck and Tetruashvili \cite{beck2013convergence}, provides a convergence guarantee.

\begin{theorem}[\cite{tseng2001convergence, beck2013convergence}]
Let \(C(\theta)\) be coercive, block-coordinate Lipschitz smooth, and assume each update satisfies a sufficient decrease and bounded gradient error. Then the iterates produced by inexact cyclic block coordinate descent converge to a critical point:
\[
\lim_{t \to \infty} \nabla_{\theta_l} C(\theta^{(t)}) = 0 \quad \forall l.
\]
\end{theorem}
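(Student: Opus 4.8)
The plan is to run the classical descent-plus-compactness argument for Gauss--Seidel block coordinate descent. First I would record that the sufficient-decrease hypothesis makes the scalar sequence $\{C(\theta^{(t)})\}$ nonincreasing; since $C$ is coercive, it is bounded below on the sublevel set $\{\theta : C(\theta) \le C(\theta^{(0)})\}$, which is compact, so the monotone convergence theorem gives a limit $C^\infty := \lim_{t\to\infty} C(\theta^{(t)})$ and confines every iterate to a fixed compact set $\mathcal{K}$. On $\mathcal{K}$ the block-coordinate Lipschitz constants are uniform, a fact I will use repeatedly below.

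The second step extracts vanishing increments. Summing the sufficient-decrease inequality $C(\theta^{(t)}) - C(\theta^{(t+1)}) \ge \mu \norm{\theta^{(t+1)} - \theta^{(t)}}^2$ telescopes to
\[
\mu \sum_{t=0}^{\infty} \norm{\theta^{(t+1)} - \theta^{(t)}}^2 \;\le\; C(\theta^{(0)}) - C^\infty \;<\; \infty,
\]
so that $\norm{\theta^{(t+1)} - \theta^{(t)}} \to 0$ and, in particular, each per-block displacement vanishes as $t\to\infty$.

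The crux is turning small increments into a small full gradient. When block $l$ is (inexactly) updated within sweep $t$, the bounded-gradient-error assumption guarantees that the partial gradient $\grad_{\theta_l} C$, evaluated at the intermediate point $z^{(t,l)} := (\theta_1^{(t+1)},\dots,\theta_l^{(t+1)},\theta_{l+1}^{(t)},\dots,\theta_L^{(t)})$ at which that block was just optimized, is controlled by the size of the block step. I would then pass from $z^{(t,l)}$ to the sweep endpoint $\theta^{(t+1)}$ using block-Lipschitz smoothness: the two points differ only in coordinates $l+1,\dots,L$, each displacement of which is bounded by $\norm{\theta^{(t+1)}-\theta^{(t)}}$, so $\norm{\grad_{\theta_l} C(\theta^{(t+1)}) - \grad_{\theta_l} C(z^{(t,l)})}$ is likewise of order $\norm{\theta^{(t+1)}-\theta^{(t)}}$ with a constant depending on the Lipschitz moduli. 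Combining the two bounds yields $\norm{\grad_{\theta_l} C(\theta^{(t+1)})} \le \kappa\,\norm{\theta^{(t+1)}-\theta^{(t)}}$ for every $l$, and letting $t\to\infty$ using Step 2 gives $\grad_{\theta_l} C(\theta^{(t)}) \to 0$ for all $l$, i.e.\ every limit point is critical.

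The main obstacle is exactly this last bridge in the cyclic (as opposed to Jacobi) setting: because blocks $1,\dots,l-1$ have already moved while $l+1,\dots,L$ have not, the gradient one genuinely controls at the update moment lives at the mixed point $z^{(t,l)}$ rather than at $\theta^{(t+1)}$, and the errors accumulated across the $L$ blocks of a sweep must be aggregated without the constant blowing up. Handling this accumulation carefully---tracking the dependence on $L$ and on $\max_l L_l$---is the technical content that distinguishes the Beck--Tetruashvili refinement from Tseng's original argument, and it is where I would concentrate most of the effort.
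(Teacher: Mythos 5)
Your sketch is correct in substance, and it reconstructs the standard Tseng/Beck--Tetruashvili argument---but be aware that the paper never proves this statement itself: it imports the theorem by citation and devotes its effort to verifying the hypotheses for predictive coding (block smoothness, sufficient decrease, coercivity). The nearest in-house argument lives in Appendix~\ref{appendix:sectione} (Lemmas~\ref{lem:ncvx-desc} and~\ref{lem:ncvx-sweep}, Theorem~\ref{thm:pc-noncvx}) and takes a genuinely different, quantitative route: under \emph{exact} layer minimization each sweep decreases the objective by at least $\frac{1}{2\bar L}\norm{\grad C(\theta^{(t)})}^{2}$, and telescoping yields the best-iterate rate $\min_{0\le t<T}\norm{\grad C(\theta^{(t)})}^{2}\le \frac{2\bar L}{T}\bigl[C(\theta^{(0)})-C_{\inf}\bigr]$, with inexactness relegated to an additive $O(\varepsilon)$ remark. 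Your asymptotic route---summable squared increments from the sufficient-decrease inequality, then transferring the controlled partial gradient from the mixed point $z^{(t,l)}$ to the sweep endpoint---buys the stronger whole-sequence conclusion $\grad_{\theta_l} C(\theta^{(t)})\to 0$ under genuinely inexact updates, at the price of giving no rate. Two points deserve explicit statement. First, your bridge from $z^{(t,l)}$ to $\theta^{(t+1)}$ requires $\grad_{\theta_l} C$ to be Lipschitz with respect to the \emph{other} blocks, which does not follow from the per-block moduli alone; you implicitly recover it via uniform $C^{1,1}$ regularity on the compact sublevel set, and that reading of ``block-coordinate Lipschitz smooth'' should be made explicit, since it is exactly the cross-block accumulation you rightly identify as the technical crux. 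Second, for the limit to be exactly zero the ``bounded gradient error'' hypothesis must be read as relative to the block step (or diminishing in $t$), as you do; with a merely bounded, non-vanishing error one only reaches an error-sized neighborhood of stationarity, which is consistent with the $O(\varepsilon)$ caveat the paper records in Appendix~\ref{appendix:sectione}.
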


\paragraph{Relevance to PC:}

We now argue that PC satisfies the conditions of this theorem:

\begin{enumerate}[label=(\roman*)]
    \item \textbf{Lipschitz Block Smoothness:} For smooth activations (e.g., \(\tanh, \text{softplus}\)), the composition of linear transforms and nonlinearities yields a locally Lipschitz and smooth energy \(C(\theta)\). Even for piecewise activations like ReLU, Clarke subgradients apply (Bolte et al., 2014).

    \item \textbf{Sufficient Decrease:} Each PC update—via error-correcting feedback—locally minimizes the residual error at layer \(l\). For inference-converged layers, energy decrease between iterations satisfies:
    \[
    C(\theta^{(t)}) - C(\theta^{(t+1)}) \ge \mu \| \epsilon_l^{(t)} \|^2.
    \]

    \item \textbf{Coercivity and Bounded Level Sets:} The MDL objective penalizes both data fit and model complexity. Under Gaussian priors, \(L(\theta)\) is quadratic, ensuring coercivity:
    \[
    \| \theta \| \to \infty \quad \Rightarrow \quad C(\theta) \to \infty.
    \]

    \item \textbf{Asymptotic Regularity:} Since the energy sequence is decreasing and bounded, and weight updates are bounded by learning rate and local gradient norms, we have:
    \[
    \sum_t \| \theta^{(t+1)} - \theta^{(t)} \|^2 < \infty,
    \]
    which ensures convergence to a stationary point.
\end{enumerate}

\subsection{Interpretation: Nonlinear PC Drives Codelength Descent}

Together, these conditions imply that PC—even without exact layer minimization—performs structured descent on the MDL objective and converges to a blockwise stationary point:
\[
\lim_{t \to \infty} \nabla_{\theta_l} C(\theta^{(t)}) = 0 \quad \forall l.
\]

This matches the fixed-point convergence we established analytically in the linear case and validates that the same mechanism persists under nonlinear parameterizations. Moreover, this result holds even when activations are non-smooth and losses are nonconvex—cases where BP lacks similar guarantees.

Thus by embedding PC into the framework of inexact block coordinate descent, we extend its convergence theory to general nonlinear architectures. Crucially, this shows that PC remains a mathematically grounded MDL optimizer beyond the tractable linear setting. Although global optimality remains elusive—as is common in deep learning—the structured, layerwise energy descent of PC ensures that it reliably approaches compression-driven local optima, even under the practical constraints of real-world networks.

\subsection{Concrete Example: Nonlinear MLP with Softplus Activation}

Consider a two-layer MLP with softplus activation:
\[
x \in \mathbb{R}^2, \quad y \in \mathbb{R}, \quad h = \text{softplus}(W_1 x), \quad \hat{y} = W_2 h,
\]
where \(\text{softplus}(z) = \log(1 + e^z)\) is smooth and convex.

The PC energy becomes:
\[
E(W_1, W_2, h, v) = \frac{1}{2\sigma^2}(y - v)^2 + \frac{1}{2\sigma^2}(v - W_2 h)^2 + \frac{1}{2\sigma^2} \|h - \text{softplus}(W_1 x)\|^2 + \alpha \|W_1\|_F^2 + \beta \|W_2\|^2.
\]

PC updates alternately adjust:
- \( v \leftarrow \frac{1}{2}(y + W_2 h) \),
- \( h \leftarrow \text{prox}_{\text{softplus}} \left(\frac{1}{2}(W_1 x + W_2^{-1} v)\right) \) (or gradient step),
- \( W_2, W_1 \leftarrow \) local least squares updates.

Each step decreases \(E\), and the network converges to a local MDL-optimal solution. This energy-based formulation demonstrates that PC performs structured compression-driven learning, even with nonlinearities.

\subsection{Non-smooth Activations and Subgradient-Based Convergence}

Many real-world networks use piecewise-linear activations such as ReLU. While not differentiable everywhere, such functions admit Clarke subgradients. Let \(C(\theta)\) be locally Lipschitz. Then (Bolte et al., \cite{bolte2014proximal}):

\begin{theorem}[Clarke Stationarity of Limit Points]
Under standard BCD assumptions, if \(C(\theta)\) is locally Lipschitz and updates satisfy sufficient descent with diminishing errors, then any limit point \(\theta^\infty\) satisfies:
\[
0 \in \partial_{\text{Clarke}} C(\theta^\infty),
\]
i.e., \(\theta^\infty\) is Clarke-stationary.
\end{theorem}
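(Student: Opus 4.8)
The plan is to establish Clarke stationarity by combining the monotone energy descent already in hand with the outer semicontinuity of the Clarke subdifferential map, which is the natural nonsmooth substitute for the gradient-continuity argument used in Theorem~\ref{thm:pc-converge}. First I would record the consequences of the sufficient-descent hypothesis: telescoping $C(\theta^{(t)}) - C(\theta^{(t+1)}) \ge \mu\norm{\theta^{(t+1)} - \theta^{(t)}}^2$ against the lower bound $C \ge 0$ shows that $\{C(\theta^{(t)})\}$ is nonincreasing and convergent, and that $\sum_t \norm{\theta^{(t+1)} - \theta^{(t)}}^2 < \infty$, so in particular $\norm{\theta^{(t+1)} - \theta^{(t)}} \to 0$ (asymptotic regularity, exactly item~(iv) of the preceding subsection). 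Coercivity of the MDL objective under the Gaussian prior confines the iterates to a bounded sublevel set, so Bolzano--Weierstrass furnishes a subsequence $\theta^{(t_k)} \to \theta^\infty$.

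The core step is to translate the inexact block updates into an approximate stationarity certificate at each iterate and then pass to the limit. For the block $l$ updated at step $t$, the diminishing-error hypothesis means the update returns a point satisfying $0 \in \partial_{\text{Clarke}}^{\theta_l} C(\theta^{(t)}) + e_l^{(t)}$ with $\norm{e_l^{(t)}} \to 0$; equivalently, $\operatorname{dist}\!\bigl(0, \partial_{\text{Clarke}}^{\theta_l} C(\theta^{(t)})\bigr)$ is bounded by a multiple of $\norm{\theta^{(t)} - \theta^{(t-1)}} + \norm{e_l^{(t)}}$, both of which vanish. Since the Clarke subdifferential of a locally Lipschitz function has closed graph and is locally bounded, I can select elements $\xi_l^{(t_k)} \in \partial_{\text{Clarke}}^{\theta_l} C(\theta^{(t_k)})$ with $\xi_l^{(t_k)} \to 0$, and outer semicontinuity then yields $0 \in \partial_{\text{Clarke}}^{\theta_l} C(\theta^\infty)$ for every block $l$.

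The main obstacle — and the place where the separable-plus-smooth structure of the objective is indispensable — is assembling these per-block conditions into the single inclusion $0 \in \partial_{\text{Clarke}} C(\theta^\infty)$. For a general locally Lipschitz $C$, block-coordinate stationarity is strictly weaker than full Clarke stationarity, and no clean sum or product rule is available. I would therefore invoke the structure exploited by Bolte et al.~\cite{bolte2014proximal}: writing $C(\theta) = \frac{1}{N}\sum_l -\log p_l(\theta_l) + \hat R(\theta)$, the complexity term is separable across blocks and, under the Gaussian prior, smooth, while the data term supplies a continuously differentiable coupling for smooth activations, so that $\partial_{\text{Clarke}} C$ decomposes as a product of block subdifferentials plus the gradient of the smooth coupling; under this decomposition the per-block inclusions combine to give $0 \in \partial_{\text{Clarke}} C(\theta^\infty)$. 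For piecewise-linear activations the coupling is itself only locally Lipschitz, and here I would instead rely directly on the nonsmooth Kurdyka--\L ojasiewicz machinery of~\cite{bolte2014proximal}, whose hypotheses (coercivity, sufficient descent, diminishing relative error) we have verified, to conclude Clarke stationarity of the limit point without an explicit decomposition. I expect verifying the relative-error condition for the genuinely non-differentiable ReLU case to be the most delicate technical point.
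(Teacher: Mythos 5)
The paper offers no proof of this theorem at all: it is stated as an imported result, attributed verbatim to Bolte et al.\ \cite{bolte2014proximal} (``Then (Bolte et al.): \ldots''), with the surrounding subsection supplying only informal claims that PC satisfies ``standard BCD assumptions.'' Your proposal therefore goes beyond the paper by reconstructing the argument behind the citation, and for the structured case it does so correctly: sufficient descent plus lower-boundedness gives square-summable steps and asymptotic regularity; coercivity confines the iterates to a compact sublevel set; the diminishing-error hypothesis yields subgradient certificates $\xi_l^{(t)} \to 0$ (modulo the minor gloss that these certificates live at the intermediate mixed-coordinate points of a sweep, which asymptotic regularity repairs); and graph-closedness/outer semicontinuity of the Clarke subdifferential of a locally Lipschitz function transfers them to the limit. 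Most importantly, you correctly isolate the step the paper's statement silently elides: for a general locally Lipschitz $C$, per-block Clarke stationarity is strictly weaker than the joint inclusion $0 \in \partial_{\text{Clarke}} C(\theta^\infty)$, and assembling the block conditions genuinely requires the separable-plus-smooth-coupling structure $f(x)+g(y)+H(x,y)$ exploited in \cite{bolte2014proximal}.

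One caveat about your fallback for ReLU, which is where your argument (and the paper's statement) remains incomplete. The Kurdyka--\L{}ojasiewicz machinery of \cite{bolte2014proximal} does not dispense with the decomposition: the PALM framework assumes a continuously differentiable coupling $H$ with block-Lipschitz gradients, and its relative-error inequality is established with respect to the \emph{joint} limiting subdifferential precisely via the sum rule that this structure licenses. For a genuinely nonsmooth coupling (ReLU composed through layers), KL alone cannot convert per-block certificates into a joint one, and block-coordinate schemes can in fact stall at points that are blockwise- but not Clarke-stationary. So your instinct to flag this as ``the most delicate technical point'' is right, but the proposed remedy is not available off the shelf; one would need either a chain-rule-compatible subdifferential calculus for the specific ReLU energy (e.g., exploiting its semialgebraic/definable structure and a projection formula) or a retreat to the weaker conclusion of blockwise Clarke stationarity. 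Note that this over-reach is inherited from the paper itself, whose theorem asserts joint Clarke stationarity for arbitrary locally Lipschitz $C$ under unspecified assumptions---strictly broader than what \cite{bolte2014proximal} actually proves---so your reconstruction, gap flagged and all, is more careful than the source it was blind to.
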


This guarantees that PC converges to a generalized stationary point in ReLU networks—a robust replacement for gradient-based convergence.

\subsection{Gradient Trajectories and Fixed Points}

Let \( C(\theta) = f(\theta) + r(\theta) \), where \( f \) is smooth (data fit), and \( r \) is model complexity. PC can be viewed as an implicit dynamical system:
\[
\theta^{(t+1)} = \theta^{(t)} - \eta_t \nabla_{\theta_l} C(\theta^{(t)}) + \xi_t,
\]
with perturbation noise \( \xi_t \) from inference approximation or discrete error steps. Under appropriate decay of \( \eta_t \) and \( \xi_t \), standard results from stochastic approximation ensure that trajectories converge to stable fixed points of the flow field \( -\nabla C(\theta) \).

\section{Proofs} \label{appendix:sectiond}

For completeness, we provide the omitted proofs for the following results from the main paper.


\subsection{Proof of Corollary~\ref{cor:blockwise-mdl}} \label{appendix:cor1-proof}
\begin{proof}
By Theorem~\ref{thm:pc-converge}, each layer \( \theta_l^\infty \) minimizes \( C(\theta) \) with respect to its coordinates, given all other layers fixed. By Assumption (A2), the continuity of \(C\) implies that small perturbations in \(\theta_l\) cannot decrease \(C\), as we are already at a local minimum for each layer. Therefore, no infinitesimal variation in any individual block can improve the objective, which establishes blockwise local minimality.

However, this does not imply global optimality. Coordinated perturbations across multiple layers could, in principle, lower \(C(\theta)\). Nevertheless, in practice—and especially in high-dimensional layered models—blockwise local minima often provide good approximations to fully local or even global optima, particularly when descent is performed repeatedly with sufficient resolution.

Thus, PC converges to a structurally stable solution that approximately minimizes the MDL objective, offering both theoretical soundness and empirical utility.

\end{proof}

\section{Additional Proofs} \label{appendix:sectione}


\begin{theorem}[Geometric two–part–code contraction for PC sweeps]
\label{thm:pc-geom}
Let $C(\theta)=\hat R(\theta)+\tfrac{1}{2\sigma_p^{2}}\sum_{l=1}^{L}
\|\theta_l\|^{2}$ be the empirical two-part codelength and assume

\begin{enumerate}[label=(A\arabic*)]
  \item \textbf{Boundedness.} $C(\theta)\ge0$ for all $\theta$.
  \item \textbf{Block Lipschitz gradients.}  
        Each block-objective $g_l(\theta_l):=
        C(\theta_1,\!\dots,\!\theta_{l-1},\theta_l,\theta_{l+1}^{(0)},\!\dots)$
        has an $L_l$-Lipschitz gradient:
        $\|\nabla_{\!l} g_l(u)-\nabla_{\!l} g_l(v)\|
         \le L_l\|u-v\|\;\forall u,v$.
  \item \textbf{Strong convexity.}  
        $C$ is $\sigma$-strongly convex:
        $C(v)\ge C(u)+\langle\nabla C(u),v-u\rangle
        +\tfrac{\sigma}{2}\|v-u\|^{2}\;\forall u,v$.
  \item \textbf{Exact layer minimization.}  
        PC updates compute  
        $\theta^{(t,l)}
        =\arg\min_{u\,\in\mathbb R^{d_l}}
         C\bigl(\theta^{(t,l-1)}_{<l},u,\theta^{(t-1)}_{>l}\bigr)$
        for $l=1,\dots,L$ in every sweep~$t$.
\end{enumerate}

Define $\kappa:=\bigl(\sum_{l=1}^{L}L_l\bigr)/\sigma$ and let
$\{\theta^{(t)}\}_{t\ge0}$ be the sequence after whole-network sweeps.
Then for all $t\ge0$
\[
   C(\theta^{(t)})-C^\star
   \;\le\;
   \bigl(1-\tfrac{1}{\kappa}\bigr)^{t}\,
   \bigl(C(\theta^{(0)})-C^\star\bigr),
\qquad
   C^\star:=\min_{\theta}C(\theta).
\]
Consequently,
$C(\theta^{(T)})-C^\star\le
\exp\!\bigl(-T/\kappa\bigr)\bigl(C(\theta^{(0)})-C^\star\bigr)$:
PC halves the Occam bound after at most
$\lceil\kappa\ln2\rceil$ complete sweeps.
\end{theorem}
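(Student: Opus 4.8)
The plan is to establish geometric convergence of exact cyclic block-coordinate descent under strong convexity, using the standard sufficient-decrease-per-sweep argument. First I would show that a single exact block update cannot increase $C$, and in fact produces a decrease controlled by the block gradient. Since $\theta^{(t,l)}$ minimizes the smooth block-objective $g_l$, its block gradient vanishes at the new point; combining this with the $L_l$-Lipschitz gradient assumption (A2) via the descent lemma gives the per-block improvement $g_l(\theta^{(t,l-1)}) - g_l(\theta^{(t,l)}) \ge \tfrac{1}{2L_l}\|\nabla_l g_l(\theta^{(t,l-1)})\|^2$. Summing across the $L$ blocks of one full sweep yields a sufficient-decrease inequality of the form $C(\theta^{(t)}) - C(\theta^{(t+1)}) \ge \tfrac{1}{2L_{\max}}\sum_l \|\nabla_l C\|^2$ evaluated along the sweep, which I would then lower-bound by the full gradient norm $\|\nabla C(\theta^{(t)})\|^2$ up to the block structure.

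The next step is to connect the accumulated gradient norm to the optimality gap. Strong convexity (A3) supplies the Polyak--\L{}ojasiewicz-type inequality $\|\nabla C(\theta)\|^2 \ge 2\sigma\,(C(\theta) - C^\star)$, which I would apply to convert the sufficient decrease into a contraction on the gap itself. Writing $\Delta_t := C(\theta^{(t)}) - C^\star$, the two inequalities chain together to give $\Delta_{t+1} \le (1 - \tfrac{1}{\kappa})\Delta_t$ with $\kappa = (\sum_l L_l)/\sigma$, and iterating this recursion over $t$ sweeps produces the stated bound $\Delta_t \le (1-\tfrac{1}{\kappa})^t \Delta_0$. The final clause follows by the elementary inequality $1-x \le e^{-x}$, giving $\Delta_T \le e^{-T/\kappa}\Delta_0$, and setting $e^{-T/\kappa} \le \tfrac12$ yields the halving-time $T \ge \kappa\ln 2$, hence at most $\lceil\kappa\ln 2\rceil$ sweeps.

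The main obstacle I anticipate is the bookkeeping in the cyclic-sweep decrease: because each block is updated with respect to the \emph{already-updated} earlier blocks and the \emph{stale} later blocks, the gradients $\nabla_l g_l$ appearing in the per-block telescoping are evaluated at intermediate iterates $\theta^{(t,l-1)}$ rather than at the clean sweep-start point $\theta^{(t)}$. Relating $\sum_l \|\nabla_l g_l(\theta^{(t,l-1)})\|^2$ back to $\|\nabla C(\theta^{(t)})\|^2$ requires either a Lipschitz-continuity argument bounding how much the gradient drifts across the partial sweep, or—more cleanly—invoking a known cyclic-BCD convergence rate (as in Beck \& Tetruashvili, already cited in the excerpt) that absorbs precisely this complication into the constant $\kappa$. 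I would lean on that cited machinery to keep the constant transparent and avoid re-deriving the intra-sweep gradient-drift estimate from scratch, noting that the definition $\kappa = (\sum_l L_l)/\sigma$ is exactly the form such analyses produce.
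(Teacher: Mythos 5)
Your proposal reaches the stated bound by a genuinely different decomposition than the paper's proof. The paper argues per block: its ``Lemma 1'' asserts that a \emph{single} exact block minimization contracts the \emph{global} gap, $C(\theta^{+})-C^\star\le(1-\sigma/L_l)\,(C(\theta)-C^\star)$, then multiplies these factors across the $L$ blocks of one sweep and uses $\prod_l(1-\sigma/L_l)\le 1-1/\kappa$. You instead prove a sufficient-decrease-per-sweep inequality, $C(\theta^{(t)})-C(\theta^{(t+1)})\ge\frac{1}{2\bar L}\|\nabla C(\theta^{(t)})\|^{2}$ with $\bar L=\sum_l L_l$, and close with the Polyak--\L{}ojasiewicz inequality $\|\nabla C\|^{2}\ge 2\sigma\,(C-C^\star)$ implied by (A3), chaining to $\Delta_{t+1}\le(1-1/\kappa)\Delta_t$. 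Your route is in fact the paper's \emph{own later} machinery (Lemma~\ref{lem:ncvx-desc}, Lemma~\ref{lem:ncvx-sweep}, and the PL argument of Theorem~\ref{thm:pc-pl} and Corollary~\ref{cor:poly-pl}) specialized to the strongly convex case, and it is arguably sounder than the proof the paper gives for this theorem: the paper's per-block Lemma~1 is false as stated. For the separable quadratic $C(\theta_1,\theta_2)=\tfrac12\theta_1^{2}+\tfrac12\theta_2^{2}$ one has $\sigma=L_l=1$, so that lemma would force $C(\theta^{+})=C^\star$ after minimizing a single block; yet from $(1,1)$ one exact block update yields $(0,1)$ with $C=\tfrac12>0=C^\star$. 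Your per-block descent-lemma step, by contrast, is a correct statement.

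The one genuine gap is the intra-sweep gradient drift you yourself flag, and you should know it afflicts the paper equally: the block gradients in the telescoped sum are evaluated at the intermediate iterates $\theta^{(t,l-1)}$, so $\sum_l\|\nabla_l C(\theta^{(t,l-1)})\|^{2}$ is not $\|\nabla C(\theta^{(t)})\|^{2}$ (the paper's Lemma~\ref{lem:ncvx-sweep} makes the same silent identification via ``orthogonal blocks''). Deferring to Beck--Tetruashvili is a legitimate repair in principle, but your closing hope that their analysis yields exactly $\kappa=(\sum_l L_l)/\sigma$ is optimistic: their cyclic strongly-convex rate carries a condition number involving $L_{\max}$, the \emph{global} gradient-Lipschitz constant, and the number of blocks, and moreover (A2) as stated supplies only within-block smoothness, whereas bounding how $\nabla_l C$ drifts as \emph{other} blocks move requires cross-block Lipschitz continuity, which neither you nor the paper assumes. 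So a fully rigorous version of either argument gives geometric contraction with a degraded constant rather than the advertised $\kappa$; since the theorem's specific constant does not survive this repair, you should state the contraction factor as inherited from the cited cyclic-BCD analysis rather than claiming $\kappa=(\sum_l L_l)/\sigma$ exactly.
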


\begin{proof}[Proof outline]
The argument follows the classical cyclic coordinate-descent analysis but
is reproduced in full for completeness.

\paragraph{Lemma 1 (one-block quadratic decrease).}
For any iterate $\theta$ and block $l$ let
$\theta^{+}_l
 =\arg\min_{u}C(\theta_1,\dots,\theta_{l-1},u,\theta_{l+1},\dots)$,
and denote $\theta^{+}:=(\theta^{+}_l,\theta_{-l})$.
Under (A2)–(A3),
\[
   C(\theta^{+})-C^\star
   \;\le\;
   \Bigl(1-\tfrac{\sigma}{L_l}\Bigr)\,
   \bigl(C(\theta)-C^\star\bigr).
\]
\emph{Proof.}
Because $u\mapsto C(\theta_{-l},u)$ is
$\sigma$-strongly convex and $L_l$-smooth,
$u\mapsto C(\theta_{-l},u)-\tfrac{\sigma}{2}\|u\|^{2}$ is convex and
$L_l-\sigma$ smooth.  Exact minimization therefore yields (Beck \&
Tetruashvili, 2013, Lem.\,2.1)
$C(\theta^{+})\le C(\theta)-\tfrac{\sigma}{L_l}
    \bigl(C(\theta)-C^\star\bigr)$,
which rearranges to the displayed inequality. \qed

\paragraph{Lemma 2 (one full sweep contraction).}
Let $\theta^{(t,l)}$ be the parameter vector after finishing block~$l$ in
sweep~$t$.  Repeated application of Lemma 1 gives
\[
   C(\theta^{(t,L)})-C^\star
   \;\le\;
   \prod_{l=1}^{L}\Bigl(1-\tfrac{\sigma}{L_l}\Bigr)
   \bigl(C(\theta^{(t,0)})-C^\star\bigr).
\]
Using $1-x\le e^{-x}$ and
$\prod_{l}(1-a_l)\le1-\tfrac{1}{\sum_l\!1/a_l}$, we obtain
$\prod_{l}(1-\sigma/L_l)\le1-\sigma/(\sum_l L_l)=1-\tfrac1\kappa$.

\paragraph{Lemma 3 (geometric decay).}
Set $\theta^{(t)}:=\theta^{(t,L)}$.
Lemma 2 implies
$C(\theta^{(t+1)})-C^\star
 \le(1-\tfrac1\kappa)\bigl(C(\theta^{(t)})-C^\star\bigr)$,
yielding the claimed geometric sequence.

Thus iterating Lemma 3 proves the theorem.
\end{proof}


\begin{theorem}[Sub-linear $O(1/t)$ decay under PL]
\label{thm:pc-pl}
Replace (A3) by the Polyak–Łojasiewicz inequality  

\[
   \frac{\sigma_{\textup{PL}}}{2}\,
   \|\nabla C(\theta)\|^{2}\;\ge\;C(\theta)-C^\star
   \quad(\sigma_{\textup{PL}}>0).
\]

Then the iterates of PC satisfy  
\[
   C(\theta^{(T_{\min}}))-C^\star
   \;=\;
   O\!\bigl(\tfrac{1}{T}\bigr),
\]
where $T_{\min}:=\arg\min_{0\le t<T}C(\theta^{(t)})$.
\end{theorem}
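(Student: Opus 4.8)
The plan is to establish the $O(1/t)$ decay by combining the per-sweep sufficient-decrease guarantee of exact block minimization with the Polyak--\L{}ojasiewicz (PL) inequality, which replaces strong convexity as the growth condition. First I would show that one full PC sweep produces a genuine descent on $C$ whose magnitude is lower-bounded by the squared gradient norm. Each exact block update $\theta^{(t,l)}=\arg\min_u C(\theta^{(t,l-1)}_{<l},u,\theta^{(t-1)}_{>l})$ is, by the $L_l$-smoothness in (A2) and the optimality of the minimizer, at least as good as a single gradient step of size $1/L_l$; the descent lemma then gives a block decrease of at least $\tfrac{1}{2L_l}\|\nabla_l C\|^2$ evaluated at the appropriate iterate. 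Aggregating over $l=1,\dots,L$ in a sweep yields a total decrease
\[
   C(\theta^{(t)})-C(\theta^{(t+1)})
   \;\ge\;
   \frac{1}{2L_{\max}}\,\bigl\|\nabla C(\theta^{(t)})\bigr\|^2,
\]
where $L_{\max}:=\max_l L_l$, modulo the usual care needed because the gradient blocks are evaluated at partially-updated points rather than all at $\theta^{(t)}$.

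The second step is to feed this sufficient-decrease inequality into the PL condition. Substituting $\|\nabla C(\theta^{(t)})\|^2\ge \tfrac{2}{\sigma_{\textup{PL}}}(C(\theta^{(t)})-C^\star)$ into the descent bound would, in the clean case, give a linear contraction $C(\theta^{(t+1)})-C^\star\le(1-\tfrac{1}{\sigma_{\textup{PL}}L_{\max}})(C(\theta^{(t)})-C^\star)$. However, because the theorem only claims a sublinear $O(1/t)$ rate at the \emph{best} iterate $T_{\min}$, the intended argument is the weaker telescoping route: define $\Delta_t:=C(\theta^{(t)})-C^\star\ge0$ and sum the sufficient-decrease inequalities to obtain $\sum_{t=0}^{T-1}\|\nabla C(\theta^{(t)})\|^2\le 2L_{\max}\,\Delta_0$. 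This forces $\min_{t<T}\|\nabla C(\theta^{(t)})\|^2\le \tfrac{2L_{\max}\Delta_0}{T}$, and then applying PL at the minimizing index gives $\Delta_{T_{\min}}\le\tfrac{\sigma_{\textup{PL}}}{2}\min_t\|\nabla C\|^2\le\tfrac{\sigma_{\textup{PL}}L_{\max}\Delta_0}{T}=O(1/T)$, which is exactly the stated conclusion.

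I expect the main obstacle to be the bookkeeping in Step 1: the exact block minimizers overwrite coordinates within a sweep, so the clean per-block bound $\tfrac{1}{2L_l}\|\nabla_l C(\theta^{(t)})\|^2$ applies to the gradient at the \emph{intermediate} iterate $\theta^{(t,l-1)}$, not at the sweep-start point $\theta^{(t)}$. Relating the sum of these intermediate block-gradient norms to the single full-gradient norm $\|\nabla C(\theta^{(t)})\|^2$ requires either a Lipschitz-drift argument controlling how much each block gradient moves as earlier blocks are updated, or a slight weakening of the constant (e.g.\ absorbing a factor depending on $\sum_l L_l$ as in the Gauss--Seidel analysis of Beck \& Tetruashvili). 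Since the target is only an order bound $O(1/T)$, this drift can be absorbed into the constant without affecting the rate, so the argument goes through at the cost of a less explicit prefactor. A secondary subtlety is that the PL inequality as written uses the \emph{full} gradient $\nabla C$, which aligns naturally with the telescoped $\min_t\|\nabla C(\theta^{(t)})\|^2$ quantity, so no per-block PL variant is needed.
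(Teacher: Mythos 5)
Your proposal is correct and follows essentially the same route as the paper's proof: a per-sweep sufficient-decrease bound from exact block minimization (the paper's constant is $1/(2\sum_l L_l)$ where you use $1/(2L_{\max})$), telescoped over sweeps and combined with the PL inequality to extract the $O(1/T)$ best-iterate rate, the only cosmetic difference being that you apply PL once at the gradient-minimizing index while the paper applies it term-by-term inside the sum. You even explicitly flag the intermediate-iterate bookkeeping (block gradients evaluated at partially updated points within a sweep) that the paper's one-line aggregation silently glosses over, and your proposed fix---absorbing the drift into the constant \`a la Beck--Tetruashvili---is the right one.
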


\begin{proof}
Under block-Lipschitz gradients (A2) each layer update decreases the
objective by at least
$\|\nabla_{\!l}C(\theta)\|^{2}/(2L_l)$, hence one sweep gives  
$C(\theta^{(t)})-C(\theta^{(t+1)})
 \ge\bigl(\sum_l\|\nabla_{\!l}C(\theta^{(t)})\|^{2}\bigr)/(2\sum_l L_l)
 =\|\nabla C(\theta^{(t)})\|^{2}/(2\sum_l L_l)$.
Summing over sweeps $t=0,\dots,T-1$ and invoking PL yields
\[
   C(\theta^{(0)})-C^\star
   \;\ge\;
   \frac{\sigma_{\text{PL}}}{2\sum_l L_l}\,
   \sum_{t=0}^{T-1}\bigl(C(\theta^{(t)})-C^\star\bigr).
\]
One of the summands must therefore be
$O\!\bigl(\frac{\sum_l L_l}{\sigma_{\text{PL}}T}\bigr)$,
giving the stated $O(1/T)$ best-iterate rate.  
\end{proof}

\paragraph{Discussion.}
Theorem~\ref{thm:pc-geom} shows that, under strong convexity, predictive
coding contracts the MDL/PAC-Bayes objective at a \emph{linear} rate,
with condition number $\kappa=(\sum L_l)/\sigma$.  Even without convexity,
Theorem~\ref{thm:pc-pl} guarantees sub-linear $O(1/t)$ decay provided a
Polyak–Łojasiewicz landscape—an assumption empirically valid for wide
networks.  Hence PC offers a provably tighter and faster Occam bound
minimization than BP under broadly accepted smoothness
conditions.

\subsection{Non-convex landscape: first-order stationarity rate}

We now drop global strong-convexity and work under the minimal smoothness
conditions typically satisfied by deep networks.  The goal is to show that
\textbf{PC still converges to a first-order critical point} and
does so at an \(O(1/T)\) rate in terms of the squared gradient norm.

\paragraph{Assumptions.}
\begin{enumerate}[label=(B\arabic*)]
\item \textbf{Lower boundedness}:\;
      \(C(\theta)\ge C_{\inf}>-\infty\;\forall\theta\).
\item \textbf{Block Lipschitz gradients}:\;
      For each layer \(l\) there exists \(L_l>0\) such that  
      \(\|\nabla_{\!l}C(u)-\nabla_{\!l}C(v)\|\le L_l\|u-v\|\)
      for all \(u,v\in\mathbb R^{d_l}\).
\item \textbf{Exact layer minimization} (idealized PC sweep):\;
      At the \(t\)-th sweep, each layer update computes  
      \(\theta^{(t,l)}=\arg\min_{u}C(\theta^{(t,l-1)}_{<l},u,
      \theta^{(t-1)}_{>l})\).
\end{enumerate}
Let \(L_{\max}:=\max_l L_l\) and
\(\bar L:=\sum_{l=1}^{L}L_l\).
Denote the parameter vector after completing layer \(l\) of sweep \(t\) by
\(\theta^{(t,l)}\) and the vector at the end of the sweep by
\(\theta^{(t)}:=\theta^{(t,L)}\).

\begin{lemma}[Per-block descent]\label{lem:ncvx-desc}
Under (B2)–(B3) one block update satisfies
\[
   C\!\bigl(\theta^{(t,l-1)}\bigr)
   \;-\;
   C\!\bigl(\theta^{(t,l)}\bigr)
   \;\ge\;
   \frac{1}{2L_l}\,
   \bigl\|\nabla_{\!l}C\!\bigl(\theta^{(t,l-1)}\bigr)\bigr\|^{2}.
\]
\end{lemma}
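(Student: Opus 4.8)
The plan is to reduce the claim to the standard \emph{descent lemma} for an $L_l$-smooth function restricted to a single block, combined with the exactness of the layer update. First I would fix the sweep index $t$ and the layer $l$ and introduce the one-block restriction
\[
  g_l(u) := C\bigl(\theta^{(t,l-1)}_{<l},\,u,\,\theta^{(t-1)}_{>l}\bigr),
\]
viewed as a function of block $l$ alone, with every other coordinate frozen at its current value. Writing $u_0$ for the value of block $l$ inside $\theta^{(t,l-1)}$ and $u^\star$ for its updated value inside $\theta^{(t,l)}$, assumption (B3) gives $u^\star=\arg\min_{u}g_l(u)$. Because the coordinates outside block $l$ are held fixed throughout, the chain rule yields $\nabla g_l(u_0)=\nabla_{\!l}C(\theta^{(t,l-1)})$, together with the endpoint identities $g_l(u_0)=C(\theta^{(t,l-1)})$ and $g_l(u^\star)=C(\theta^{(t,l)})$.

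Next I would invoke the quadratic upper bound supplied by (B2): for \emph{every} $v$,
\[
  g_l(v)\;\le\;g_l(u_0)+\ip{\nabla g_l(u_0)}{v-u_0}+\tfrac{L_l}{2}\norm{v-u_0}^{2}.
\]
The key step is to evaluate this bound not at $u^\star$ directly, but at the single gradient step $v=u_0-\tfrac{1}{L_l}\nabla g_l(u_0)$, for which the right-hand side collapses to $g_l(u_0)-\tfrac{1}{2L_l}\norm{\nabla g_l(u_0)}^{2}$. Since $u^\star$ is the exact minimizer of $g_l$, it satisfies $g_l(u^\star)\le g_l(v)$ for this particular $v$, giving $g_l(u^\star)\le g_l(u_0)-\tfrac{1}{2L_l}\norm{\nabla g_l(u_0)}^{2}$.

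Finally I would translate back through the endpoint identities to obtain
\[
  C(\theta^{(t,l-1)})-C(\theta^{(t,l)})\;\ge\;\tfrac{1}{2L_l}\norm{\nabla_{\!l}C(\theta^{(t,l-1)})}^{2},
\]
which is exactly the claimed per-block descent. There is no genuine obstacle here—the result is a textbook consequence of smoothness plus exact minimization—so the only points requiring care are bookkeeping ones: identifying the block-restricted gradient $\nabla g_l(u_0)$ with the full-objective partial $\nabla_{\!l}C(\theta^{(t,l-1)})$ (immediate, since every coordinate outside block $l$ is literally unchanged), and noting that (B2) furnishes the quadratic bound \emph{globally} in $u$, so evaluating it at the auxiliary point $v$ is legitimate even though $v$ need not equal $u^\star$. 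The exactness hypothesis is what lets us discard $v$ after using it purely as a comparison point.
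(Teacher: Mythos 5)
Your proof is correct, and it actually takes a different (and cleaner) route than the paper's own argument. You prove the claim the textbook way: apply the quadratic upper bound from (B2) at the auxiliary gradient-step point $v=u_0-\tfrac{1}{L_l}\nabla g_l(u_0)$, where it collapses to $g_l(u_0)-\tfrac{1}{2L_l}\norm{\nabla g_l(u_0)}^2$, then discard $v$ using minimality of $u^\star$ from (B3). The paper instead plugs $v=u^\star$ into the smoothness inequality, asserts $g_l(u)-g_l(u^\star)\ge \tfrac{L_l}{2}\norm{u^\star-u}^2$, and separately bounds $\norm{\nabla g_l(u)}\le L_l\norm{u-u^\star}$ (valid, since $\nabla g_l(u^\star)=0$ and the gradient is $L_l$-Lipschitz), chaining the two to get the claim. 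The trouble is that the paper's first inequality does not follow from smoothness alone: expanding around the minimizer, $L_l$-smoothness yields the \emph{reverse} bound $g_l(u)-g_l(u^\star)\le \tfrac{L_l}{2}\norm{u-u^\star}^2$, and the asserted direction is a quadratic-growth condition that holds under $L_l$-strong convexity but fails for merely smooth block objectives (consider $g(x)=\epsilon x^2$ with $\epsilon\ll L_l$). Since the lemma is stated under (B2)--(B3) only, in the explicitly non-convex setting, your gradient-step comparison argument is not just an alternative route but the one that actually establishes the stated result at the stated level of generality; the paper's conclusion is true, but its printed intermediate step needs either your repair or an added strong-convexity hypothesis it does not assume here.
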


\begin{proof}
Let $u^{\star}:=\theta^{(t,l)}$ and
$u:=\theta^{(t,l-1)}_l$.  $g_l(u):=C(\theta_{<l},u,\theta_{>l})$
is $L_l$-smooth, hence for any $v$
\(
   g_l(v)\le g_l(u)+\langle\nabla_{\!l}g_l(u),v-u\rangle
            +\tfrac{L_l}{2}\|v-u\|^{2}.
\)
Choosing $v=u^{\star}$ and using exact minimization
$\nabla_{\!l}g_l(u^{\star})=0$, we get
$g_l(u)-g_l(u^{\star})\ge
 \tfrac{L_l}{2}\|u^{\star}-u\|^{2}$.
Strong monotonicity of gradients for smooth functions gives  
$\|\nabla_{\!l}g_l(u)\|\le L_l\|u-u^{\star}\|$, whence
$\|u-u^{\star}\|^{2}\ge
 \|\nabla_{\!l}g_l(u)\|^{2}/L_l^{2}$.
Combining the two inequalities yields the claim.
\end{proof}

\begin{lemma}[Per-sweep descent]\label{lem:ncvx-sweep}
One full PC sweep decreases the objective by
\[
   C(\theta^{(t)})-C(\theta^{(t+1)})
   \;\ge\;
   \frac{1}{2\bar L}\,
   \bigl\|\nabla C(\theta^{(t)})\bigr\|^{2}.
\]
\end{lemma}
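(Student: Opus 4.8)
The plan is to derive the per-sweep inequality by summing the per-block descent guarantee of Lemma~\ref{lem:ncvx-desc} over the $L$ layers visited in a single cyclic sweep and exploiting the telescoping of the intermediate iterates.

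First I would record the sweep as the chain $\theta^{(t)}=\theta^{(t,0)}\to\theta^{(t,1)}\to\cdots\to\theta^{(t,L)}=\theta^{(t+1)}$, where $\theta^{(t,l)}$ and $\theta^{(t,l-1)}$ agree on every block except block $l$, which is exactly minimized. Applying Lemma~\ref{lem:ncvx-desc} at each $l$ and summing over $l=1,\dots,L$, the left-hand sides telescope to $C(\theta^{(t)})-C(\theta^{(t+1)})$, giving
\[
C(\theta^{(t)})-C(\theta^{(t+1)})\;\ge\;\sum_{l=1}^{L}\frac{1}{2L_l}\bigl\|\nabla_l C(\theta^{(t,l-1)})\bigr\|^2.
\]
Since $L_l\le\bar L=\sum_j L_j$ for every $l$, I would replace each coefficient $1/(2L_l)$ by the uniform lower bound $1/(2\bar L)$ and factor it out. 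Invoking the block decomposition $\|\nabla C(\theta)\|^2=\sum_l\|\nabla_l C(\theta)\|^2$, the claimed quantity $\frac{1}{2\bar L}\|\nabla C(\theta^{(t)})\|^2$ then follows as soon as the block gradients are read at the single anchor $\theta^{(t)}$.

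The hard part is exactly that last clause: the telescoping is inherently Gauss--Seidel, so Lemma~\ref{lem:ncvx-desc} controls $\nabla_l C$ at the running iterate $\theta^{(t,l-1)}$, whereas the target evaluates the full gradient at the start-of-sweep point $\theta^{(t)}$. Only the $l=1$ term already sits at $\theta^{(t)}$; the remaining terms must be transported back. To close this I would invoke a joint (cross-block) gradient-Lipschitz bound---a mild strengthening of (B2) that is standard in the cyclic-BCD analyses cited (Beck \& Tetruashvili)---to write $\|\nabla_l C(\theta^{(t)})\|\le\|\nabla_l C(\theta^{(t,l-1)})\|+\Lambda\,\|\theta^{(t,l-1)}-\theta^{(t)}\|$ for a joint Lipschitz constant $\Lambda$, and then control the intra-sweep displacement $\sum_l\|\theta^{(t,l)}-\theta^{(t,l-1)}\|^2$ by the very descent being summed (the smoothness step inside the proof of Lemma~\ref{lem:ncvx-desc} already bounds each block step by its local decrease). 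The resulting cross terms are lower order and can be absorbed into the constant, at worst inflating $\bar L$ to an effective smoothness, so the stated $1/(2\bar L)$ form is recovered. I expect most of the effort to live in this transport step; under the common convention of evaluating all block gradients at $\theta^{(t)}$---as already adopted in the proof of Theorem~\ref{thm:pc-pl}---it collapses directly to the one-line bound above.
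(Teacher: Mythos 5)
Your route is the same as the paper's---telescope Lemma~\ref{lem:ncvx-desc} over the sweep, use the blockwise Pythagorean identity, and bound $1/L_l\ge 1/\bar L$ termwise---but you caught something the paper's two-line proof silently elides: the per-block guarantee controls $\nabla_{\!l}C$ at the running Gauss--Seidel iterate $\theta^{(t,l-1)}$, whereas the lemma anchors the full gradient at $\theta^{(t)}$. The paper (both here and again in the proof of Theorem~\ref{thm:pc-pl}) simply writes all block gradients at the sweep-start point, where the identity $\sum_l\|\nabla_{\!l}C\|^2=\|\nabla C\|^2$ is only meaningful at a common evaluation point. So your diagnosis of ``the hard part'' is exactly right, and it is a genuine gap in the paper's own argument, not an artifact of your reconstruction.

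Where your proposal fails is the closing claim that the transport cross terms are ``lower order'' and that the stated constant $1/(2\bar L)$ is recovered: it cannot be, because the lemma is false as literally stated. Take $C(x,y)=\tfrac12(y-x)^2+\tfrac{\epsilon}{2}x^2$ with blocks $x$ and $y$, so $L_1=1+\epsilon$, $L_2=1$, $\bar L=2+\epsilon$. From $\theta^{(t)}=(0,1)$ we get $\|\nabla C(\theta^{(t)})\|^2=2$, so the claimed decrease is at least $1/(2+\epsilon)=\tfrac12-\tfrac{\epsilon}{4}+O(\epsilon^2)$; but the exact sweep sends $x\mapsto 1/(1+\epsilon)$ and then $y\mapsto x$, giving an actual decrease of $\tfrac12-\tfrac{\epsilon}{2(1+\epsilon)^2}=\tfrac12-\tfrac{\epsilon}{2}+O(\epsilon^2)$, strictly smaller for small $\epsilon>0$ (numerically $0.4587<0.4762$ at $\epsilon=0.1$); moreover the per-block bounds hold with equality here, so the deficit comes entirely from the anchor mismatch. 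Your transport argument, executed honestly---say $\|\nabla_{\!l}C(\theta^{(t)})\|^2\le 2\|\nabla_{\!l}C(\theta^{(t,l-1)})\|^2+2\Lambda^2\|\theta^{(t,l-1)}-\theta^{(t)}\|^2$, with the intra-sweep displacement controlled by the accumulated decrease---does yield the lemma's form, but with a Beck--Tetruashvili-type constant of order $1/\bigl(L_{\max}+\Lambda^2 L/L_{\min}\bigr)$, which involves the joint Lipschitz constant $\Lambda$ and the number of blocks and is in general strictly worse than $1/(2\bar L)$: the cross terms enter at the same order, not lower. The honest repair is either to state the lemma with gradients at the intermediate iterates (which is exactly what the telescoped per-block bound proves, and which suffices for an $O(1/T)$ stationarity statement in Theorem~\ref{thm:pc-noncvx} phrased in those anchored quantities) or to keep the anchor at $\theta^{(t)}$ and accept the degraded constant; either way the rate survives and only the constant changes.
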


\begin{proof}
Sum Lemma~\ref{lem:ncvx-desc} over \(l=1,\dots,L\) inside one sweep.
Because gradients in different blocks are orthogonal,
$\sum_l\|\nabla_{\!l}C\|^{2}=\|\nabla C\|^{2}$, and
$\sum_l \tfrac1{2L_l}\|\nabla_{\!l}C\|^{2}\ge
 \tfrac1{2\bar L}\|\nabla C\|^{2}$.
\end{proof}

\begin{theorem}[Stationarity rate \(O(1/T)\) for non-convex PC]
\label{thm:pc-noncvx}
Let \(\{\theta^{(t)}\}_{t\ge0}\) be produced by exact PC sweeps under
(B1)–(B3).  Then for every \(T\ge1\)
\[
   \min_{0\le t<T}\,
   \bigl\|\nabla C(\theta^{(t)})\bigr\|^{2}
   \;\le\;
   \frac{2\bar L}{T}\,
   \bigl[C(\theta^{(0)})-C_{\inf}\bigr].
\]
Consequently,
\(\|\nabla C(\theta^{(t)})\|\to0\) and the method converges to the set
of critical points; the squared gradient norm decays at rate \(O(1/T)\).
\end{theorem}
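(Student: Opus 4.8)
The plan is to run a one-line telescoping argument on top of the per-sweep descent inequality already established in Lemma~\ref{lem:ncvx-sweep}, which does essentially all of the work. First I would instantiate that lemma at each sweep index and sum over the first $T$ sweeps, $t=0,\dots,T-1$. The left-hand side telescopes to $C(\theta^{(0)})-C(\theta^{(T)})$, while the right-hand side accumulates to $\tfrac{1}{2\bar{L}}\sum_{t=0}^{T-1}\|\nabla C(\theta^{(t)})\|^{2}$. It is worth noting explicitly that each application of Lemma~\ref{lem:ncvx-sweep} is licensed by the exact-minimization assumption (B3) together with block-Lipschitz smoothness (B2), so the descent bound genuinely holds at every $t$ along the trajectory.

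Next I would invoke lower boundedness (B1) to discard the trailing term: since $C(\theta^{(T)})\ge C_{\inf}$, the telescoped quantity satisfies $C(\theta^{(0)})-C(\theta^{(T)})\le C(\theta^{(0)})-C_{\inf}$. Chaining this with the summed descent inequality gives
\[
   \frac{1}{2\bar{L}}\sum_{t=0}^{T-1}\|\nabla C(\theta^{(t)})\|^{2}
   \;\le\;
   C(\theta^{(0)})-C_{\inf}.
\]
Then I would lower bound the sum of squared gradient norms by $T$ times its smallest entry, $\sum_{t=0}^{T-1}\|\nabla C(\theta^{(t)})\|^{2}\ge T\min_{0\le t<T}\|\nabla C(\theta^{(t)})\|^{2}$, and rearrange to obtain the claimed best-iterate bound $\min_{0\le t<T}\|\nabla C(\theta^{(t)})\|^{2}\le \tfrac{2\bar{L}}{T}[C(\theta^{(0)})-C_{\inf}]$. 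The asymptotic statement is then immediate: the right-hand side vanishes as $T\to\infty$ because $C(\theta^{(0)})-C_{\inf}$ is a fixed finite constant, forcing the minimal squared gradient norm to zero and hence exhibiting a subsequence of iterates whose gradients converge to $0$, i.e.\ convergence to the set of critical points.

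Honestly there is no serious obstacle here; the analytically hard step—converting exact block minimization into a quantitative per-sweep decrease proportional to $\|\nabla C\|^{2}$—has already been isolated in Lemmas~\ref{lem:ncvx-desc} and~\ref{lem:ncvx-sweep}. The only conceptual subtlety I would flag, and would want to state transparently rather than over-claim, is that this is a \emph{best-iterate} (min-over-sweeps) guarantee, not last-iterate or full-sequence convergence: the telescoping controls only the minimum (equivalently the average) gradient norm across sweeps. Upgrading to convergence of every iterate, or to a faster rate, would require extra structure such as a Polyak--Łojasiewicz landscape, which is precisely what the neighboring Theorem~\ref{thm:pc-pl} exploits; in the purely non-convex setting the $O(1/T)$ best-iterate rate is the natural and expected conclusion.
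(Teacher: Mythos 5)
Your proposal is correct and follows essentially the same route as the paper's proof: telescope the per-sweep descent of Lemma~\ref{lem:ncvx-sweep} over $t=0,\dots,T-1$, bound $C(\theta^{(T)})$ below via (B1), and use $\min\le$ average to obtain the $2\bar L/T$ rate. One small refinement of your caveat: the telescoping actually yields $\sum_{t\ge 0}\bigl\|\nabla C(\theta^{(t)})\bigr\|^{2}\le 2\bar L\,\bigl[C(\theta^{(0)})-C_{\inf}\bigr]<\infty$, so the gradient norm of the \emph{entire} sequence tends to zero (as the theorem asserts), which is slightly stronger than the subsequence conclusion you drew --- though you are right that convergence of the iterates $\theta^{(t)}$ themselves, or a faster rate, would need extra structure such as PL.
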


\begin{proof}
Telescoping Lemma~\ref{lem:ncvx-sweep} over $t=0,\dots,T-1$ gives
\[
   C(\theta^{(0)})-C(\theta^{(T)})
   \;\ge\;
   \frac{1}{2\bar L}\;
   \sum_{t=0}^{T-1}\bigl\|\nabla C(\theta^{(t)})\bigr\|^{2}.
\]
Because $C(\theta^{(T)})\ge C_{\inf}$ (B1), the left side is bounded by
$C(\theta^{(0)})-C_{\inf}$.  Taking the minimum over $t$ and dividing by
$T$ yields the stated bound.
\end{proof}

\paragraph{Remarks.}
\begin{itemize}[label=--,itemsep=0pt,topsep=2pt]
\item The $O(1/T)$ rate matches classical results for \emph{full-gradient}
      methods on smooth non-convex objectives \cite{nesterov2005smooth} and for
      \emph{block} coordinate descent with exact minimization \cite{wright2015coordinate}.  PC inherits this guarantee because its sweep
      is an exact cyclic BCD step.
\item If each layer is solved only \(\varepsilon\)-approximately, the same
      proof gives  
      \(\min_{t<T}\|\nabla C(\theta^{(t)})\|^{2}
        \le \tfrac{2\bar L}{T}
             [C(\theta^{(0)})-C_{\inf}]+\!O(\varepsilon)\).
\item Under a \textbf{Kurdyka–Łojasiewicz} inequality with exponent
      \(\theta\in(0,1)\) \cite{bolte2014proximal}, one recovers the sharper
      rates  
      \(O(T^{-\frac{1}{1-\theta}})\) (\(\theta<\tfrac12\)) or even
      linear convergence (\(\theta=\tfrac12\)).  Many
      practical ReLU networks empirically satisfy KL with \(\theta\approx
      \tfrac12\).
\end{itemize}

\subsection{Computational-complexity perspective}

Global minimization of the two–part codelength
\(C(\theta)=\hat R(\theta)+\tfrac{1}{2\sigma_p^{2}}\sum_{l}\|\theta_l\|^{2}\)
is NP–hard in general (the MDL inference problem reduces to subset-sum).
Hence we target the weaker—but still algorithmically meaningful—
criterion of reaching an \(\varepsilon\)-first-order critical point,
\(\|\nabla C(\theta)\|\le\varepsilon\).
Below we show that a \emph{practical} PC implementation
achieves this in \emph{polynomial time} in network size and
\(1/\varepsilon\).

\paragraph{Complexity assumptions.}
\begin{enumerate}[label=(C\arabic*)]
\item\label{asm:size}
      The network has \(P=\sum_{l=1}^{L}d_l\) parameters and
      each forward/backward pass costs at most
      \(T_{\mathrm{fb}}=\mathrm{poly}(P)\) floating-point ops.
\item\label{asm:smooth}
      Block-Lipschitz gradients as in (B2) with constants
      \(L_l\le L_{\max}=\mathrm{poly}(P)\).
\item\label{asm:inner}
      Each \textbf{inner layer solve} uses at most
      \(I_{\mathrm{inner}}
        =\mathrm{poly}\!\bigl(d_l,\log(1/\varepsilon_{\mathrm{inner}})\bigr)\)
      ops to reach
      \(\|\nabla_{\!l}C\|\le\varepsilon_{\mathrm{inner}}\|\theta_l\|\).
\item\label{asm:init}
      Initial gap
      \(C(\theta^{(0)})-C_{\inf}\le\Delta_0=\mathrm{poly}(P)\).
\end{enumerate}

\begin{lemma}[Cost and descent of one practical PC sweep]
\label{lem:sweep-poly}
Let \(\varepsilon_{\mathrm{inner}}\le\frac{\varepsilon}{2L_{\max}P^{1/2}}\).
A single sweep that
\emph{(i)} solves each layer to the tolerance in \ref{asm:inner} and  
\emph{(ii)} performs one fresh forward–backward pass
has total cost  
\[
   T_{\mathrm{sweep}}
   \;=\;
   \mathcal O\!\Bigl(
     \underbrace{L\,T_{\mathrm{fb}}}_{\text{outer fp ops}}
     +\sum_{l=1}^{L}I_{\mathrm{inner}}
   \Bigr)
   \;=\;\mathrm{poly}(P,\log\tfrac1{\varepsilon}).
\]
Moreover, the sweep produces a parameter vector
\(\theta^{+}\) such that
\(
   C(\theta)-C(\theta^{+})
   \;\ge\;
   \tfrac{1}{4L_{\max}}\,
   \|\nabla C(\theta)\|^{2}.
\)
\end{lemma}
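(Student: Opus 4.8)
The statement bundles two logically independent claims—a cost bound and a descent bound—so the plan is to treat them separately. For the cost, the argument is pure bookkeeping against the complexity assumptions \ref{asm:size}--\ref{asm:inner}. One fresh forward--backward pass costs $T_{\mathrm{fb}}=\mathrm{poly}(P)$ by \ref{asm:size}, and the outer loop visits $L\le P$ layers, which accounts for the $L\,T_{\mathrm{fb}}$ term. Each inner layer solve costs $I_{\mathrm{inner}}=\mathrm{poly}(d_l,\log(1/\varepsilon_{\mathrm{inner}}))$ by \ref{asm:inner}; substituting the prescribed tolerance $\varepsilon_{\mathrm{inner}}\le \varepsilon/(2L_{\max}P^{1/2})$ and using $L_{\max}=\mathrm{poly}(P)$ from \ref{asm:smooth} gives $\log(1/\varepsilon_{\mathrm{inner}})=O(\log P+\log(1/\varepsilon))$. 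Since $d_l\le P$ and the number of summands is $L\le P$, the total $\sum_l I_{\mathrm{inner}}$ stays $\mathrm{poly}(P,\log\tfrac1\varepsilon)$, and adding the two contributions closes the first claim.

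For the descent bound I would build directly on the per-block machinery already in hand. Lemma~\ref{lem:ncvx-desc} shows that an \emph{exactly} minimized block $l$ produces descent at least $\tfrac{1}{2L_l}\|\nabla_{\!l}C\|^2$. The plan is to first upgrade this to an \emph{inexact} version: because the inner solver returns a point with $\|\nabla_{\!l}C\|\le \varepsilon_{\mathrm{inner}}\|\theta_l\|$ rather than an exact stationary point, I would invoke the block smoothness \ref{asm:smooth}/(B2) to show the residual gradient costs only an additive error of order $\varepsilon_{\mathrm{inner}}^2\|\theta_l\|^2$, so the per-block descent degrades to $\tfrac{1}{2L_l}\|\nabla_{\!l}C\|^2$ minus this error. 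Summing over $l$ as in Lemma~\ref{lem:ncvx-sweep}, but now bounding $\tfrac1{L_l}\ge \tfrac1{L_{\max}}$ (the tighter choice than the $\bar L$ used there) and using the orthogonality of block gradients $\sum_l\|\nabla_{\!l}C\|^2=\|\nabla C(\theta)\|^2$, yields a clean main term $\tfrac{1}{2L_{\max}}\|\nabla C(\theta)\|^2$. The role of the tolerance $\varepsilon_{\mathrm{inner}}\le \varepsilon/(2L_{\max}P^{1/2})$ is precisely to push the accumulated error below half of this main term, so the surviving coefficient is the stated $\tfrac{1}{4L_{\max}}$—one factor $\tfrac12$ from smoothness and a second from absorbing inexactness.

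The main obstacle is the Gauss--Seidel character of the sweep: the per-block descent of Lemma~\ref{lem:ncvx-desc} is naturally expressed through $\nabla_{\!l}C$ evaluated at the \emph{intermediate} iterate reached just before block $l$ is updated, whereas the target inequality is stated against $\nabla C(\theta)$ at the sweep's \emph{starting} point. Relating these requires controlling how each block gradient drifts as the earlier blocks are overwritten, which I would again handle through the block-Lipschitz bounds of \ref{asm:smooth}/(B2) to show the drift is itself $O(\varepsilon_{\mathrm{inner}})$-small and can be folded into the same error budget. Making the inexactness error and the Gauss--Seidel drift combine without producing stray cross-terms—so that the final constant comes out exactly $\tfrac{1}{4L_{\max}}$ rather than a messier expression—is the delicate bookkeeping step; the block-gradient orthogonality identity already used in Lemma~\ref{lem:ncvx-sweep} is what keeps the summation clean and lets the two error sources be bounded by a single scalar.
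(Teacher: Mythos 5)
Your overall route coincides with the paper's (very terse) proof: the cost claim is pure bookkeeping against \ref{asm:size}--\ref{asm:inner}, and the descent claim is obtained by taking the exact per-block bound of Lemma~\ref{lem:ncvx-desc}, degrading it by an inexactness error of order $L_{\max}P\varepsilon_{\mathrm{inner}}^{2}$, and choosing the tolerance so that this error is dominated by the main term, leaving the constant $\tfrac{1}{4L_{\max}}$. The paper compresses all of this into one sentence (``the residual error contributes at most $2L_{\max}P\varepsilon_{\mathrm{inner}}^{2}\le\tfrac14\|\nabla C\|^{2}$''); your version is the same argument spelled out, including a point implicit in both treatments: the budget comparison only makes sense while $\|\nabla C(\theta)\|\ge\varepsilon$, since $\varepsilon_{\mathrm{inner}}$ is calibrated to $\varepsilon$ rather than to the current gradient norm. (This conditioning is harmless where the lemma is invoked, in Theorem~\ref{thm:polytime}, but should be stated; likewise, you rightly retain the $\|\theta_l\|^{2}$ factor from the relative tolerance in \ref{asm:inner}, which the paper's error bound silently absorbs.)

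The one place where you go beyond the paper is also where your argument breaks: the Gauss--Seidel drift. You correctly observe that Lemma~\ref{lem:ncvx-desc} yields descent in terms of $\nabla_{\!l}C$ at the \emph{intermediate} iterate $\theta^{(t,l-1)}$, while the target inequality refers to $\nabla C(\theta)$ at the sweep's start, and that the identity $\sum_l\|\nabla_{\!l}C\|^{2}=\|\nabla C\|^{2}$ is only valid when all block gradients are evaluated at a common point. But your proposed fix---that the drift is ``itself $O(\varepsilon_{\mathrm{inner}})$-small''---is false. By (B2) the drift of $\nabla_{\!l}C$ is bounded by $L_l$ times the displacement of the previously updated blocks, $\sum_{j<l}\|\theta_j^{+}-\theta_j\|$, and this displacement is not controlled by the inner tolerance at all: it is exactly the progress the sweep makes, which can be large far from stationarity. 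The standard repair is different: the smoothness inequality in the proof of Lemma~\ref{lem:ncvx-desc} gives $\|\theta_j^{+}-\theta_j\|^{2}\le\tfrac{2}{L_j}\,(\text{per-block descent})$, so the drift is itself bounded by the accumulated descent of the sweep; folding this in yields a per-sweep inequality of the form $(\text{descent})\ge c\,\|\nabla C(\theta)\|^{2}$, but with a constant $c$ that degrades with the number of blocks and the ratios of the $L_l$---not the clean $\tfrac{1}{4L_{\max}}$ (nor, for that matter, the $\tfrac{1}{2\bar L}$ of Lemma~\ref{lem:ncvx-sweep}, whose proof silently suffers the same mismatch). So your diagnosis of the delicate step is sharper than the paper's one-line proof, but the single error budget you set up cannot absorb the drift as claimed; either the constant must be allowed to worsen, or the descent bound must be restated in terms of gradients at the intermediate iterates, which still suffices for the telescoping argument in Theorem~\ref{thm:polytime}.
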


\begin{proof}
Combine the descent bound of Lemma \ref{lem:ncvx-desc} with
\(\varepsilon_{\mathrm{inner}}\)-accurate gradients:
the residual error contributes at most
\(2L_{\max}P\varepsilon_{\mathrm{inner}}^{2}\le\tfrac14
 \|\nabla C\|^{2}\).
Cost follows from \ref{asm:size}–\ref{asm:inner}.
\end{proof}

\begin{theorem}[PC reaches $\|\nabla C\|\!\le\!\varepsilon$ in poly-time]
\label{thm:polytime}
Run sweeps as in Lemma \ref{lem:sweep-poly}.
Then after
\[
   T
   \;=\;
   \Bigl\lceil
     \frac{4L_{\max}\,\Delta_0}{\varepsilon^{2}}
   \Bigr\rceil
\]
sweeps we obtain
\(
  \min_{0\le t<T}\|\nabla C(\theta^{(t)})\|\le\varepsilon
\)
with total arithmetic complexity
\[
   T_{\mathrm{tot}}
   \;=\;
   T\,T_{\mathrm{sweep}}
   \;=\;
   \mathrm{poly}\!\bigl(P,\tfrac1{\varepsilon}\bigr).
\]
\end{theorem}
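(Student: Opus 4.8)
The plan is to combine the per-sweep descent guarantee from Lemma~\ref{lem:sweep-poly} with the lower-boundedness assumption~\ref{asm:init} via a telescoping argument, exactly mirroring the proof of Theorem~\ref{thm:pc-noncvx} but now tracking the polynomial cost accounting. First I would invoke the second conclusion of Lemma~\ref{lem:sweep-poly}, which states that each practical sweep satisfies $C(\theta^{(t)})-C(\theta^{(t+1)})\ge \tfrac{1}{4L_{\max}}\|\nabla C(\theta^{(t)})\|^{2}$. Summing this inequality over $t=0,\dots,T-1$ telescopes the left-hand side to $C(\theta^{(0)})-C(\theta^{(T)})$, which by~\ref{asm:init} (and $C(\theta^{(T)})\ge C_{\inf}$) is bounded above by $\Delta_0$. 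This yields $\tfrac{1}{4L_{\max}}\sum_{t=0}^{T-1}\|\nabla C(\theta^{(t)})\|^{2}\le\Delta_0$.

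Next I would pass to the best iterate: since the minimum of the squared gradient norms over $t<T$ is no larger than their average, we obtain $\min_{0\le t<T}\|\nabla C(\theta^{(t)})\|^{2}\le \tfrac{4L_{\max}\Delta_0}{T}$. Substituting the prescribed $T=\lceil 4L_{\max}\Delta_0/\varepsilon^2\rceil$ makes the right-hand side at most $\varepsilon^2$, so the best-iterate gradient norm is at most $\varepsilon$, establishing the first claim. The total complexity then follows by multiplying the number of sweeps $T$ by the per-sweep cost $T_{\mathrm{sweep}}$: since $L_{\max}=\mathrm{poly}(P)$ and $\Delta_0=\mathrm{poly}(P)$ by~\ref{asm:smooth} and~\ref{asm:init}, the factor $T$ is $\mathrm{poly}(P,1/\varepsilon)$, and $T_{\mathrm{sweep}}=\mathrm{poly}(P,\log\tfrac1\varepsilon)$ by Lemma~\ref{lem:sweep-poly}; their product remains $\mathrm{poly}(P,1/\varepsilon)$.

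The main subtlety I anticipate is ensuring that the \emph{inexactness} of the inner layer solves does not corrupt the clean descent inequality. The descent bound in Lemma~\ref{lem:sweep-poly} already absorbs this by choosing $\varepsilon_{\mathrm{inner}}\le \varepsilon/(2L_{\max}P^{1/2})$ so that the residual gradient error contributes at most $\tfrac14\|\nabla C\|^2$, degrading the idealized constant $\tfrac{1}{2L_{\max}}$ from Lemma~\ref{lem:ncvx-desc} to $\tfrac{1}{4L_{\max}}$; so as long as I cite Lemma~\ref{lem:sweep-poly} rather than the exact-minimization Lemma~\ref{lem:ncvx-sweep}, the telescoping goes through verbatim. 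A second bookkeeping point is that the tolerance $\varepsilon_{\mathrm{inner}}$ depends on $\varepsilon$ only through $\log(1/\varepsilon_{\mathrm{inner}})=\mathcal O(\log(1/\varepsilon)+\log P)$ in assumption~\ref{asm:inner}, which is why the per-sweep cost carries only a benign logarithmic dependence on $1/\varepsilon$ and the overall polynomial bound is dominated by the $1/\varepsilon^2$ factor in $T$. Thus the essential work is the telescope; everything else is substituting the polynomial bounds supplied by the assumptions.
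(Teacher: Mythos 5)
Your proof is correct and matches the paper's argument exactly: both telescope the per-sweep descent bound of Lemma~\ref{lem:sweep-poly} against the initial gap $\Delta_0$, pass to the best iterate to get $\min_{t<T}\|\nabla C(\theta^{(t)})\|^{2}\le 4L_{\max}\Delta_0/T\le\varepsilon^{2}$, and multiply $T$ by $T_{\mathrm{sweep}}$ for the complexity claim. Your extra remarks on how the inexact inner solves degrade the constant from $\tfrac{1}{2L_{\max}}$ to $\tfrac{1}{4L_{\max}}$ and on the logarithmic dependence of $T_{\mathrm{sweep}}$ on $1/\varepsilon$ are sound and simply make explicit what the paper leaves implicit.
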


\begin{proof}
Sum the per-sweep descent of Lemma \ref{lem:sweep-poly} over $T$ sweeps:
\( \Delta_0
   \ge\sum_{t=0}^{T-1}
      \tfrac1{4L_{\max}}
      \|\nabla C(\theta^{(t)})\|^{2}.
\)
At least one term must then satisfy
\( \|\nabla C(\theta^{(t)})\|^{2}
   \le 4L_{\max}\Delta_0/T
   \le\varepsilon^{2}.
\)
Plugging \(T\) and $T_{\mathrm{sweep}}$ gives the complexity bound.
\end{proof}

\begin{corollary}[Faster poly-time under Polyak–Łojasiewicz]
\label{cor:poly-pl}
If, in addition, the PL condition
\(
   \tfrac{\sigma_{\mathrm{PL}}}{2}\|\nabla C\|^{2}
   \ge C(\theta)-C_{\inf}
\)
holds with
\(\sigma_{\mathrm{PL}}=\Omega\!\bigl(P^{-k}\bigr)\) for some constant
\(k\ge0\) (empirically true for wide ReLU nets),
then sweeps satisfy
\(
   C(\theta^{(t)})-C_{\inf}
   \le
   (1-\tfrac{\sigma_{\mathrm{PL}}}{4L_{\max}})^{t}\Delta_0.
\)
Hence
\(
   T=\mathcal O\!\bigl(
      L_{\max}\sigma_{\mathrm{PL}}^{-1}\log\tfrac{\Delta_0}{\varepsilon^{2}}
   \bigr)
\)
sweeps—still polynomial in \(P\) and \(\log(1/\varepsilon)\)—suffice for
\(\|\nabla C\|\le\varepsilon\).
\end{corollary}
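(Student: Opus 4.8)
The plan is to upgrade the additive, best-iterate $O(1/T)$ guarantee of Theorem~\ref{thm:polytime} into a multiplicative geometric contraction by feeding the Polyak--Łojasiewicz hypothesis into the per-sweep sufficient-decrease bound. Both ingredients are already available. Lemma~\ref{lem:sweep-poly} gives, for each practical sweep, the one-step decrease
\[
  C(\theta^{(t)})-C(\theta^{(t+1)})\;\ge\;\frac{1}{4L_{\max}}\,\bigl\|\nabla C(\theta^{(t)})\bigr\|^{2},
\]
while the PL hypothesis provides the reverse control $\|\nabla C(\theta^{(t)})\|^{2}\ge \sigma_{\mathrm{PL}}^{-1}\bigl(C(\theta^{(t)})-C_{\inf}\bigr)$ (up to the normalization fixed by the stated inequality). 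Writing $a_t:=C(\theta^{(t)})-C_{\inf}\ge0$, the first step is simply to substitute the PL lower bound on $\|\nabla C\|^{2}$ into the descent bound, turning an additive decrease into a fractional one.

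Carrying out that substitution yields $a_t-a_{t+1}\ge \tfrac{\sigma_{\mathrm{PL}}}{4L_{\max}}\,a_t$, i.e.\ the recursion $a_{t+1}\le\bigl(1-\tfrac{\sigma_{\mathrm{PL}}}{4L_{\max}}\bigr)a_t$, where the precise constant is dictated by the PL normalization. Unrolling this recursion over $t$ sweeps and using $a_0\le\Delta_0$ gives the claimed geometric bound $C(\theta^{(t)})-C_{\inf}\le\bigl(1-\tfrac{\sigma_{\mathrm{PL}}}{4L_{\max}}\bigr)^{t}\Delta_0$; applying $1-x\le e^{-x}$ then converts it to the exponential form $a_t\le \exp\!\bigl(-\sigma_{\mathrm{PL}}t/(4L_{\max})\bigr)\Delta_0$ that is convenient for solving for the iteration count.

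Next I would translate this function-value convergence into the target first-order criterion $\|\nabla C\|\le\varepsilon$. The key is to re-invoke Lemma~\ref{lem:sweep-poly}, now read as an \emph{upper} bound, $\|\nabla C(\theta^{(t)})\|^{2}\le 4L_{\max}\bigl(C(\theta^{(t)})-C(\theta^{(t+1)})\bigr)\le 4L_{\max}\,a_t$, so that a gap $a_t\le\varepsilon^{2}/(4L_{\max})$ already certifies $\|\nabla C(\theta^{(t)})\|\le\varepsilon$. Solving $\exp\!\bigl(-\sigma_{\mathrm{PL}}T/(4L_{\max})\bigr)\Delta_0\le\varepsilon^{2}/(4L_{\max})$ for $T$ produces $T=\mathcal O\!\bigl(L_{\max}\sigma_{\mathrm{PL}}^{-1}\log(\Delta_0/\varepsilon^2)\bigr)$, the extra $L_{\max}$ inside the logarithm being absorbed into the $\mathcal O(\cdot)$ since $L_{\max}=\mathrm{poly}(P)$. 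Finally, substituting the scaling assumptions $\sigma_{\mathrm{PL}}=\Omega(P^{-k})$ and $L_{\max}=\mathrm{poly}(P)$ and multiplying $T$ by the per-sweep cost $T_{\mathrm{sweep}}=\mathrm{poly}(P,\log(1/\varepsilon))$ from Lemma~\ref{lem:sweep-poly} yields total arithmetic complexity polynomial in $P$ and $\log(1/\varepsilon)$.

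I expect the only non-mechanical point to be the bridge between the two ways Lemma~\ref{lem:sweep-poly} is used: the \emph{lower} bound on the one-step decrease (via $\|\nabla C\|^{2}$) drives the contraction, whereas the \emph{upper} bound on $\|\nabla C\|^{2}$ (via the one-step decrease) is what converts a small function-value gap into a small gradient---these are mirror images and must be invoked in the correct direction. Secondary care is needed to ensure the PL inequality is applied along the entire iterate trajectory (it is assumed globally) and that the inner-solve tolerance $\varepsilon_{\mathrm{inner}}$ chosen in Lemma~\ref{lem:sweep-poly} remains valid as $a_t$ shrinks, so that the $\tfrac{1}{4L_{\max}}$ descent constant is preserved at every sweep.
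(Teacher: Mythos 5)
Your proposal is correct and follows essentially the same route as the paper's one-line proof: substitute the PL inequality into the per-sweep descent bound of Lemma~\ref{lem:sweep-poly} to obtain the recursion $a_{t+1}\le\bigl(1-\tfrac{\sigma_{\mathrm{PL}}}{4L_{\max}}\bigr)a_t$, unroll the geometric series, and solve for $T$. Your two added details---reading Lemma~\ref{lem:sweep-poly} in reverse to get $\|\nabla C(\theta^{(t)})\|^{2}\le 4L_{\max}\,a_t$ (so a small function-value gap certifies $\|\nabla C\|\le\varepsilon$, with the extra $\log L_{\max}$ absorbed since $L_{\max}=\mathrm{poly}(P)$), and flagging that the stated PL inequality literally reads $\|\nabla C\|^{2}\ge(2/\sigma_{\mathrm{PL}})(C-C_{\inf})$, which would invert the role of $\sigma_{\mathrm{PL}}$ in the contraction factor---merely make explicit steps the paper leaves implicit, the latter being a normalization slip in the paper's own statement rather than a flaw in your argument.
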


\begin{proof}
Replace the \(\tfrac1{4L_{\max}}\|\nabla C\|^{2}\) bound in
Lemma \ref{lem:sweep-poly} by
\(\tfrac{\sigma_{\mathrm{PL}}}{2L_{\max}}
  (C(\theta)-C_{\inf})\)
via PL; induct and solve the geometric series.
\end{proof}

\paragraph{Discussion.}
\begin{itemize}[label=--,itemsep=0pt,topsep=2pt]
\item \textbf{No contradiction with NP-hardness.}\;
      Theorem~\ref{thm:polytime} and Corollary~\ref{cor:poly-pl} guarantee
      polynomial time to an $\varepsilon$-\emph{stationary point},
      \emph{not} to the global MDL optimum.
      This sidesteps the NP‐hardness of exact MDL inference while
      preserving PAC-Bayes guarantees through the monotone bound descent.
\item \textbf{Inner accuracy choice.}\;
      Setting
      \(\varepsilon_{\mathrm{inner}}
        =\mathcal O\!\bigl(\varepsilon/P^{1/2}\bigr)\)
      keeps total cost polynomial and ensures the
      \(O(1/T)\) descent constants remain $\varepsilon$-independent.
\item \textbf{Empirical PL regime.}\;
      Wide-layer practice often yields
      \(\sigma_{\mathrm{PL}}=\Theta(1)\)
      \cite{allen2019convergence}, making the logarithmic-time
      bound in Corollary \ref{cor:poly-pl} achievable in tens of sweeps.
\end{itemize}

\section{Broader Impact} \label{sec:broader-impact}

This work contributes to the theoretical foundations of biologically inspired learning algorithms by providing formal generalization guarantees for PC. By connecting PC to the MDL principle, we offer an alternative framework for model training that is energy-efficient, compression-driven, and compatible with neuromorphic implementations. This has potential implications for the design of low-power, edge-deployable AI systems—especially in settings where interpretability, robustness, and continual learning are critical. However, as with any powerful learning algorithm, care must be taken to ensure that compression objectives do not inadvertently discard meaningful minority patterns or introduce biases in underrepresented subpopulations. We encourage future work to investigate fairness-aware variants of PC and to apply MDL-based diagnostics to measure representational collapse or loss of semantic fidelity during compression. In general, this work aims to support the development of more principled, interpretable and sustainable learning systems that align with both computational efficiency and human-aligned decision-making.

\end{document}